\documentclass{article}

\usepackage[letterpaper,top=2cm,bottom=2cm,left=3cm,right=3cm,marginparwidth=1.75cm]{geometry}
\usepackage{natbib}
\usepackage{parskip}
\usepackage{subfigure}
\usepackage[utf8]{inputenc} % allow utf-8 input
\usepackage[T1]{fontenc}    % use 8-bit T1 fonts
\usepackage[colorlinks]{hyperref}       % hyperlinks
\usepackage{url}            % simple URL typesetting
\usepackage{booktabs}       % professional-quality tables
\usepackage{amsfonts}       % blackboard math symbols
\usepackage{nicefrac}       % compact symbols for 1/2, etc.
\usepackage{microtype}      % microtypography
\usepackage[dvipsnames, svgnames, x11names]{xcolor}         % colors

\hypersetup{
    citecolor = DeepSkyBlue3
}

\usepackage{booktabs}
\usepackage{multirow}
\usepackage{algorithm,algorithmic}
\usepackage{times}

\usepackage{amsmath,amsthm,amsfonts,bm}
\usepackage{hyperref}
\usepackage{url}
\usepackage[capitalize,noabbrev]{cleveref}
\usepackage{apptools}
\usepackage{enumitem}
\usepackage{graphicx,subfigure}
\usepackage{wrapfig,caption}
\usepackage{pifont}
\usepackage[T1]{fontenc}
\usepackage{fancybox,colortbl}
\usepackage{tablefootnote}

\AtAppendix{\counterwithin{lem}{section}}
\AtAppendix{%
  \crefalias{section}{appendix}
  \crefalias{subsection}{appendix}
  \crefalias{subsubsection}{appendix}
}

\newtheorem{thm}{Theorem}
\newtheorem{prop}{Proposition}
\newtheorem{lem}{Lemma}
\newtheorem{cor}[thm]{Corollary}
\newtheorem{ass}{Assumption}
\newtheorem{defn}{Definition}
\newtheorem{remark}{Remark}

\def \y {\mathbf{y}}

\def \E {\mathbb{E}}
\def \x {\mathbf{x}}

\def \v {\mathbf{v}}

\def \g {\mathbf{g}}

\def \O {\mathcal{O}}

\def \z {\mathbf{z}}

\def \u {\mathbf{u}}

\def \w {\mathbf{w}}

\def \r {\mathbf{r}}
\def \R {\mathbb{R}}

\def \uh {\widehat{\u}}
\def \P {\mathcal{P}}

\def \F {\mathcal{F}}

\def \A {\mathcal{A}}
\def \Ab {\mathbf{A}}

\let\oldc\c
\renewcommand{\c}{\ifmmode\mathbf{c}\else\expandafter\oldc\fi}

\def \diff {\mathrm{d}}

\def \xt {\widetilde{\x}}

\def \SS {\mathcal{S}}

\def \Gb {\mathbf{G}}

\def \Lb {\mathbf{L}}

\def \m {\mathbf{m}}

\def \xb {\overline{\x}}

\def \wt {\widetilde{\w}}

\def \n {\mathbf{n}}

\def \wb {\overline{\w}}

\def \e {\mathbf{e}}
\def \EB {\mathbf{E}}

\def \X {\mathcal{X}}

\def \N {\mathbb{N}}

\newcommand{\inner}[2]{\left\langle #1, #2 \right\rangle}

\newcommand \indicator[1]{\mathbb{I}[#1]}

\DeclareMathOperator*{\argmin}{argmin}
\DeclareMathOperator*{\argmax}{argmax}
\DeclareMathOperator*{\sgn}{sign}

\DeclareMathOperator*{\unif}{Unif}
\DeclareMathOperator*{\expdist}{Exp}

\DeclareMathOperator*{\supp}{supp}

\newcommand{\norm}[1]{\left\Vert#1\right\Vert}

\newcommand{\abs}[1]{\left|#1\right|}

\newcommand{\diag}[1]{\textrm{diag}\left(#1\right)}
\newcommand{\sign}[1]{\sgn\left(#1\right)}

\def \Pr    {\mathbb{P}}
\def \eps {\bm{\epsilon}}
\def \bDelta {\bm{\Delta}}
\def \bsigma {\bm{\sigma}}

\newcommand{\sqbrac}[1]{\left[#1\right]}
\newcommand{\brac}[1]{\left(#1\right)}
\newcommand{\cbrac}[1]{\left\{#1\right\}}

\crefname{ass}{Assumption}{Assumptions}
\crefname{defn}{Definition}{Definitions}
\crefname{lem}{Lemma}{Lemmas}
\crefname{thm}{Theorem}{Theorems}
\crefname{appendix}{Appendix}{Appendices}
\crefname{subappendix}{Appendix}{Appendices}
\crefname{subsubappendix}{Appendix}{Appendices}
\crefname{subsubsubappendix}{Appendix}{Appendices}
\def\ceil#1{\left\lceil #1 \right\rceil}

\def\1{\bm{1}}
\def\0{\bm{0}}

\usepackage{etoolbox}

\allowdisplaybreaks[4]
\title{\textbf{StoSignSGD: Unbiased Structural Stochasticity Fixes SignSGD for Training Large Language Models}}

\author{
  Dingzhi Yu\textsuperscript{1,*} \quad
  Rui Pan\textsuperscript{1,*} \quad
  Yuxing Liu\textsuperscript{1,*} \quad
  Difan Zou\textsuperscript{2} \quad
  Tong Zhang\textsuperscript{1} \\
  \textsuperscript{1}University of Illinois Urbana-Champaign \\
  \textsuperscript{2}The University of Hong Kong \\
  \normalsize\texttt{dingzhiyu01@gmail.com, \{ruip4,yuxing6,tozhang\}@illinois.edu, dzou@cs.hku.hk}\\
  \textsuperscript{*}Equal Contribution
}

\date{}

\begin{document}

\maketitle

\begin{abstract}
    Sign-based optimization algorithms, such as SignSGD, have garnered significant attention for their remarkable performance in distributed learning and training large foundation models. Despite their empirical superiority, SignSGD is known to diverge on non-smooth objectives, which are ubiquitous in modern machine learning due to ReLUs, max-pools, and mixture-of-experts. To overcome this fundamental limitation, we propose \textbf{StoSignSGD}, an algorithm that injects structural stochasticity into the sign operator while maintaining an unbiased update step. In the regime of (online) convex optimization, our theoretical analysis shows that StoSignSGD rigorously resolves the non-convergence issues of SignSGD, achieving a sharp convergence rate matching the lower bound. For the more challenging non-convex non-smooth optimization, we introduce generalized stationary measures that encompass prior definitions, proving that StoSignSGD improves upon the best-known complexity bounds by dimensional factors. Empirically, StoSignSGD is stable and efficient across diverse large language model (LLM) training regimes. In aggressive low-precision pretraining, which spans both FP8 and the far more demanding FP4 regime where AdamW fails catastrophically, StoSignSGD stays stable and consistently performs the best. It attains a \textbf{1.44$\times$ to 2.14$\times$ speedup} over established baselines under FP8. Under 4-bit precision, it improves downstream accuracy on the largest OLMo2-370M model by \textbf{1.13} points over the strongest stable baseline, and this advantage grows as both the model size and the data scale up. When fine-tuning 7B LLMs on mathematical reasoning tasks, StoSignSGD also delivers clear gains over both AdamW and SignSGD. Finally, to explain why it works, we develop a sign conversion framework that turns any general optimizer into its unbiased, sign-based counterpart. Using this framework, we decompose the core components of StoSignSGD and run a comprehensive ablation study to validate our design choices.  
\end{abstract}

\section{Introduction}

The rapid development of large foundation models~\citep{brown2020gpt3,touvron2023llama,touvron2023llama2,achiam2023gpt4,team2023gemini,grattafiori2024llama,guo2025deepseek} relies heavily on efficient and scalable optimization algorithms, in which AdamW~\citep{kingma15adam,loshchilov2019adamw} has been prevailing for years. However, as large language models (LLMs) continue to scale, they are inevitably deployed in distributed training environments where the substantial memory footprint and communication overhead of AdamW become major bottlenecks~\citep{shoeybi2019megatron,rajbhandari2020zero,ren2021zero,tang2021_1bitadam,li2022_1bitlamb}. Meanwhile, low-precision training has become an important industrial paradigm for reducing training cost and improving hardware efficiency~\citep{micikevicius2018mixed,liu2024deepseek,yang2025qwen3}, where modern LLMs are increasingly trained in FP8, FP4, or even INT4 numerical formats~\citep{micikevicius2022fp8,peng2023fp8,xi2023int4,ICLR2025fp8scaling,castro2025quartet_fp4,abecassis2025pretraining_fp4,zeng2026glm,xu2026deepseekv4,kimiteam2026kimik3}. Among these, 4-bit formats such as FP4/NVFP4 sit at the current frontier of low-precision training. Since they are natively supported by the latest accelerators, they roughly halve the memory and bandwidth cost of FP8 and promise the largest end-to-end training speedups so far~\citep{chen2025tetrajetv1,castro2025quartet_fp4,abecassis2025pretraining_fp4,chen2026tetrajetv2}, yet they leave almost no numerical headroom and are the least forgiving. In these highly constrained settings, vanilla AdamW is very vulnerable to numerical underflow and instability, and often diverges during the early stage of training~\citep{wortsman2023stable,lee2024fp8,qiu2026why}. These challenges motivate the search for alternative optimizers that are both communication-efficient and numerically robust. 

Sign-based optimizers offer a compelling direction toward this goal. A representative example is SignSGD~\citep{bernstein2018signsgd}, which updates model parameters according to
$\x_{t+1}=\x_t-\eta_t\sign{\g_t}$. By relying merely on gradient signs, sign-based methods compress each coordinate to 1 bit, making them naturally appealing for distributed learning~\citep{bernstein2019signsgd,liu2024dlion}. Moreover, because they discard gradient magnitudes, they are intrinsically less sensitive to limited numerical precision~\citep{li2023qft}. These advantages have led to promising empirical results in LLM training and related large-scale settings~\citep{chen2023symbolic,zhaoICLR2025deconstructing,yuan25mars,narayan2025fp8unit,schlotthauer2025pre,semenov2025benchmarking}. Despite these practical advantages, a critical gap remains between empirical success and theoretical guarantees: \emph{SignSGD fails to converge even on simple (convex) non-smooth objective functions due to its intrinsically biased compression scheme}~\citep{karimireddy2019error}. Given that modern neural architectures critically rely on non-smooth components such as ReLUs~\citep{nair2010relu}, max-pooling~\citep{goodfellow2016deep}, mixture-of-experts~\citep{shazeer2017moe}, and gating~\citep{JMLR2022switchtransformer}, this lack of convergence poses a severe risk to the broader applicability of SignSGD. Furthermore, the biased nature of the gradient updates intrinsically limits the algorithm's final precision, convergence speed, and overall generalization capabilities~\citep{karimireddy2019error,ajalloeian2020bias,pan2025unbiased}.

To rigorously resolve the non-convergence issue of SignSGD, we propose \textbf{StoSignSGD}, a stochastic sign-based optimization algorithm that injects structural stochasticity into the sign operator so as to preserve an unbiased gradient step in expectation. The key idea is to replace the deterministic sign operator with a stochastic counterpart whose randomness adapts to the optimization trajectory through the tracking of maximum historical gradients. This design allows StoSignSGD to behave like a preconditioned SGD method in expectation while preserving the communication and numerical benefits of sign-based updates. Theoretically, we show that StoSignSGD resolves the divergence issue of SignSGD for convex objectives and enjoys tight convergence guarantees with matching lower bounds. In non-convex settings, we introduce refined stationary measures capable of accommodating diverse problem geometries via arbitrary norms. We further prove that StoSignSGD attains a sharp complexity that does not explicitly depend on the problem dimension $d$, outperforming previous state-of-the-art by a factor of $\sqrt{d}$. Beyond these theoretical advancements, we also demonstrate the empirical effectiveness of StoSignSGD in challenging LLM training regimes. We focus on pretraining LLMs in FP8 and FP4 low-precision formats, where AdamW and Muon fail catastrophically. While sign-based methods remain robust under such low-precision formats, StoSignSGD further improves training efficiency. Under FP8 it reaches the same validation loss with \textbf{30\%--53\%} fewer tokens than competitive baselines, giving a \textbf{1.44$\times$ to 2.14$\times$ speedup}. On the far more aggressive FP4 frontier, StoSignSGD trains stably across OLMo2 models up to \textbf{370M} parameters, where AdamW and Muon diverge to NaN at every scale, and it consistently outperforms the strong Lion baseline on downstream benchmarks. It improves average accuracy by \textbf{0.45}, \textbf{0.44}, and \textbf{1.13} points on the 136M, 267M, and 474M models trained on \textbf{4.19B}, \textbf{9.23B}, and \textbf{20.97B} tokens, respectively. The gain becomes much larger at the biggest scale, more than doubling that of the smaller models, which suggests that the benefit of StoSignSGD's numerical robustness is strongest when the precision budget is tightest and training is scaled up. We also evaluate StoSignSGD on 7B-scale LLM finetuning for mathematical reasoning, where it consistently delivers a non-trivial \textbf{3\%--5\%} accuracy improvement. To understand the source of these gains, we develop a general \emph{sign conversion framework} that transforms a standard optimizer into its unbiased sign-based counterpart. This framework offers a unified way to analyze sign-based optimization beyond StoSignSGD itself. From this view, we isolate the essential ingredients of our method and conduct comprehensive ablation studies to assess the necessity of the proposed structural stochasticity in the sign operator.

Our main contributions can be summarized as follows.
\begin{itemize}[itemsep=0.00em, topsep=0.00em]
    \item We devise a stochastic sign operator with unbiased structural stochasticity and propose StoSignSGD, which provably fixes the non-convergence issue of SignSGD, and tightens best-known complexities in convex and non-convex settings, respectively. 
    \item Extensive LLM pretraining and finetuning experiments validate the practical efficiency of StoSignSGD. Our method delivers a \textbf{1.44$\times$ to 2.14$\times$ speedup} in FP8 pretraining, improves FP4 downstream accuracy over Lion by \textbf{0.45}/\textbf{0.44}/\textbf{1.13} points on OLMo2-70M/150M/370M with the largest gain at the biggest model (while AdamW diverges), and yields a \textbf{3\%--5\%} improvement on math reasoning tasks.
    \item We propose an unbiased sign conversion framework that converts a general optimizer into a sign-based optimizer. Using this framework, we analyze the design of StoSignSGD and run an ablation study to justify the necessity of our proposed sign-based unbiased structural stochasticity.
\end{itemize}
Due to space limitations, we defer the comprehensive literature review to \cref{sec:related-work}.

\section{Our Algorithms and Theoretical Results}

\subsection{Preliminaries} 

\paragraph{Notations} We write $\1_d$ for all-ones vector in $\R^d$, $[T]$ for $\{1,2,\dots,T\}$, $\odot$ for the element-wise product, and $\max\{\x,\y\}$ for the element-wise maximum of two vectors. We denote the $q$-norm ball centered at $\x$ with radius $\delta$ by $B_q(\x,\delta):=\{\y\in\R^d|\norm{\y-\x}_q\le\delta\}$. For $\Gb\in\R_{+}^{d}$, let $\norm{\x}_{\Gb}^{2}:=\x^{\top}\diag{\Gb}\x$ and define the $\Gb$-weighted projection onto the convex set $\X$ as $\Pi_\X^\Gb[\x]=\argmin_{\x^\prime\in\X}\norm{\x^\prime-\x}_\Gb^2$, which is a convex quadratic program efficiently solvable~\citep[\S~4]{hazan2007logarithmic}. Throughout, we assume access to a stochastic first-order oracle $\O$: queried at $\x_t\in\X$, it returns $\g_t$ satisfying $\E_{\O_t}[\g_t]\in\partial f(\x_t)$. For $\cbrac{\g_t}_{t\in[T]}$ where $\g_t=(g_{t,1},\cdots,g_{t,d})^\top$, we define $\g_{1:t,i}=(g_{1,i},\cdots,g_{t,i})^\top\in\R^t$.

\paragraph{Problem setup and assumptions} We investigate the following stochastic non-smooth optimization problem: $\min_{\x\in\R^d} f(\x)$, where $f:\R^d\mapsto\R$ is a possibly non-smooth objective function, which we can only access in a noisy manner. A conspicuous example in machine learning is $f(\x)=\E\sqbrac{\ell(\x;\xi)}$, where $\x$ serves as model weights, $\xi$ represents a mini-batch of samples, and $\ell(\x;\xi)$ is the loss function. For theoretical analysis, we employ the following assumptions.
\begin{ass}[Convexity]\label{ass:convexity}
    The objective function $f$ is convex. For convex settings, we consider solutions within a non-empty closed convex set \( \X \subseteq \R^d \), where at least one optimal solution \( \x_* \in \X \) exists, and the set $\X\subseteq\R^d$ is bounded in the sense that $\sup_{\x,\x^\prime\in\X}\norm{\x-\x^\prime}_\infty\le D_\infty$. 
\end{ass}

\begin{ass}[Non-convexity]\label{ass:non-convexity}
    The objective function $f$ is non-convex, and bounded from below: $f^*:=\inf_{\x\in\R^d} f(\x)>-\infty$.
\end{ass}
\begin{ass}[Coordinate-wise Lipschitzness]\label{ass:coord-lipschitz}
    There exists a non-negative vector $\Lb=(L_1,\cdots,\allowbreak L_d)^\top\in\R^d_+$ such that $\E\sqbrac{\norm{\g_{1:T,i}}_\infty^2}\le L_i^2$ holds for all $i\in[d]$. 
\end{ass}

\begin{ass}[Well-behavedness]\label{ass:well-behavedness}
    $f$ is differentiable and for all $\x^\prime,\x\in\X$, it holds that $f(\x^\prime)-f(\x)=\int_0^1\inner{\nabla f(\x+t(\x^\prime-\x))}{\x^\prime-\x}\diff{t}$.

\end{ass}

\cref{ass:convexity,ass:non-convexity} are standard for convex and non-convex objectives~\citep{duchi2011adaptive,arjevani2023lower,liu2025adagrad}. \cref{ass:coord-lipschitz} is introduced to bound $\norm{\g_{1:T,i}}_\infty$ and is indispensable for our non-smooth analysis. While slightly stronger than the coordinate-wise second-moment bound $\E[g_{t,i}^2]\le L_i^2$ used in~\citet{cutkosky2023optimal}, it is strictly weaker than the almost-sure uniform gradient bound imposed in~\citet{chzhen2023signsvrg,sun2023momentum,NeurIPS:2024:Jiang,jiang2025improved}. As we will see in \cref{thm:oco-bound}, \cref{ass:coord-lipschitz} is also necessary to ensure that StoSignSGD is a no-regret learning algorithm. \cref{ass:well-behavedness} serves as a mild regularity condition widely adopted for non-convex, non-smooth problems~\citep{cutkosky2023optimal,zhang2024random,ICML:2024:Liu,ahn2024adamema,ahn2025general,jiangSTOC2025improved,liu2026old}.

\subsection{Unbiased Stochastic Sign Operator}

We now formally introduce the coordinate-wise stochastic sign operator.
\begin{defn}[Stochastic Sign Operator]\label{def:stochastic-sign}
    For any $\x=(x_1,\cdots,x_d)^\top\in\R^d$ and $\Gb=(G_1,\cdots,\allowbreak G_d)^\top\in\R^d_+$ satisfying $\abs{x_i}\le G_i$, define
    \begin{align*}
        \SS_\Gb(\x)=\sgn\brac{\x+\Gb\odot\n},\textnormal{ where }\n\sim\unif\brac{[-1,1]^d}.
    \end{align*}
\end{defn}

We inject coordinate-wise uniformly distributed noise into the sign operator. This deliberate corruption is key to preventing the divergence observed in vanilla SignSGD.

\begin{prop}\label{prop:unbiased-sign}
    For any $\Gb\succeq\x$, it holds that $\E_\SS\sqbrac{\SS_\Gb(\x)}=\x/\Gb=(x_1/G_1,\cdots,x_d/G_d)$.
\end{prop}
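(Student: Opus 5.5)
The plan is a direct coordinate-wise computation. Since $\n\sim\unif([-1,1]^d)$ has independent coordinates and the sign operator acts elementwise, the $i$-th entry of $\SS_\Gb(\x)$ is $\sgn(x_i+G_i n_i)$, which depends only on $n_i\sim\unif([-1,1])$. By linearity of expectation it therefore suffices to show that $\E[\sgn(x_i+G_i n_i)]=x_i/G_i$ for each fixed $i$, assuming $\abs{x_i}\le G_i$ as in \cref{def:stochastic-sign}. The hypothesis ``$\Gb\succeq\x$'' in \cref{prop:unbiased-sign} should be read in this sense, i.e.\ $G_i\ge\abs{x_i}$.

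For a coordinate with $G_i>0$, write
\[
\E[\sgn(x_i+G_i n_i)] = \Pr(n_i>-x_i/G_i)-\Pr(n_i<-x_i/G_i).
\]
The event $n_i=-x_i/G_i$ has probability zero, so the value of $\sgn(0)$ is irrelevant. Because $\abs{x_i/G_i}\le 1$, the threshold $-x_i/G_i$ lies in $[-1,1]$. The uniform density $1/2$ on $[-1,1]$ then gives
\[
\Pr(n_i>-x_i/G_i)=\frac{1+x_i/G_i}{2},\qquad \Pr(n_i<-x_i/G_i)=\frac{1-x_i/G_i}{2}.
\]
Their difference is $x_i/G_i$, as required.

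The only real obstacle is the role of the constraint $\abs{x_i}\le G_i$ and the degenerate case. The constraint is exactly what keeps the threshold inside the support of $n_i$. Without it the probabilities would clip at $0$ or $1$, and the expectation would saturate at $\pm1$ instead of equalling $x_i/G_i$.

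If $G_i=0$, the constraint forces $x_i=0$, so both sides are $0/0$. I would handle this by the convention $0/0:=0$ together with $\sgn(0)=0$, or simply note that the statement is meant for $\Gb\succ 0$ entrywise. Assembling the coordinates then gives $\E_\SS[\SS_\Gb(\x)]=\x/\Gb$.
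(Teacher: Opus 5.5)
Your proposal is correct and matches the paper's proof, which simply observes that $\Pr(\SS_{G_i}(x_i)=\pm1)=\frac{G_i\pm x_i}{2G_i}$ for each coordinate and takes expectations, exactly as in your computation. Your two extra points are details the paper leaves implicit: you read the hypothesis $\Gb\succeq\x$ as $G_i\ge\abs{x_i}$, which \cref{def:stochastic-sign} requires, and you handle the degenerate case $G_i=0$ by convention.
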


\begin{proof}
    Observe that $\Pr\brac{\SS_{G_i}(x_i)=\pm1}=\frac{G_i\pm x_i}{2G_i},\forall i\in[d]$. Taking expectations yields the result.
\end{proof}

\cref{prop:unbiased-sign} reveals that $\SS_\Gb(\cdot)$ secretly retains anisotropic gradient magnitudes even though only sign information is transmitted. Replacing $\sgn(\cdot)$ with $\SS_\Gb(\cdot)$ makes the expected update coincide with preconditioned SGD. Because the operator is \emph{unbiased only after coordinate-wise rescaling}, the choice of $\Gb$, which sets the per-coordinate normalization level, is decisive. As elaborated in later sections, our $\Gb$ adapts coordinate-wise along the optimization trajectory, yielding theoretical guarantees that contain no explicit dimensional factors.

\begin{algorithm}[t]
  \caption{Online Stochastic Sign (Sub)Gradient Descent}
  \label{alg:online-stosignsgd}
  \begin{algorithmic}[1]
    \STATE {\bfseries Input:} Non-empty closed convex set $\X\subseteq\R^d$, initialization $\x_1\in\X$, $\Gb_0=\0\in\R^d$.
    \FOR{$t=1$ {\bfseries to} $T$}
    \STATE Play $\x_t\in\X$
    \STATE Receive $f_t(\x_t)$ for an $f_t$ subdifferentiable in $\X$
    \STATE Get $\g_t\in\partial f_t(\x_t)$
    \STATE Update $\Gb_t=\max\cbrac{\Gb_{t-1},\abs{\g_t}}$
    \STATE Sample Uniform noise $\n_t\sim\unif\brac{[-1,1]^d}$
    \STATE Compute $\x_{t+1}=\Pi_\X^{\Gb_t}\sqbrac{\x_{t}-\eta_t\sign{\g_t+\Gb_t\odot\n_t}}$
    \ENDFOR
  \end{algorithmic}
\end{algorithm}

\subsection{Convex Theory\label{sec:convex-theory}}

In this section, we delve into online convex optimization (OCO), which is the canonical framework for non-smooth convex optimization. This setting covers stochastic convex optimization in \cref{sec:sco} and also serves as the basis for the non-convex regime in \cref{sec:non-convex-theory}.

Here, we briefly introduce the online learning setup, leaving a comprehensive overview to~\citep{cesa2006prediction,orabona2019intro,hazan2019introduction}. In each round $t=1,\dots,T$, the learner plays $\x_t\in\X$, after which an adversary reveals a convex loss $f_t:\X\to\R$. The learner then incurs loss $f_t(\x_t)$ and observes a subgradient $\g_t\in\partial f_t(\x_t)$. The performance of the learner is measured by the regret against any comparator $\x\in\X$: 
\begin{align*}
\textnormal{Regret}_T(\x):=\sum_{t=1}^T\brac{f_t(\x_t)-f_t(\x)}.
\end{align*}
\noindent Our goal is to design a sign-based no-regret learning algorithm that achieves sublinear regret for non-smooth losses $\cbrac{f_t}_{t\in[T]}$.

We formally introduce the online StoSignSGD in \cref{alg:online-stosignsgd}. At each round, the algorithm updates
\begin{align}\label{eq:online-stosignsgd}
    \Gb_t=\max\cbrac{\Gb_{t-1},\abs{\g_t}},\qquad
\x_{t+1}=\Pi_\X^{\Gb_t}\sqbrac{\x_t-\eta_t\cdot \SS_{\Gb_t}(\g_t)},
\end{align}
where $\Gb_t=(G_{t,1},\dots,G_{t,d})\in\R^d$ tracks the coordinate-wise maximum magnitude of past subgradients. Compared with vanilla SignSGD, the only change is that the deterministic sign operator $\sign{\cdot}$ is replaced by the stochastic sign operator $\SS_{\Gb_t}(\cdot)$, together with a generalized projection under the geometry induced by $\Gb_t$. For example, when $\X=[-D_\infty,D_\infty]^d$, the update in~\eqref{eq:online-stosignsgd} reduces to the coordinate-wise form $x_{t+1,i}=\Pi_{[-D_\infty,D_\infty]}\sqbrac{x_{t,i}-\eta_t\SS_{G_{t,i}}(g_{t,i})}$. Thus, StoSignSGD preserves the simplicity of sign-based updates while introducing a carefully structured stochasticity.

We now state the main regret guarantee, whose proof is postponed to \cref{app:sec:proof-online-upper-bound}.

\begin{thm}\label{thm:oco-bound}
    Under \cref{ass:convexity,ass:coord-lipschitz}, set $\eta_t=\frac{D_\infty}{\sqrt{2t}}$. \cref{alg:online-stosignsgd} ensures $\E\sqbrac{\max_{\x\in\X}\textnormal{Regret}_T(\x)}\le(2+\sqrt{2})D_\infty\norm{\Lb}_1\sqrt{T}.$

\end{thm}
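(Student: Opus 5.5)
The plan is to follow the standard adaptive (AdaGrad-style) online mirror descent analysis in the time-varying geometry $\norm{\cdot}_{\Gb_t}$, with the stochastic sign playing the role of an unbiased, rescaled subgradient. Write $\tilde\g_t:=\SS_{\Gb_t}(\g_t)\in\{\pm1\}^d$. Since $\abs{g_{t,i}}\le G_{t,i}$ by construction, \cref{prop:unbiased-sign} gives $\E_{\n_t}[\Gb_t\odot\tilde\g_t]=\g_t$, conditionally on the past and on $\g_t$. By convexity, $\textnormal{Regret}_T(\x)\le\sum_t\inner{\g_t}{\x_t-\x}$. We split this as
\begin{align*}
\sum_t\inner{\Gb_t\odot\tilde\g_t}{\x_t-\x}+\sum_t\inner{\g_t-\Gb_t\odot\tilde\g_t}{\x_t-\x}.
\end{align*}
For the first sum we use the update $\x_{t+1}=\Pi_\X^{\Gb_t}[\x_t-\eta_t\tilde\g_t]$. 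This is exactly a projected step in the $\Gb_t$-norm with gradient $\Gb_t\odot\tilde\g_t$, since $\diag{\Gb_t}^{-1}(\Gb_t\odot\tilde\g_t)=\tilde\g_t$. The projection inequality then gives
\begin{align*}
\inner{\Gb_t\odot\tilde\g_t}{\x_t-\x}\le\frac{\norm{\x_t-\x}_{\Gb_t}^2-\norm{\x_{t+1}-\x}_{\Gb_t}^2}{2\eta_t}+\frac{\eta_t}{2}\norm{\tilde\g_t}_{\Gb_t}^2 .
\end{align*}
Here $\norm{\tilde\g_t}_{\Gb_t}^2=\sum_iG_{t,i}\le\norm{\Gb_T}_1$. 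For zero coordinates of $\Gb_t$ a limiting or pseudo-inverse argument handles degeneracy.

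Summing the first term, we telescope with the monotonicity of $\Gb_t/\eta_t$: $\Gb_t$ is nondecreasing and $\eta_t$ is nonincreasing. Using $\norm{\x_t-\x}_\infty\le D_\infty$, this yields at most $\frac{D_\infty^2}{2\eta_T}\norm{\Gb_T}_1$. The second term sums to $\frac{\norm{\Gb_T}_1}{2}\sum_t\eta_t\le\frac{D_\infty}{\sqrt2}\sqrt{T}\norm{\Gb_T}_1$. With $\eta_T=D_\infty/\sqrt{2T}$, the first term equals $\frac{D_\infty\sqrt{2T}}{2}\norm{\Gb_T}_1$. The total is $\sqrt2\,D_\infty\sqrt T\norm{\Gb_T}_1$, uniformly in $\x$.

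The second sum is the main obstacle, because we need the expectation of the \emph{maximum} over $\x$, and $\x$ may be chosen after the noise is seen. Write $\bm{\zeta}_t=\g_t-\Gb_t\odot\tilde\g_t$, which is a martingale difference. Its part $\inner{\bm{\zeta}_t}{\x_t}$ vanishes in expectation, since $\x_t$ is determined before $\n_t$ is drawn. The remaining part is $\max_{\x}\inner{-\sum_t\bm{\zeta}_t}{\x}$. I would handle it with the standard ghost-iterate trick: run an auxiliary online gradient descent sequence $\tilde\x_t$ in the $\Gb_t$ geometry on the linear losses $\bm{\zeta}_t$, with the same step sizes. This gives
\begin{align*}
\sum_t\inner{\bm{\zeta}_t}{\tilde\x_t-\x}\le\frac{D_\infty^2\norm{\Gb_T}_1}{2\eta_T}+\sum_t\frac{\eta_t}{2}\norm{\diag{\Gb_t}^{-1}\bm{\zeta}_t}_{\Gb_t}^2 .
\end{align*}
The term $\sum_t\inner{\bm{\zeta}_t}{\tilde\x_t}$ again has zero mean. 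Each coordinate satisfies $\abs{\zeta_{t,i}/G_{t,i}}\le 2$; conditionally its second moment is $1-g_{t,i}^2/G_{t,i}^2\le1$, so in expectation $\norm{\cdot}^2\le\norm{\Gb_t}_1$. The ghost sequence therefore contributes another $\sqrt2\,D_\infty\sqrt T\,\E\norm{\Gb_T}_1$ at worst. Tightening the constant would use a direct Cauchy--Schwarz bound instead. Coordinate-wise, $\E\max_{x_i}\sum_t\zeta_{t,i}(x_{t,i}-x_i)\le D_\infty\sqrt{\E\sum_t\zeta_{t,i}^2}\le D_\infty\sqrt{T}\sqrt{\E G_{T,i}^2}$. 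That gives the extra ``$2$'' rather than $\sqrt2$, matching the stated constant $2+\sqrt2$. I would try this coordinatewise route first; the bounded set in $\norm{\cdot}_\infty$ makes $\X$ effectively product-like for this upper bound, since we can relax $\x\in\X$ to the box of side $D_\infty$ around $\x_t$.

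Finally, take expectations over both oracle and noise. Note that $G_{T,i}=\norm{\g_{1:T,i}}_\infty$, so $\E\norm{\Gb_T}_1\le\sum_i\sqrt{\E G_{T,i}^2}\le\norm{\Lb}_1$ by Jensen and \cref{ass:coord-lipschitz}. Combining the pieces gives the bound $(2+\sqrt2)D_\infty\norm{\Lb}_1\sqrt T$.
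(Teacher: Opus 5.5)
Your proposal is correct, and it gives a better constant than the one stated. It shares the paper's skeleton. First, the AdaGrad-style telescoping for $\sum_t\inner{\Gb_t\odot\SS_{\Gb_t}(\g_t)}{\x_t-\x}$, which holds pathwise and uniformly in $\x$ and gives $\sqrt{2}D_\infty\sqrt{T}\norm{\Gb_T}_1$. Second, a separate treatment of the noise term $\sum_t\inner{\bm{\zeta}_t}{\x_t-\x}$ against the data-dependent maximizer. You differ from the paper only in that second step.

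The paper's ghost is an imaginary copy of online StoSignSGD run on $\e_t=\bm{\zeta}_t$. It has its own max-buffer $\EB_t$, its own sign noise and a constant step $D_\infty/\sqrt{T}$. The paper reuses the fixed-comparator bound of \cref{thm:oco-bound-pseudo} as a black box; this is legitimate for the random maximizer after conditioning on the main trajectory, which requires the ghost's noise to be independent of it. It then pays $\E\norm{\EB_T}_1\le 2\norm{\Lb}_1$, and this is exactly where the ``$2$'' in $2+\sqrt{2}$ comes from.

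Your ghost is a deterministic preconditioned OGD in the $\Gb_t$-geometry with the same $\eta_t$. Its regret bound holds pathwise for all comparators simultaneously, so the maximizer needs no further argument. The exact conditional variance $G_{t,i}^2-g_{t,i}^2$ bounds the quadratic term by $\norm{\Gb_t}_1$ in expectation, for a total of $2\sqrt{2}D_\infty\norm{\Lb}_1\sqrt{T}$. Your ghost-free route is simpler still: since $\X$ sits in an $\ell_\infty$-ball, the supremum of a linear functional over $\X$ is controlled by an $\ell_1$-norm. The paper's route is more modular, since the same template works for any base learner with a fixed-comparator guarantee. Yours is sharper and avoids a second layer of randomization.

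Two bookkeeping fixes are needed for the coordinatewise route. First, the box must be centered at a fixed point such as $\x_1$, with half-width $D_\infty$, not at $\x_t$, because the comparator is shared across rounds. Write $\sum_t\inner{\bm{\zeta}_t}{\x_t-\x}=\sum_t\inner{\bm{\zeta}_t}{\x_t-\x_1}+\inner{\sum_t\bm{\zeta}_t}{\x_1-\x}$. Discard the first sum by the martingale property, and bound the second by $D_\infty\norm{\sum_t\bm{\zeta}_t}_1$. Second, your own variance computation gives $\E\abs{\sum_t\zeta_{t,i}}\le\brac{\sum_t\E[\zeta_{t,i}^2]}^{1/2}\le\sqrt{T}L_i$. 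So this route contributes a factor $1$, not $2$; a $2$ arises only from the crude bound $\abs{\zeta_{t,i}}\le 2G_{t,i}$. The total is then $(1+\sqrt{2})D_\infty\norm{\Lb}_1\sqrt{T}$. Either way the stated bound follows.
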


\cref{thm:oco-bound} indicates that \cref{alg:online-stosignsgd} is a no-regret online learning algorithm for non-smooth convex objectives. The $\sqrt{T}$ order in the time horizon $T$ is minimax optimal up to constant factors~\citep{abernethy2008optimal}. Moreover, the bound depends on $\norm{\Lb}_1$ rather than explicitly on the ambient dimension $d$, which is particularly desirable in high-dimensional applications such as large-scale model training.

\paragraph{On the design of $\Gb_t$}
The choice $\Gb_t=\max\cbrac{\Gb_{t-1},\abs{\g_t}}$ plays a central role for two reasons. First, it is admissible by \cref{def:stochastic-sign}. Second, it guarantees the monotonicity property $\Gb_t\succeq \Gb_{t-1}$, which is crucial for the telescoping argument in the proof. This design mirrors the core idea behind adaptive methods such as AdaGrad~\citep{duchi2011adaptive} and AMSGrad~\citep{reddi2018convergence}, but here it is used to control the stochastic sign operator rather than to rescale full gradients. The resulting coordinate-wise adaptivity is what allows our guarantee to avoid the explicit dimensional dependence that appears in prior analyses of non-smooth sign-based methods~\citep{jin2020stochastic,safaryan2021stochastic,chzhen2023signsvrg,sun2023momentum,NeurIPS:2024:Jiang,jiang2025improved}. Last but least, our design of $\Gb_t$ is already ``optimal'' in the sense that no better convergence can be obtained by utilizing another $\Gb_t$, as supported by \cref{thm:lower-bound,app:sec:proof-convex-upper-bound}. 

\begin{remark}
    Unlike previous non-smooth sign-based optimizers, StoSignSGD does not require prior knowledge of problem parameters apart from $D_\infty$. For example,~\citet{chzhen2023signsvrg} fix the corruption vector $\Gb_t$ to a global constant $G$, which is both unknown in practice and blind to coordinate-wise variation, and their step-size schedule requires the time horizon $T$ in advance. In contrast, our method adapts $\Gb_t$ online in a coordinate-wise manner and uses the anytime schedule $\eta_t\propto t^{-1/2}$. This also resolves the open problem raised in~\citet[Remark~1]{chzhen2023signsvrg}.
\end{remark}

Next, we establish a matching lower bound showing that the above guarantee is tight.

\begin{thm}\label{thm:lower-bound}
    Under \cref{ass:convexity,ass:coord-lipschitz}, there exists a closed convex set $\X\subseteq\R^d$, a sequence of convex losses $\cbrac{f_t}_{t\in[T]}$, and a comparator $\x\in\X$ such that \cref{alg:online-stosignsgd} satisfies $\E\sqbrac{\max_{\x\in\X}\textnormal{Regret}_T(\x)}\ge D_\infty\norm{\Lb}_1\sqrt{T/8}$.

\end{thm}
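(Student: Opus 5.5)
We will build an explicit hard instance on which the algorithm's randomness (and hence its iterates) does not matter, so the regret of \cref{alg:online-stosignsgd} is bounded below by an adversarial construction. The losses are \emph{linear} and coordinate-separable, which makes the regret split across coordinates. We take $\X=\prod_{i=1}^d[-D_\infty/2,D_\infty/2]$, so that $\sup_{\x,\x'\in\X}\norm{\x-\x'}_\infty=D_\infty$. We take $f_t(\x)=\inner{\g_t}{\x}$ with $g_{t,i}=\epsilon_{t,i}L_i$, where the $\epsilon_{t,i}$ are i.i.d.\ Rademacher signs drawn independently of the algorithm's noise $\n_t$.

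With these choices, \cref{ass:convexity} holds, and $\norm{\g_{1:T,i}}_\infty=L_i$ almost surely, so \cref{ass:coord-lipschitz} holds with equality. By Yao's principle, exhibiting a random sequence with large expected regret shows that some fixed (deterministic) sequence attains at least that value. The comparator can then be taken as the maximizer of the realized regret.

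\textbf{Step 1: the learner's loss has zero mean.} The point $\x_t$ depends only on $\g_{1:t-1}$ and $\n_{1:t-1}$, while $\g_t$ is independent of these and has zero mean. Hence $\E[\inner{\g_t}{\x_t}]=0$, and therefore
\[\E\Big[\max_{\x\in\X}\textnormal{Regret}_T(\x)\Big]=\E\Big[\max_{\x\in\X}\Big(-\sum_{t}\inner{\g_t}{\x}\Big)\Big].\]
The learner's contribution vanishes regardless of its internal randomness, so no property of the sign operator is needed beyond measurability.

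\textbf{Step 2: explicit maximization over the box.} The maximum on the right is attained coordinatewise at a vertex of the box, which gives
\[\E\Big[\max_{\x\in\X}\Big(-\sum_{t}\inner{\g_t}{\x}\Big)\Big]=\frac{D_\infty}{2}\sum_{i=1}^d L_i\,\E\Big|\sum_{t=1}^T\epsilon_{t,i}\Big|.\]

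\textbf{Step 3: Khintchine's inequality with the sharp constant.} This is the step needing care to obtain the constant. Szarek's sharp constant gives $\E|\sum_t\epsilon_t|\ge\sqrt{T/2}$. Substituting, the regret is at least
\[\frac{D_\infty}{2}\norm{\Lb}_1\sqrt{T/2}=D_\infty\norm{\Lb}_1\sqrt{T/8},\]
which is exactly the claimed bound.

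If one prefers to avoid citing Szarek, a self-contained route is the moment bound $\E|S|\ge(\E S^2)^{3/2}/(\E S^4)^{1/2}$ (from Hölder). With $\E S^2=T$ and $\E S^4=3T^2-2T\le3T^2$, this yields $\E|S|\ge\sqrt{T/3}$, which is slightly weaker than needed. So the sharp Khintchine constant, or the exact central-binomial computation $\E|S_T|\sim\sqrt{2T/\pi}$ together with a check for small $T$, is the main technical point. A cleaner alternative is to pair rounds, which gives $\E|S_T|\ge\sqrt{T/2}$ via the known inequality $\E|S_T|\ge\sqrt{T/2}$ for all $T\ge1$ (equality at $T=1$ in the form $1\ge\sqrt{1/2}$, and at $T=2$ where $\E|S_2|=1=\sqrt{2/2}$).

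Finally, since the expectation over random $\g$ is at least $D_\infty\norm{\Lb}_1\sqrt{T/8}$, some realization of the sign sequence attains at least this expected regret over the algorithm's internal noise. Fixing that realization gives the deterministic losses $\{f_t\}$ and the comparator claimed in \cref{thm:lower-bound}.
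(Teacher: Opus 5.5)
Your proof is correct. It uses the same hard instance as the paper: the box $[-D_\infty/2,D_\infty/2]^d$ with linear losses of magnitude $L_i$ in coordinate $i$. It also gets the same constant, since $\sqrt{2T}/4=\sqrt{T/8}$.

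The packaging differs. The paper invokes Orabona's one-dimensional OLO lower bound (\cref{lem:olo-lower-bound}) as a black box on each coordinate and sums. You instead inline that lemma's proof (Rademacher signs, vanishing learner term, Szarek's sharp Khintchine constant) and run it jointly over all coordinates.

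The paper's route is shorter. But applying an existential one-dimensional lemma coordinate by coordinate tacitly requires each coordinate of the learner to behave as a stand-alone one-dimensional algorithm. This holds for \cref{alg:online-stosignsgd} on a box, because the weighted projection, $G_{t,i}$ and the noise all act coordinatewise. It does not hold for the arbitrary algorithm in \cref{thm:oco-lower-bound}. Your joint randomization needs no separability. Your use of obliviousness to swap $\E$ and $\max_{\x}$ also handles the learner's internal randomness more carefully than the paper, which dismisses this in one informal sentence.

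One small fix: the ``pair rounds'' remark is circular as written, since it cites the inequality it is meant to prove, and $T=1$ is not an equality case. A self-contained version is $\E\abs{S_{2m-1}}=\E\abs{S_{2m}}=2m\binom{2m}{m}4^{-m}$ combined with $\binom{2m}{m}\ge 4^m/(2\sqrt{m})$, which gives $\E\abs{S_T}\ge\sqrt{T/2}$ for every $T\ge 1$. Citing Szarek alone is sufficient, though.
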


The lower bound is built from the canonical one-dimensional hard instance for online convex optimization~\citep{orabona2019intro}, and then lifted to $d$ dimensions through the separable construction $f(\x)=\sum_{i=1}^d f_i(x_i)$. Since regret decomposes additively across coordinates for separable losses, the total regret is simply the sum of the one-dimensional contributions. Detailed analysis can be found in \cref{app:sec:proof-lower-bound}.

\begin{remark}\label{remark:lower-bound}
    \cref{thm:lower-bound} shows that StoSignSGD is worst-case optimal for non-smooth convex optimization in the online setting, justifying the tightness of \cref{thm:oco-bound}. More importantly, it demonstrates that our design of $\Gb_t$ has achieved the best possible rate in non-smooth regimes. Altering $\Gb_t$ to other modes, e.g., $G_{t,i}=\norm{\g_{1:t,i}}_2$ according to AdaGrad, offers no performance gains, in general.
\end{remark}

\subsection{Non-convex Theory\label{sec:non-convex-theory}}

\begin{algorithm}[t]
   \caption{Non-smooth Non-convex StoSignSGD}
   \label{alg:nonconvex-stosignsgd}
\begin{algorithmic}[1]
   \STATE {\bfseries Input:} Initialization $\x_0\in\R^d$, $K,N\in\N,D_\infty>0$, $\bDelta_0\in B_\infty(\0,D_\infty)\subseteq\R^d$, $\Gb_0=\0\in\R^d$.\\
   \STATE Set $T=KN$ and $\eta_0=0$.
   \FOR{$t=1$ {\bfseries to} $T$}

   \STATE Compute $\bDelta_t=\Pi_{B_\infty(\0,D_\infty)}^{\Gb_{t-1}}\sqbrac{\Delta_{t-1}-\eta_{t-1}\cdot\SS_{\Gb_{t-1}}(\g_{t-1})}$  
   \STATE Sample $s_t\sim\expdist(1)$
   \STATE Set $\x_t=\x_{t-1}+s_t\bDelta_t$
   \STATE Get stochastic gradient $\g_t$ from oracle $\O$ such that $\E_{\O_t}[\g_t]=\nabla f(\x_t)$
   \STATE Update $\Gb_t=\max\cbrac{\Gb_{t-1},\abs{\g_t}}$
   \ENDFOR
   \STATE Set $\xb^k=\frac{1}{N}\sum_{n=1}^N\x_{(k-1)N+n}$ for all $k\in[K]$.
   \STATE {\bfseries Output:} Return $\xb\sim\unif\brac{\cbrac{\xb^1,\cdots,\xb^K}}$.
\end{algorithmic}
\end{algorithm}

In this section, we instantiate StoSignSGD to solve non-smooth, non-convex stochastic optimization problems. Such problems are ubiquitous in practice, yet their theoretical analysis is notoriously challenging. For the usual $\epsilon$-stationary criterion $\norm{\nabla f(\x)}\le \epsilon$, first-order methods can neither guarantee to reach nor even to approach an $\epsilon$-neighborhood in finite time~\citep{zhang2020complexity,kornowski2022oracle}. Following the alternative tractable stationarity~\citep{zhang2020complexity}, we propose the stationary measure below.

\begin{defn}[$(\delta,\epsilon)$-$\ell_{p,q}$-stationary point]\label{def:stationary-point}
    Given $\epsilon,\delta> 0,1\le p,q\le\infty$ and a differentiable function $f:\R^d\mapsto\R$, $\x$ is a $(\delta,\epsilon)$-$\ell_{p,q}$-stationary point of $f$ if
    \begin{align*}
        \norm{\nabla f(\x)}_{p,q}^{[\delta]}:=\inf_{P\in\P(\R^d),\E_{\y\sim P}[\y]=\x}\cbrac{\norm{\E\sqbrac{\nabla f(\y)}}_p+\delta\E\sqbrac{\norm{\y-\x}_q^2}}\le\epsilon.
    \end{align*}
\end{defn}

Our \cref{def:stationary-point} encompasses previous definitions: covering~\citet[Definition~2.2]{zhang2024random} and~\citet[Definition~1]{ahn2025general} by $p=q=2$; covering~\citet[Definition~15]{ahn2024adamema} by $p=1,q=2$. Through arbitrary $(p,q)$ pairs, we obtain a more refined stationary measure that can adapt to various problem geometries. In the sequel, we focus on the dual pair $p=1,q=\infty$, which aligns naturally with StoSignSGD.

We present the non-convex StoSignSGD in \cref{alg:nonconvex-stosignsgd}. The algorithm instantiates the online-to-non-convex conversion introduced by~\citet{cutkosky2023optimal}. In each round $t$, the update direction $\bDelta_t$ is produced by the online StoSignSGD (\cref{alg:online-stosignsgd}) and is then scaled by an exponentially-distributed random variable $s_t$~\citep{zhang2024random}. This single random scaling removes the bias of first-order Taylor expansion~\citet{zhang2020complexity}, and obviates the auxiliary sequence $\x_{t-1}+\bDelta_t$ used in~\citet{cutkosky2023optimal}. This mechanism succeeds because the online learner is literally predicting the next update direction, precisely the scenario for which online algorithms are designed. Consequently, the optimal regret guarantee of \cref{alg:online-stosignsgd} immediately translates into a sharp complexity bound for non-smooth non-convex optimization. The formal theoretical arguments are stated below, with their proof deferring to \cref{app:sec:exponential}.

\begin{thm}\label{thm:nonconvex-bound}
    Under \cref{ass:coord-lipschitz,ass:non-convexity,ass:well-behavedness} and further assume that $f(\x_0)-f^*\le\Delta_f$. Set
    \begin{align*}
    K=\ceil{\frac{7\sqrt{14}\Delta_f\delta^{1/2}}{2\epsilon^{3/2}}},\ N=\ceil{751\norm{\Lb}_1^2\epsilon^{-2}},\  D_\infty=\frac{\sqrt{\epsilon}}{\sqrt{14\delta}N},\ \eta_t=\frac{\sqrt{2}D_\infty}{\sqrt{((t-1)\bmod N)+1}}.
\end{align*}
Then, \cref{alg:nonconvex-stosignsgd} finds $(\delta,\epsilon)$-$\ell_{1,\infty}$-stationary points within
\begin{align*}
    T=KN\le39326\Delta_f\norm{\Lb}_1^2\delta^{1/2}\epsilon^{-7/2}
\end{align*}
stochastic gradient evaluations.
\end{thm}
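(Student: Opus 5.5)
The plan is to follow the online-to-non-convex conversion of \citet{cutkosky2023optimal} with exponential scaling as in \citet{zhang2024random}, feeding in the regret bound of \cref{thm:oco-bound}. The key identity comes from \cref{ass:well-behavedness} and the memoryless property of $s_t\sim\expdist(1)$: since $\E_{s}[s\,\nabla f(\x_{t-1}+s\bDelta_t)]$ integrates the derivative of $u\mapsto f(\x_{t-1}+u\bDelta_t)$ against $e^{-s}$, we get
\begin{align*}
\E\sqbrac{f(\x_t)-f(\x_{t-1})}=\E\sqbrac{\inner{\nabla f(\x_t)}{\bDelta_t}}=\E\sqbrac{\inner{\g_t}{\bDelta_t}}.
\end{align*}
The second equality uses that $\bDelta_t$ is measurable before $\g_t$ is drawn. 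The bias of the first-order Taylor step therefore vanishes exactly, with no smoothness needed.

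Fix an epoch $k$ of length $N$, with indices $t=(k-1)N+n$, and a comparator $\u_k=-D_\infty\,\sign{\textstyle\sum_n\nabla f(\x_t)}$, which lies in $B_\infty(\0,D_\infty)$ and is chosen after the fact. I add and subtract $\inner{\g_t}{\u_k}$ and sum over $t\in[T]$. The cross term $\inner{\g_t}{\bDelta_t-\u_k}$ is exactly the regret of \cref{alg:online-stosignsgd} run on the linear losses $\inner{\g_t}{\cdot}$ over $\X=B_\infty(\0,D_\infty)$, whose $\ell_\infty$-diameter is $2D_\infty$. The restart of $\eta_t$ each epoch makes this an independent run per epoch. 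By \cref{thm:oco-bound}, with $D_\infty$ replaced by $2D_\infty$ to match the stated $\eta_t=\sqrt2D_\infty/\sqrt n$, each epoch contributes at most $2(2+\sqrt2)D_\infty\norm{\Lb}_1\sqrt N$. The theorem holds against the maximum comparator, so the data-dependent $\u_k$ is allowed. The remaining term satisfies $\E\inner{\g_t}{\u_k}$ summed over the epoch equal to $-D_\infty\E\norm{\sum_n\nabla f(\x_t)}_1$ plus a noise term $D_\infty\E\norm{\sum_n(\g_t-\nabla f(\x_t))}_1$. Coordinatewise this noise is a martingale, so by Jensen and \cref{ass:coord-lipschitz} it is at most $D_\infty\sum_i 2L_i\sqrt N=2D_\infty\norm{\Lb}_1\sqrt N$. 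Combining with $f(\x_T)-f(\x_0)\ge-\Delta_f$ gives
\begin{align*}
\frac1K\sum_{k}\E\norm{\tfrac1N\textstyle\sum_n\nabla f(\x_t)}_1\le\frac{\Delta_f}{KND_\infty}+\frac{c\norm{\Lb}_1}{\sqrt N},
\end{align*}
with an absolute constant $c$.

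Next I connect this to \cref{def:stationary-point} with $p=1,q=\infty$ at $\xb^k$. I take $P$ to be the uniform distribution over $\{\x_t\}$ in epoch $k$, whose mean is $\xb^k$; conditioning on the $s_t$ handles the randomness. The penalty term is $\delta\,\E\max_n\norm{\x_t-\xb^k}_\infty^2$. Each step satisfies $\norm{\x_t-\x_{t-1}}_\infty\le s_tD_\infty$, so this is at most $\delta\E(\sum_n s_t)^2D_\infty^2\le\delta(N^2+N)D_\infty^2\approx 2\delta N^2D_\infty^2$. With $D_\infty=\sqrt{\epsilon}/(\sqrt{14\delta}N)$ this is $O(\epsilon)$. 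The gradient term is then $\Delta_f\sqrt{14\delta}/(K\sqrt\epsilon)+c\norm{\Lb}_1/\sqrt N$, and the stated choices of $K$ and $N$ make each a small fraction of $\epsilon$. Averaging over the uniformly chosen $\xb$ gives $\E\norm{\nabla f(\xb)}^{[\delta]}_{1,\infty}\le\epsilon$. Finally $T=KN$ is bounded by multiplying the ceilings, using $\ceil{a}\le2a$ when $a\ge1$, which yields $39326\Delta_f\norm{\Lb}_1^2\delta^{1/2}\epsilon^{-7/2}$.

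I expect the main obstacle to be the bookkeeping. The comparator must depend on the gradients, which is fine only because \cref{thm:oco-bound} bounds the expected maximum regret. The oracle noise in the $\ell_1$ term needs \cref{ass:coord-lipschitz} per coordinate. The exponential moments of $\sum s_t$ must be tracked precisely enough that the specific constants 751, $7\sqrt{14}/2$ and 39326 come out; I would fit these last by solving for the $\epsilon$-budget split.
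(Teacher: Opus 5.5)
Your proposal is correct and follows the paper's proof essentially step for step: the exponential-scaling identity of \cref{lem:random-taylor-expansion}, the per-epoch comparator $-D_\infty\sign{\uh^k}$ absorbed by the expected-max regret bound of \cref{thm:oco-bound} on a ball of $\ell_\infty$-diameter $2D_\infty$, a coordinatewise martingale bound on the oracle noise, and the uniform distribution over each epoch's iterates as the witness in \cref{def:stationary-point}. Your two deviations are harmless under the stated $N$: you use the noise constant $2L_i$ where $\E_t[\epsilon_{t,i}^2]\le\E_t[g_{t,i}^2]$ already gives $L_i$, and you bound the penalty by $\E(\sum_t s_t)^2\le N^2+N$ where the paper uses Jensen with $\E[s^2]=2$. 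One slip: the scaling identity carries no factor $s$, since Fubini gives $\E_s[f(\x+s\bDelta)-f(\x)]=\int_0^\infty e^{-u}\inner{\nabla f(\x+u\bDelta)}{\bDelta}\diff{u}$.
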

The complexity matches the $\ell_{2,2}$ lower bound~\citep[Corollary~5.1]{zhang2024random} in $\Delta_f,\delta,\epsilon$ and improves the best-known $\ell_{1,2}$ upper bound $O\brac{\max\cbrac{\Delta_f\norm{\Lb}_1^2\sqrt{d}\delta^{1/2}\epsilon^{-7/2},\norm{\Lb}_1^3\epsilon^{-3}}}$ for Adam~\citep{ahn2024adamema} by a factor of $\sqrt{d}$.

As indicated by \cref{thm:nonconvex-bound}, \cref{alg:nonconvex-stosignsgd} achieves state-of-the-art complexity for finding stationary points in \cref{def:stationary-point}. Recent studies~\citep{cutkosky2023optimal} also consider stationary points based on Goldstein stationarity~\citep{goldstein1977optimization}. In \cref{sec:goldstein}, we further propose a new stationary measure (\cref{def:stationary-point-as}) encompassing previous works and present a variant of StoSignSGD (\cref{alg:nonconvex-stosignsgd-uniform}) that efficiently identifies it.

\section{Empirical Study\label{sec:experiments}}

\begin{algorithm}[t]
    \caption{Practical Implementation of StoSignSGD}
    \label{alg:practical-stosignsgd}
    \begin{algorithmic}[1]
    \STATE {\bfseries Input:} Start point $\x_1\in\R^d$, momentum $\beta_1\in[0,1)$, learning rate $\cbrac{\eta_t}_{t=1}^T$, weight decay $\lambda\ge0$.\\
    \FOR{$t=1$ {\bfseries to} $T$}
    \STATE Get stochastic gradient $\g_t$
    \STATE Update momentum $\m_t=\beta_1\m_{t-1}+(1-\beta_1)\g_t$\hfill\COMMENT{\textcolor{gray}{$\m_1=\g_1$}}
    \STATE Update max-buffer $\Gb_t=\max\cbrac{\Gb_{t-1},\abs{\m_t}}$\hfill\COMMENT{\textcolor{gray}{$\Gb_1=\abs{\m_1}$}}
    \STATE Sample Uniform noise $\n_t\sim\unif\brac{[-1,1]^d}$
    \STATE Compute $\x_{t+1} = \x_t - \eta_t \sign{\m_t+\Gb_t\odot\n_t}-\eta_t\lambda\x_t$
    \ENDFOR
\end{algorithmic}
\end{algorithm}

\subsection{Practical Efficiency of StoSignSGD}
\begin{wrapfigure}[8]{r}{0.48\textwidth}
    \centering
    \vspace{-25pt}
    \includegraphics[width=0.9\linewidth]{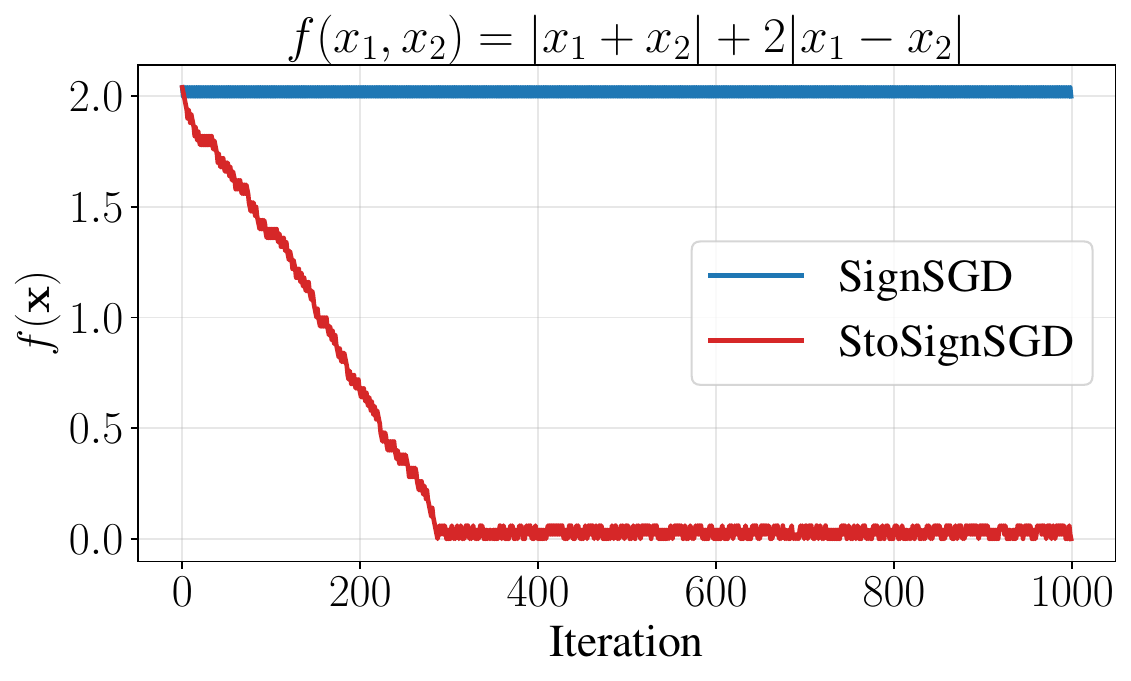}
    \vspace{-8pt}
    \caption{Convex Non-smooth Numerical Example}
    \label{fig:convex}
    \vspace{-8pt}
\end{wrapfigure}
In this section, we examine StoSignSGD (\cref{alg:practical-stosignsgd}), SignSGD (\cref{alg:practical-signsgd}), and AdamW (\cref{alg:adamw}) in convex optimization, and real-world applications of LLM finetuning as well as pretraining.

\subsubsection{Convex Objectives}

When the objective function is non-smooth, SignSGD is known to diverge even without gradient noise~\citep{karimireddy2019error,xiao2023stochastic}. To support our theory in \cref{sec:convex-theory}, we conduct numerical experiments on a convex non-smooth function $f(x_1,x_2)=\abs{x_1+x_2}+2\abs{x_1-x_2}$ with exact gradients (noiseless setting). The distinct convergence behavior in \cref{fig:convex} shows that StoSignSGD effectively fixes the non-convergence issue of SignSGD in the context of non-smooth optimization.

\subsubsection{Mathematical Reasoning\label{sec:math-reasoning}}

To test the empirical effectiveness of StoSignSGD in practice, we conduct supervised finetuning (SFT) of LLMs to enhance their mathematical reasoning ability. Following common practice~\citep{pan-etal-2025-scalebio,pan2025unbiased,xiong2025stepwiser,NeurIPS:2025:Wang:B,NeurIPS:2025:Wang:A}, we finetune Qwen2.5-7B model~\citep{yang2024qwen2.5} on NuminaMath-CoT dataset~\citep{numina_math_datasets}, which consists of 860k Chain-of-Thought (CoT)~\citep{wei2022cot} mannered high school and international mathematics Olympiad competition problems. We randomly sample 20k problem-solution pairs and finetune the model for 1 epoch using \texttt{LMFlow}~\citep{diao2024lmflow} framework. After that, post-trained models are evaluated on the GSM8k~\citep{cobbe2021gsm8k} and MATH~\citep{hendrycks2021math} datasets by \texttt{lm-evaluation-harness}~\citep{eval-harness}. The detailed training and evaluation configurations are postponed to \cref{sec:math-reasoning-cont}. 

\begin{wraptable}{r}{0.5\textwidth}
\vspace{-10pt} 
\caption{Evaluation accuracy (\%) for Qwen2.5-7B\label{tab:numinamath-results}}
\vspace{-5pt} 
\centering
\begin{tabular}{lcc}
\toprule
\textbf{Optimizer} & \textbf{GSM8k} & \textbf{MATH} \\
\midrule
AdamW   & 74.37 \(\pm\) 1.20 & 48.86 \(\pm\) 0.66  \\
SignSGD &  71.95 \(\pm\) 1.24&  47.66 \(\pm\) 0.66   \\
\rowcolor{yellow!20}StoSignSGD & \textbf{77.33 \(\pm\) 1.15} & \textbf{48.88 \(\pm\) 0.65}  \\
\bottomrule
\end{tabular}
\vspace{-8pt} 
\end{wraptable}

\cref{tab:numinamath-results} shows the zero-shot accuracy of Qwen2.5-7B finetuned with AdamW, SignSGD, and StoSignSGD. On the GSM8k benchmark, our optimizer yields a non-trivial $3\%$ improvement over AdamW, underscoring the practical advantages of StoSignSGD for SFT. Meanwhile, StoSignSGD maintains performance comparable to AdamW on the more challenging problems within the MATH dataset. In \cref{sec:math-reasoning-cont}, we present additional empirical evidence, including LoRA~\citep{hu2022lora} experiments, suggesting that StoSignSGD effectively enhances the model's proficiency on foundational mathematical tasks.

We also test our method on other math reasoning benchmarks and incorporate more baselines, where StoSignSGD remains competitive in \cref{tab:gsm8k_mathqa}. Due to space limitations, experiments underlying \cref{tab:gsm8k_mathqa} will be further discussed in detail in \cref{sec:demystify-structural}.

\begin{table}[htbp]
\centering

\caption{Zero-shot accuracy (\%) on the GSM8K and MathQA datasets.\label{tab:gsm8k_mathqa}}
\vspace{-5pt}
\resizebox{\linewidth}{!}{
\begin{tabular}{lcccccc}
\toprule
& \multicolumn{3}{c}{\textbf{GSM8K}} & \multicolumn{3}{c}{\textbf{MathQA}} \\
\cmidrule(lr){2-4} \cmidrule(lr){5-7}
\textbf{Optimizer} & \textbf{Qwen2.5-7B} & \textbf{Llama-3.1-8B} & \textbf{Mistral-7B-v0.1} & \textbf{Qwen2.5-7B} & \textbf{Llama-3.1-8B} & \textbf{Mistral-7B-v0.1} \\
\midrule
SignSGD 
& 72.10$\pm$1.24 & 48.75$\pm$1.38 & 45.03$\pm$1.37 
& 41.07$\pm$0.90 & 38.99$\pm$0.89 & 31.39$\pm$0.85 \\

\textcolor{gray}{AdamW}
& 74.82$\pm$1.20 & 53.30$\pm$1.37 & 48.22$\pm$1.38 
& 40.77$\pm$0.90 & 40.10$\pm$0.90 & 32.60$\pm$0.86 \\

\textcolor{gray}{AdaMax}
& 74.75$\pm$1.20 & 52.84$\pm$1.38 & 49.58$\pm$1.38 
& 43.52$\pm$0.91 & 40.54$\pm$0.90 & 33.47$\pm$0.86 \\

\textcolor{gray}{IE-StoSignSGD}
& 74.67$\pm$1.20 & 54.81$\pm$1.37 & 47.61$\pm$1.38 
& 44.86$\pm$0.91 & 40.54$\pm$0.90 & 33.93$\pm$0.87 \\

\textcolor{blue}{SignAdamW}
& 75.28$\pm$1.19 & 54.97$\pm$1.37 & 49.43$\pm$1.38 
& 45.43$\pm$0.91 & 40.80$\pm$0.90 & 34.07$\pm$0.87 \\

\textcolor{blue}{SignAdaMax}
& 74.82$\pm$1.20 & 53.30$\pm$1.37 & 47.38$\pm$1.38 
& 44.92$\pm$0.91 & 40.67$\pm$0.90 & 34.40$\pm$0.87 \\

\rowcolor{yellow!20}\textcolor{blue}{\textbf{StoSignSGD}}
& \textbf{75.81$\pm$1.18} & \textbf{55.50$\pm$1.37} & \textbf{50.19$\pm$1.38}
& \textbf{46.47$\pm$0.91} & \textbf{41.01$\pm$0.90} & \textbf{35.08$\pm$0.87} \\
\bottomrule
\end{tabular}
}
\vspace{-10pt}
\end{table}

\subsubsection{Low-precision Pretraining in FP8\label{sec:fp8}}
Low-precision numerical formats, such as FP8~\citep{micikevicius2022fp8}, have drawn much attention for scaling up LLM training~\citep{liu2024deepseek,yang2025qwen3,ICLR2025fp8scaling}, since they greatly reduce memory footprint and computation overhead. However, directly applying AdamW is often infeasible~\citep{wortsman2023stable,lee2024fp8,qiu2026why}, and a rich line of work has tried to fix the failure of AdamW in various ways~\citep{perez2023training,peng2023fp8,balancca2024scalify,ICLR2025coat}. Different from previous works, we offer a simple and effective solution by switching the optimizer to StoSignSGD \emph{without any other stabilization techniques}. Following~\citet{qiu2026why}, we pretrain GPT-2~\citep{radford2019language} on the OpenWebText corpus~\citep{Gokaslan2019OpenWeb} under FP8 precision. In addition to SignSGD and AdamW, we include Lion~\citep{chen2023symbolic} and Muon~\citep{jordan2024muon,liu2025muon} as baselines. The complete experimental setup is deferred to \cref{sec:fp8-cont}. We note that our FP8 setting is much more aggressive than common industrial practice such as DeepSeek-V3~\citep{liu2024deepseek}. Beyond low-precision linear operations, we also store both gradients and optimizer states in FP8. This greatly reduces the memory footprint and memory-bandwidth cost of backpropagation and optimizer updates, but it also imposes much harsher numerical constraints. Stable training under this setting therefore gives direct evidence that StoSignSGD can turn its numerical robustness into real systems-level benefits.
\begin{wraptable}{r}{0.48\textwidth}
    \centering
    \caption{FP8 pretraining results for GPT-2.\label{tab:fp8}}
    \vspace{-8pt}
    \resizebox{\linewidth}{!}{%
    \begin{tabular}{lcccc}
        \toprule
        \multirow{2}{*}{\textbf{Optimizer}} & \multicolumn{2}{c}{\textbf{Loss}} & \multicolumn{2}{c}{\textbf{Perplexity}} \\
        \cmidrule(lr){2-3} \cmidrule(lr){4-5}
        & \textbf{Train} & \textbf{Val} & \textbf{Train} & \textbf{Val} \\
        \midrule
        AdamW      & NaN   & NaN   & NaN   & NaN   \\
        Muon       & NaN   & NaN   & NaN   & NaN   \\
        SignSGD    & 3.539 & 3.546 & 34.43 & 34.67 \\
        Lion       & 3.452 & 3.460 & 31.55 & 31.82 \\
        \rowcolor{yellow!20}StoSignSGD & \textbf{3.410} & \textbf{3.418} & \textbf{30.26} & \textbf{30.51} \\
        \bottomrule
    \end{tabular}%
    }
    \vspace{-8pt} 
\end{wraptable}

\begin{figure}[t]

    \centering

    \subfigure[FP8 pretraining loss curves for GPT-2.]{%
        \label{fig:gpt2fp8}

        \includegraphics[width=0.65\linewidth]{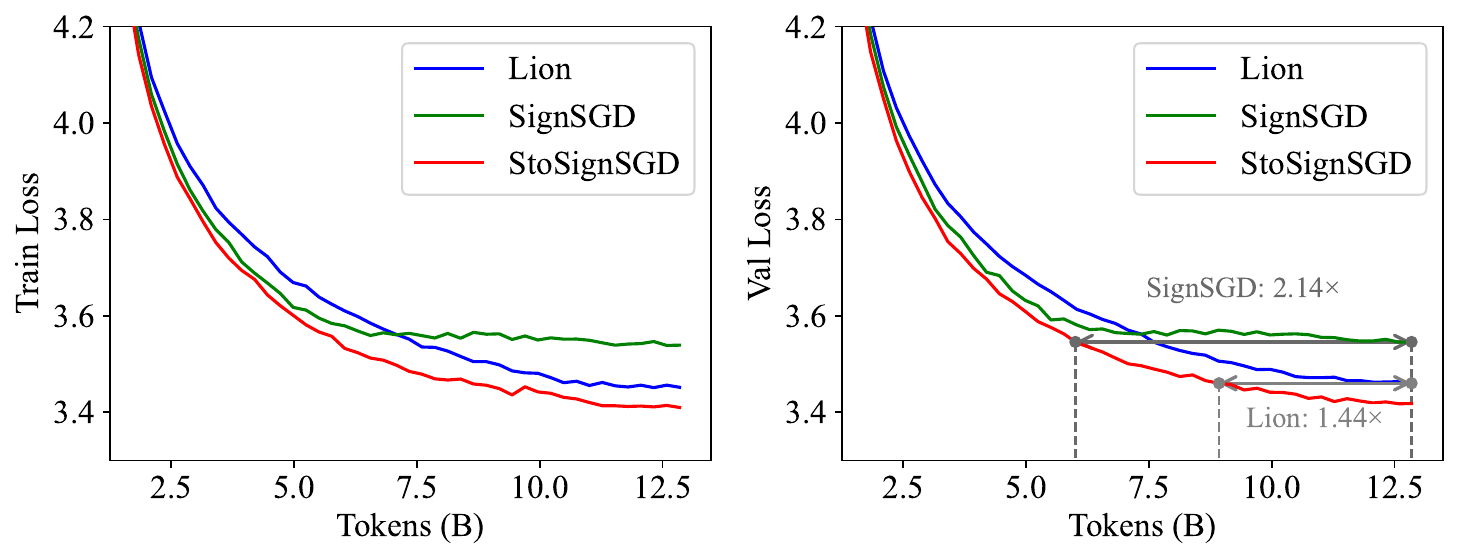}%
    }
    \hfill
    \subfigure[Failure diagnosis of AdamW.]{%
        \label{fig:adamw_fp8_collapse}
        \vspace{-20pt}
        \includegraphics[width=0.33\linewidth]{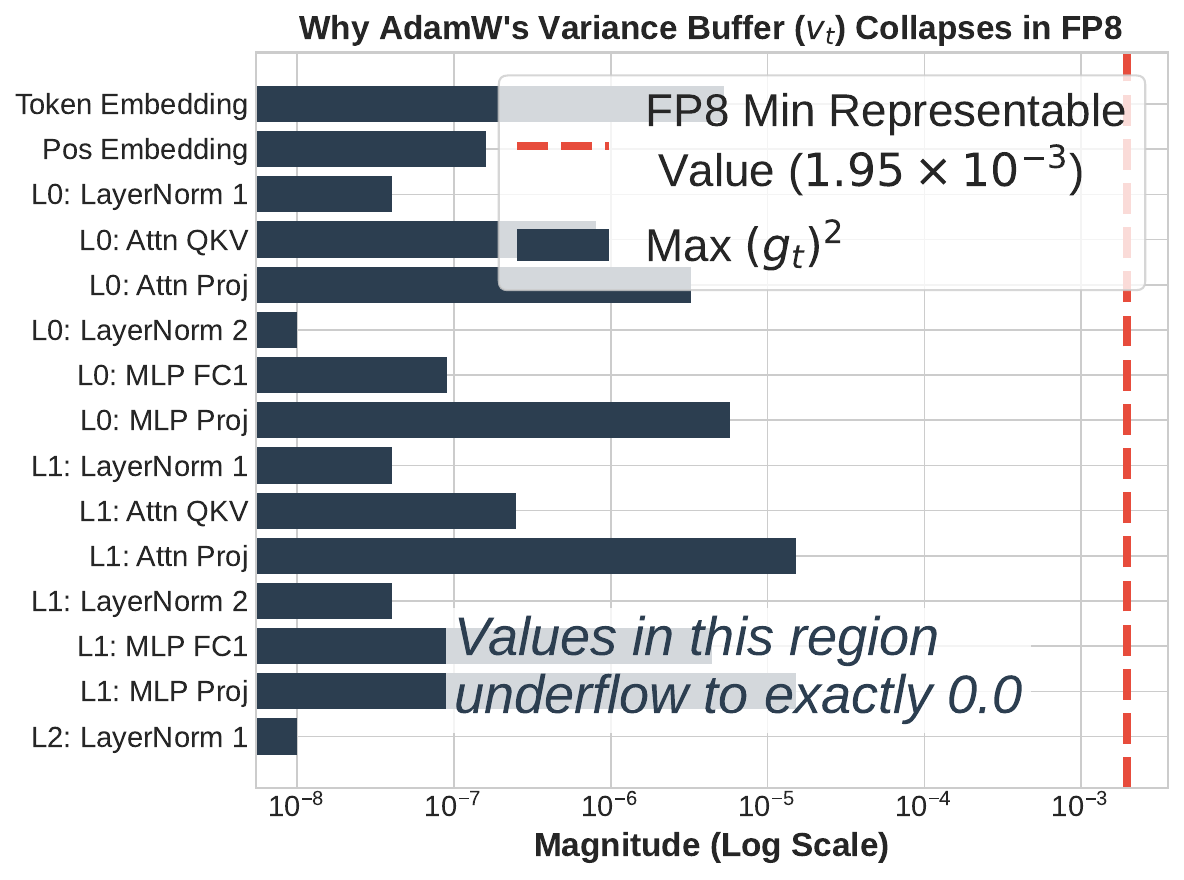}%
    }
    \vspace{-10pt}
    \caption{Quantitative analysis of pretraining GPT-2 on OpenWebText in FP8 precision.\label{fig:fp8_combined_analysis}}

\end{figure}

The loss curves and perplexity scores are presented in \cref{fig:gpt2fp8,tab:fp8}. Our key findings are twofold. \textbf{(i) Both AdamW and Muon diverge during the very early stages of training, whereas sign-based methods train the model stably throughout the entire trajectory}. As shown in \cref{fig:adamw_fp8_collapse}, for AdamW the root cause is the numerical underflow of the variance buffer $\v_t$. Muon fails for a related reason, since it is coupled with Adam and its Newton--Schulz orthogonalization becomes fragile once optimizer states are stored in low precision. In contrast, StoSignSGD uses only the sign information, which naturally fits low-precision formats. A detailed empirical analysis is given in \cref{sec:fp8-cont}. \textbf{(ii) StoSignSGD reaches the target validation loss using 53\% and 30\% fewer tokens than SignSGD and Lion, giving a speedup of 1.44$\times$ and 2.14$\times$, respectively}. Together with its ability to avoid AdamW's divergence, this token efficiency makes StoSignSGD a simple and robust optimizer for low-precision LLM pretraining.

\subsubsection{Low-precision Pretraining in FP4\label{sec:fp4}}

Building on the FP8 study, we further push StoSignSGD to the FP4 frontier, which is more practical for deployment but much less forgiving numerically. Our FP4 training pipeline follows the NVFP4 recipe of~\citet{chen2026tetrajetv2}, and, as in the FP8 regime, we store both gradients and optimizer states in FP4, the most aggressive configuration, while replacing AdamW with StoSignSGD without any stabilization techniques. The downstream evaluation follows common practice~\citep{pan2025unbiased,chen2026tetrajetv2}, and we assess StoSignSGD against Lion, AdamW, and Muon on the OLMo2-70M, -150M, and -370M models, which have \textbf{136M}, \textbf{267M}, and \textbf{474M} parameters in total~\citep{olmo2024olmo2}. Full details and results are deferred to \cref{sec:fp4-cont}. As reported in \cref{tab:fp4,tab:fp4-train}, the FP8 pattern re-emerges at FP4 and becomes even sharper. AdamW and Muon both diverge to NaN at every scale. AdamW fails from variance-buffer underflow (cf. \cref{fig:adamw_fp8_collapse}) and Muon from its Adam coupling and precision-sensitive orthogonalization, whereas StoSignSGD trains stably and consistently beats Lion. This advantage is substantial, and it is largest at the biggest model and the longest run. StoSignSGD improves average downstream accuracy over Lion by \textbf{0.45}, \textbf{0.44}, and \textbf{1.13} points on the 136M, 267M, and 474M models, trained on \textbf{4.19B}, \textbf{9.23B}, and \textbf{20.97B} tokens, respectively, and it also reaches lower training loss and perplexity. The gain at the largest, longest run more than \emph{doubles} that of the smaller scales, which shows that StoSignSGD's numerical robustness matters most when the precision budget is tightest and training is scaled up. In short, StoSignSGD stays stable and performs the best in the most aggressive 4-bit regime, where AdamW is unusable and even Lion is clearly outperformed.

\paragraph{Why does StoSignSGD do well in low precision?}
A natural question is why StoSignSGD is not only stable but also better than Lion, the strongest baseline that survives FP4. We believe one key reason is that \emph{StoSignSGD performs fewer and numerically safer floating-point operations in its state update}. Both methods keep a first-moment buffer $\m_t=\beta_1\m_{t-1}+(1-\beta_1)\g_t$. On top of this, Lion forms a second interpolation $\c_t=\beta_2\m_{t-1}+(1-\beta_2)\g_t$ and then takes its sign, which needs an extra round of multiplications and additions on values that can be tiny in low precision. StoSignSGD instead updates a max-buffer $\Gb_t=\max\{\Gb_{t-1},|\m_t|\}$, so it replaces that add-and-multiply step with a $\max$. The $\max$ is far less sensitive to rounding than repeated fused multiply-adds, because it does not accumulate small errors and never cancels two nearby numbers. Under FP4, where every extra multiply-add loses precision and can underflow, this simpler and more robust update gives StoSignSGD a small but consistent edge over Lion, and the edge grows as the models and the token budget get larger.

\begin{table}[t]
    \centering
    \caption{Radical FP4 pretraining results for the OLMo2 model series.
    All downstream scores are accuracies (\%). Avg.\ is the average over
    the seven reported benchmarks. A dash (--) indicates that the
    corresponding pretraining run encountered \textbf{NaN} loss/perplexity, and
    therefore downstream accuracies are not reported.\label{tab:fp4}}
    \vspace{-5pt}
    \resizebox{\linewidth}{!}{%
    \begin{tabular}{llccccccc>{\columncolor{blue!8}}c}
        \toprule
        \textbf{Model}
        & \textbf{Method}
        & \textbf{ARC-E}
        & \textbf{ARC-C}
        & \textbf{OBQA}
        & \textbf{HellaSwag}
        & \textbf{PIQA}
        & \textbf{SIQA}
        & \textbf{BoolQA}
        & \textbf{Avg.} \\
        \midrule

        \multirow{4}{*}{OLMo2-70M}
        & AdamW
        & -- & -- & -- & -- & -- & -- & --
        & -- \\

        & Muon
        & -- & -- & -- & -- & -- & -- & --
        & -- \\

        & Lion
        & 33.86
        & 21.40
        & 25.40
        & 26.09
        & 55.28
        & 38.79
        & 52.26
        & 36.15 \\

        & \textbf{StoSignSGD}
        & 33.33
        & 20.74
        & 26.40
        & 26.08
        & 55.60
        & 39.25
        & 54.83
        & \textbf{36.60} \\

        \midrule

        \multirow{4}{*}{OLMo2-150M}
        & AdamW
        & -- & -- & -- & -- & -- & -- & --
        & -- \\

        & Muon
        & -- & -- & -- & -- & -- & -- & --
        & -- \\

        & Lion
        & 39.47
        & 19.73
        & 27.60
        & 26.85
        & 59.41
        & 39.66
        & 61.35
        & 39.15 \\

        & \textbf{StoSignSGD}
        & 41.93
        & 21.40
        & 26.60
        & 27.06
        & 58.81
        & 40.69
        & 60.67
        & \textbf{39.59} \\

        \midrule

        \multirow{4}{*}{OLMo2-370M}
        & AdamW
        & -- & -- & -- & -- & -- & -- & --
        & -- \\

        & Muon
        & -- & -- & -- & -- & -- & -- & --
        & -- \\

        & Lion
        & 51.58
        & 26.76
        & 29.00
        & 33.54
        & 63.66
        & 42.84
        & 55.78
        & 43.31 \\

        & \textbf{StoSignSGD}
        & 52.11
        & 25.42
        & 32.20
        & 33.80
        & 64.74
        & 41.81
        & 61.01
        & \textbf{44.44} \\

        \bottomrule
    \end{tabular}%
    }
    \vspace{-10pt}
\end{table}

\subsection{Demystify Structural Stochasticity in Sign\label{sec:demystify-structural}}

\subsubsection{Unbiased Sign Conversion Framework\label{sec:sign-conversion-framework}}

\begin{algorithm}[htbp]
    \caption{Unbiased Sign Conversion Framework}
    \label{alg:sign-conversion}
    \begin{algorithmic}[1]
        \STATE {\bfseries Input:} Initialization $\x_1\in\R^d$, learning rate $\cbrac{\eta_t}_{t=1}^T$, weight decay $\lambda\ge0$.
        \FOR{$t=1$ {\bfseries to} $T$}
        \STATE Get stochastic gradient $\g_t$
        \STATE Update optimizer states $\m_t\leftarrow\m\brac{\m_{t-1},\g_t}, \bsigma_t\leftarrow\bsigma\brac{\v_{t-1},\g_t}$
        \STATE \textcolor{gray}{\itshape \# Choice I: General Optimizer Update}
        \STATE \textcolor{gray}{$\x_{t+1} = \x_t - \eta_t \m_t/\bsigma_t-\eta_t\lambda\x_t$}
        \STATE \textcolor{blue}{\itshape \# Choice II: Sign Conversion Update}
        \STATE \textcolor{blue}{Sample Uniform noise $\n_t\sim\unif\brac{[-1,1]^d}$}
        \STATE \textcolor{blue}{$\x_{t+1} = \x_t - \eta_t \sign{\m_t+\bsigma_t\odot\n_t}-\eta_t\lambda\x_t$}
        \ENDFOR
    \end{algorithmic}
\end{algorithm}

We formally introduce the unbiased sign conversion framework in \cref{alg:sign-conversion}, which systematically converts any general optimizer into an unbiased sign-based counterpart (unbiasedness is established in \cref{prop:unbiased-sign-conversion}). The \textcolor{gray}{gray-colored} text (lines 5--6) represents the update rule of a standard optimizer, such as AdamW or AdaMax (incorporating decoupled weight decay). Conversely, the \textcolor{blue}{blue-colored} text (lines 7--9) denotes the converted sign-based optimizer. This conversion process first samples uniformly distributed noise $\n_t$, and then incorporates structural noise $\bsigma_t\odot\n_t$ into the sign operator to update the model parameters. Within this framework, we dissect the core components of StoSignSGD to elucidate the mechanisms driving its effectiveness.

\begin{table}[t]
    \centering
    \caption{Summary of optimizers and tricks.\label{tab:tricks}}
    \vspace{-5pt}
    \begin{tabular}{@{}lcccc@{}}
        \toprule
        & \textbf{Trick1} & \textbf{Trick2} & \textbf{Trick3} & \textbf{Result} \\
        \textbf{Optimizer} & \textbf{structural noise} & \textbf{$\bsigma_t$ depend} & \textbf{Inf-norm} & \textbf{RMS-norm} \\
        & \textbf{$\sign{\bsigma_t\odot\n_t}$} & \textbf{on $\m_t$} & \textbf{on $\bsigma_t$} & \textbf{$\|\m_t/\bsigma_t\|_{\text{RMS}}$} \\
        \midrule
        SignSGD & \ding{55} & \ding{55} & \ding{55} & 1 \\
        \textcolor{gray}{AdamW} & \ding{55} & \ding{55} & \ding{55} & $\approx$0.2 \\
        \textcolor{gray}{AdaMax} & \ding{55} & \ding{55} & \ding{51} & $\approx$0.1 \\
        \textcolor{gray}{IE-StoSignSGD} & \ding{55} & \ding{51} & \ding{51} & $\approx$0.4 \\
        \textcolor{blue}{SignAdamW} & \ding{51} & \ding{55} & \ding{55} & 1 \\
        \textcolor{blue}{SignAdaMax} & \ding{51} & \ding{55} & \ding{51} & 1 \\
        \rowcolor{yellow!20}\textcolor{blue}{\textbf{StoSignSGD}} & \ding{51} & \ding{51} & \ding{51} & 1 \\
        \bottomrule
    \end{tabular}
    \vspace{-1pt}
\end{table}
\cref{tab:tricks} summarizes three key tricks that constitute StoSignSGD. The first technique is the injection of structural noise $\bsigma_t\odot\n_t$ into the sign operator $\sign{\cdot}$, which essentially defines whether a base method has been converted into a sign-based optimizer via \cref{alg:sign-conversion}. The second trick introduces a direct dependency between the optimizer states $\bsigma_t$ and $\m_t$, specifically coupling the derivation of $\bsigma_t$ to $\m_t$. The third trick dictates whether $\bsigma_t$ tracks the infinity norm of historic gradients, fundamentally altering the optimization geometry. Utilizing the infinity norm requires computing $\bsigma_t$ via a $\max$ operator, whereas omitting it defaults to an element-wise squared operation, as used in AdamW. Toggling these design choices yields various distinct algorithms, the full procedures for which are detailed in \cref{sec:supporting-algorithms}.

\subsubsection{Ablation Experiments\label{sec:ablation}}

To gain a deeper understanding of the three design choices outlined in \cref{tab:tricks}, we evaluate a wide range of model architectures on instruction following and mathematical reasoning tasks. Due to space constraints, the full experimental details are deferred to \cref{sec:dessect-sign-noise}. For the instruction-following task, we report the training loss of six models finetuned on the Alpaca dataset~\citep{taori2023alpaca}. For the mathematical reasoning task, we finetune three 7B+ parameter models on the training sets of GSM8k~\citep{cobbe2021gsm8k} and MathQA~\citep{amini-etal-2019-mathqa}. The results are summarized in \cref{tab:alpaca,tab:gsm8k_mathqa}.

\begin{table}[htbp]
\centering

\caption{Training loss on the Alpaca dataset. Results are reported as mean $\pm$ error.\label{tab:alpaca}}
\vspace{-5pt}
\resizebox{\linewidth}{!}{
\begin{tabular}{lcccccc}
\toprule
\textbf{Optimizer} & \textbf{Qwen2.5-0.5B} & \textbf{TinyLLaMA} & \textbf{Llama-3.2-1B} & \textbf{GPT-2} & \textbf{Gemma-3-1B} & \textbf{Phi-2} \\
\midrule
SignSGD 
& 1.282$\pm$.002 
& 0.959$\pm$.002 
& 1.159$\pm$.001 
& 1.896$\pm$.002 
& 1.158$\pm$.000 
& 1.012$\pm$.003 \\

\textcolor{gray}{AdamW} 
& 1.290$\pm$.000 
& 0.958$\pm$.002 
& 1.168$\pm$.001 
& 1.891$\pm$.000 
& 1.132$\pm$.000 
& 1.026$\pm$.001 \\

\textcolor{gray}{AdaMax} 
& 1.279$\pm$.001 
& 0.954$\pm$.002 
& 1.165$\pm$.002 
& 1.899$\pm$.002 
& 1.134$\pm$.001 
& 0.993$\pm$.001 \\

\textcolor{gray}{IE-StoSignSGD} 
& 1.235$\pm$.001 
& 0.933$\pm$.002 
& \textbf{1.092$\pm$.002} 
& \textbf{1.858$\pm$.001} 
& \textbf{1.105$\pm$.001} 
& 1.018$\pm$.002 \\

\textcolor{blue}{SignAdamW} 
& 1.272$\pm$.001 
& 0.949$\pm$.002 
& 1.112$\pm$.001 
& 1.875$\pm$.002 
& 1.134$\pm$.001 
& \textbf{0.985$\pm$.001} \\

\textcolor{blue}{SignAdaMax} 
& 1.301$\pm$.002 
& 0.964$\pm$.003 
& 1.127$\pm$.002 
& 1.925$\pm$.003 
& 1.152$\pm$.001 
& 0.991$\pm$.002 \\

\rowcolor{yellow!20}
\textcolor{blue}{\textbf{StoSignSGD}} 
& \textbf{1.234$\pm$.000} 
& \textbf{0.932$\pm$.002} 
& \textbf{1.092$\pm$.002} 
& \textbf{1.858$\pm$.001} 
& 1.106$\pm$.001 
& 0.989$\pm$.001 \\
\bottomrule
\end{tabular}
}
\vspace{-10pt}
\end{table}

Based on the empirical evidence in \cref{tab:alpaca,tab:gsm8k_mathqa}, we draw the following conclusions. \textbf{(i) Trick 1 is highly effective as a core mechanism.} This is supported by the performance gains of \textcolor{blue}{StoSignSGD} over \textcolor{gray}{IE-StoSignSGD}, \textcolor{blue}{SignAdamW} over \textcolor{gray}{AdamW}, and \textcolor{blue}{SignAdaMax} performing better than or on par with \textcolor{gray}{AdaMax}. \textbf{(ii) Trick 2 provides consistent improvements, independent of Trick 1.} This is evident from the superiority of \textcolor{gray}{IE-StoSignSGD} over \textcolor{gray}{AdaMax}, and \textcolor{blue}{StoSignSGD} over \textcolor{blue}{SignAdaMax}. \textbf{(iii) Trick 3 is synergistic with Trick 2, but may not be effective in isolation with only Trick 1.} This is illustrated by \textcolor{gray}{AdaMax} performing similarly to \textcolor{gray}{AdamW}, and \textcolor{blue}{SignAdamW} performing similarly to \textcolor{blue}{SignAdaMax}, while their combination in \textcolor{blue}{StoSignSGD} yields the best overall results. Together, these findings confirm that all three design choices in \cref{tab:tricks} are critical components. To fully understand why this specific combination is optimal, we examine the underlying optimization dynamics.

\paragraph{Further Theoretical and Empirical Justifications}
We provide further theoretical and empirical reasoning to justify the design of StoSignSGD and explain why alternative sign conversions, such as SignAdamW and SignAdaMax, fall short. From the perspective of non-smooth optimization theory, our design of the preconditioner $\Gb_t$, i.e., Trick 3 plus Trick, already gives the best possible convergence for sign-based optimizers (see \cref{remark:lower-bound}), whose tightness is also justified by \cref{thm:lower-bound}. 
When it comes to smooth optimization, managing the variance of the stochastic updates is critical for convergence~\citep{Bottou18optimization}. To understand this, we analyze the variance of the three sign-converted methods. Let $\SS_{\bsigma_t}(\m_t) := \sign{\m_t+\bsigma_t\odot\n_t}$ and define the signal-to-noise ratio $\r_t := \m_t/\bsigma_t$. The (conditional) variance for coordinate $i$ is exactly $\mathrm{Var}([\SS_{\bsigma_t}(\m_t)]_i) = 1-\r_{t,i}^2$. Consequently, the total variance across all coordinates is $\mathrm{Var}(\SS_{\bsigma_t}(\m_t)) = d(1-\|\r_t\|^2_{\text{RMS}})$. This equation illustrates a direct mathematical relationship: a smaller relative magnitude $\r_t$ strictly inflates the per-step variance. When we evaluate the RMS-norms of $\r_t$ for StoSignSGD, SignAdamW, and SignAdaMax along their optimization trajectories (see \cref{fig:rms-norm} and the last column of \cref{tab:tricks}), a clear pattern emerges: StoSignSGD sustains a significantly larger $\r_t$ compared to the alternatives. By dynamically coupling $\bsigma_t$ and $\m_t$ (Trick 2), StoSignSGD ensures that the structural noise does not overwhelm the gradient signal. This allows the algorithm to smartly leverage the sign-plus-noise mechanism without incurring excessive stochasticity. Rather than suffering from unchecked variance, StoSignSGD achieves an optimal balance: it injects just enough structural stochasticity to benefit from its exploratory properties, while preserving a sufficiently large $\r_t$ to keep the update variance strictly under control for stable convergence.

\section{Conclusion}

In this paper, we propose StoSignSGD, a variant of SignSGD that adds unbiased structural stochasticity to the vanilla sign operator. Our theoretical analysis shows that StoSignSGD fixes the non-convergence issue of standard SignSGD and improves prior best-known complexity bounds by dimensional factors. Empirically, extensive LLM pretraining and fine-tuning experiments show clear practical efficiency gains. StoSignSGD gives consistent speedups under FP8 pretraining, and in the far more demanding FP4 regime it stays stable across OLMo2 models up to 370M parameters, where AdamW collapses, while improving downstream accuracy over Lion by up to 1.13 points, with the largest gain at the biggest model and longest run. To explain these gains, we develop an unbiased sign conversion framework for general optimizers, which clarifies the role of structural stochasticity in StoSignSGD and offers theoretical support for our design.

\bibliography{ref}

@book{goodfellow2016deep,
title={Deep learning},
author={Goodfellow, Ian and Bengio, Yoshua and Courville, Aaron and Bengio, Yoshua},
volume={1},
year={2016},
publisher={MIT Press}
}

@article{duchi2011adaptive,
  title={Adaptive subgradient methods for online learning and stochastic optimization.},
  author={Duchi, John and Hazan, Elad and Singer, Yoram},
  journal={Journal of Machine Learning Research (JMLR)},
  volume={12},
  number={7},
  year={2011}
}

@InProceedings{sun2023momentum,
  title = 	 {Momentum Ensures Convergence of {SIGNSGD} under Weaker Assumptions},
  author =       {Sun, Tao and Wang, Qingsong and Li, Dongsheng and Wang, Bao},
  booktitle = 	 {Proceedings of the 40th International Conference on Machine Learning (ICML)},
  pages = 	 {33077--33099},
  year = 	 {2023},
}

@misc{jordan2024muon,
  author       = {Keller Jordan and Yuchen Jin and Vlado Boza and Jiacheng You and Franz Cesista and Laker Newhouse and Jeremy Bernstein},
  title        = {{M}uon: An optimizer for hidden layers in neural networks},
  year         = {2024},
  url          = {https://kellerjordan.github.io/posts/muon/}
}

@inproceedings{zhang2020complexity,
  title={Complexity of finding stationary points of nonconvex nonsmooth functions},
  author={Zhang, Jingzhao and Lin, Hongzhou and Jegelka, Stefanie and Sra, Suvrit and Jadbabaie, Ali},
  booktitle={Proceedings of the 37th International Conference on Machine Learning (ICML)},
  pages={11173--11182},
  year={2020},
}

@inproceedings{cutkosky2023optimal,
  title={Optimal stochastic non-smooth non-convex optimization through online-to-non-convex conversion},
  author={Cutkosky, Ashok and Mehta, Harsh and Orabona, Francesco},
  booktitle={Proceedings of the 40th International Conference on Machine Learning (ICML)},
  pages={6643--6670},
  year={2023},
}

@inproceedings{zhang2024random,
  title={Random Scaling and Momentum for Non-smooth Non-convex Optimization},
  author={Zhang, Qinzi and Cutkosky, Ashok},
  booktitle={Proceedings of the 41st International Conference on Machine Learning (ICML)},
  pages={58780--58799},
  year={2024},
}

@inproceedings{ahn2024adamema,
 author = {Ahn, Kwangjun and Cutkosky, Ashok},
 booktitle = {Advances in Neural Information Processing Systems 37 (NeurIPS)},
 pages = {94909--94933},
 title = {{A}dam with model exponential moving average is effective for nonconvex optimization},
 year = {2024}
}

@inproceedings{NeurIPS:2024:Jiang,
    author = {Wei Jiang and Sifan Yang and Wenhao Yang and Lijun Zhang},    
    title = {Efficient Sign-Based Optimization: Accelerating Convergence via Variance Reduction},
    booktitle = {Advances in Neural Information Processing Systems 37 (NeurIPS)},    
    pages = {33891--33932},    
    year = {2024},
}

@inproceedings{crawshaw2022robustness,
  title={Robustness to unbounded smoothness of generalized {SignSGD}},
  author={Crawshaw, Michael and Liu, Mingrui and Orabona, Francesco and Zhang, Wei and Zhuang, Zhenxun},
  booktitle={Advances in Neural Information Processing Systems 35 (NeurIPS)},
  pages={9955--9968},
  year={2022}
}

@inproceedings{liu2025adagrad,
    title={{AdaGrad} under Anisotropic Smoothness},
    author={Yuxing Liu and Rui Pan and Tong Zhang},
    booktitle={International Conference on Learning Representations (ICLR)},
    pages = {19574--19608},
    year={2025},
}

@article{orabona2019intro,
  author       = {Francesco Orabona},
  title        = {A Modern Introduction to Online Learning},
  journal      = {arXiv preprint arXiv:1912.13213v8},
  year         = {2019},
}

@article{bubeck2015convex,
  title={Convex optimization: Algorithms and complexity},
  author={Bubeck, S{\'e}bastien and others},
  journal={Foundations and Trends{\textregistered} in Machine Learning},
  volume={8},
  number={3-4},
  pages={231--357},
  year={2015},
  publisher={Now Publishers, Inc.}
}

@book{cesa2006prediction,
  title={Prediction, learning, and games},
  author={Cesa-Bianchi, Nicolo and Lugosi, G{\'a}bor},
  year={2006},
  publisher={Cambridge University Press}
}

@inproceedings{NeurIPS:2023:Zhang,
author = {Lijun Zhang and Peng Zhao and Zhen-Hua Zhuang and Tianbao Yang and Zhi-Hua Zhou},
title = {Stochastic Approximation Approaches to Group Distributionally Robust Optimization},
booktitle = {Advances in Neural Information Processing Systems 36 (NeurIPS)},
pages = {52490--52522},
year = {2023},
}

@ARTICLE{zhang2026gdro,
  author={Zhang, Lijun and Bai, Haomin and Zhao, Peng and Zhou, Zhi-Hua},
  journal={IEEE Transactions on Pattern Analysis and Machine Intelligence}, 
  title={Stochastic Approximation Approaches to Group Distributionally Robust Optimization and Beyond}, 
  year={2026},
  volume={},
  number={},
  pages={1-12},
}

@inproceedings{ICML:2024:Liu,
    author = {Langqi Liu and Yibo Wang and Lijun Zhang},
    title = {High-Probability Bound for Non-Smooth Non-Convex Stochastic Optimization with Heavy Tails},
    booktitle = {Proceedings of the 41st International Conference on Machine Learning (ICML)},
    pages = {32122--32138},
    year = {2024},
}

@article{nemirovskij1983problem,
  title={Problem complexity and method efficiency in optimization},
  author={Arkadi Semen Nemirovski and David Berkovich Yudin},
  year={1983},
  journal={Wiley-Interscience},
  publisher={Wiley-Interscience}
}

@article{nemirovski2009robust,
  title={Robust stochastic approximation approach to stochastic programming},
  author={Nemirovski, Arkadi and Juditsky, Anatoli and Lan, Guanghui and Shapiro, Alexander},
  journal={SIAM Journal on Optimization},
  volume={19},
  number={4},
  pages={1574--1609},
  year={2009},
  publisher={SIAM}
}

@article{Bottou18optimization,
    author = {Bottou, L\'{e}on and Curtis, Frank E. and Nocedal, Jorge},
    title = {Optimization Methods for Large-Scale Machine Learning},
    journal = {SIAM Review},
    volume = {60},
    number = {2},
    pages = {223-311},
    year = {2018},
}

@article{arjevani2023lower,
  title={Lower bounds for non-convex stochastic optimization},
  author={Arjevani, Yossi and Carmon, Yair and Duchi, John C. and Foster, Dylan J and Srebro, Nathan and Woodworth, Blake},
  journal={Mathematical Programming},
  volume={199},
  number={1},
  pages={165--214},
  year={2023},
  publisher={Springer}
}

@inproceedings{yu24egdro,
  title = 	 {Efficient Algorithms for Empirical Group Distributionally Robust Optimization and Beyond},
  author =       {Yu, Dingzhi and Cai, Yunuo and Jiang, Wei and Zhang, Lijun},
  booktitle = 	 {Proceedings of the 41st International Conference on Machine Learning (ICML)},
  pages = 	 {57384--57414},
  year = 	 {2024},
}

@inproceedings{he2016resnet,
  author={He, Kaiming and Zhang, Xiangyu and Ren, Shaoqing and Sun, Jian},
  booktitle={Proceedings of the IEEE/CVF Conference on Computer Vision and Pattern Recognition (CVPR)}, 
  title={Deep Residual Learning for Image Recognition}, 
  year={2016},
  pages={770-778}
}

@inproceedings{kingma15adam,
  author={Diederik P. Kingma and Jimmy Ba},
  title={{A}dam: A Method for Stochastic Optimization},
  year={2015},
  booktitle={International Conference on Learning Representations (ICLR)},
}

@article{chzhen2023signsvrg,
  title={{SignSVRG}: fixing {SignSGD} via variance reduction},
  author={Chzhen, Evgenii and Schechtman, Sholom},
  journal={arXiv preprint arXiv:2305.13187},
  year={2023}
}

@inproceedings{NEURIPS2019PYTORCH,
 author = {Paszke, Adam and Gross, Sam and Massa, Francisco and Lerer, Adam and Bradbury, James and Chanan, Gregory and Killeen, Trevor and Lin, Zeming and Gimelshein, Natalia and Antiga, Luca and Desmaison, Alban and Kopf, Andreas and Yang, Edward and DeVito, Zachary and Raison, Martin and Tejani, Alykhan and Chilamkurthy, Sasank and Steiner, Benoit and Fang, Lu and Bai, Junjie and Chintala, Soumith},
 booktitle = {Advances in Neural Information Processing Systems 32 (NeurIPS)},
 pages = {8026-8037},
 title = {{PyTorch: An Imperative Style, High-Performance Deep Learning Library}},
 year = {2019}
}

@article{hazan2007logarithmic,
  title={Logarithmic regret algorithms for online convex optimization},
  author={Hazan, Elad and Agarwal, Amit and Kale, Satyen},
  journal={Machine Learning},
  volume={69},
  number={2},
  pages={169--192},
  year={2007},
  publisher={Springer}
}

@inproceedings{McMahanS10adagrad,
  author       = {Brendan McMahan and
                  Matthew Streeter},
  title        = {Adaptive Bound Optimization for Online Convex Optimization},
  booktitle    = {Proceedings of the 23rd Conference on Learning Theory (COLT)},
  pages        = {244--256},
  year         = {2010},
}

@inproceedings{ahn2025general,
  title={General framework for online-to-nonconvex conversion: Schedule-free {SGD} is also effective for nonconvex optimization},
  author={Ahn, Kwangjun and Magakyan, Gagik and Cutkosky, Ashok},
  booktitle={Proceedings of the 42nd International Conference on Machine Learning (ICML)},
  year={2025},
  pages={772--795}
}

@article{bubeck2012regret,
  title={Regret analysis of stochastic and nonstochastic multi-armed bandit problems},
  author={Bubeck, S{\'e}bastien and Cesa-Bianchi, Nicolo},
  journal={Foundations and Trends{\textregistered} in Machine Learning},
  volume={5},
  number={1},
  pages={1--122},
  year={2012},
  publisher={Now Publishers, Inc.}
}

@book{lattimore2020bandit,
  title={Bandit algorithms},
  author={Lattimore, Tor and Szepesv{\'a}ri, Csaba},
  year={2020},
  publisher={Cambridge University Press}
}

@article{bai2025group,
  title={Group Distributionally Robust Optimization with Flexible Sample Queries},
  author={Bai, Haomin and Yu, Dingzhi and Li, Shuai and Luo, Haipeng and Zhang, Lijun},
  journal={arXiv preprint arXiv:2505.15212},
  year={2025}
}

@inproceedings{zinkevich2003oco,
author = {Zinkevich, Martin},
title = {Online convex programming and generalized infinitesimal gradient ascent},
year = {2003},
booktitle = {Proceedings of the 20th International Conference on Machine Learning (ICML)},
pages = {928–935},
}

@inproceedings{reddi2018convergence,
  title={On the Convergence of {A}dam and Beyond},
  author={Reddi, Sashank J and Kale, Satyen and Kumar, Sanjiv},
  booktitle={International Conference on Learning Representations (ICLR)},
  year={2018}
}

@article{agarwal2012information,
  title={Information-Theoretic Lower Bounds on the Oracle Complexity of Stochastic Convex Optimization},
  author={Agarwal, Alekh and Bartlett, Peter L and Ravikumar, Pradeep and Wainwright, Martin J},
  journal={IEEE Transactions on Information Theory},
  volume={58},
  number={5},
  pages={3235--3249},
  year={2012},
}

@inproceedings{bernstein2018signsgd,
  title={{S}ign{SGD}: Compressed optimisation for non-convex problems},
  author={Bernstein, Jeremy and Wang, Yu-Xiang and Azizzadenesheli, Kamyar and Anandkumar, Animashree},
  booktitle={Proceedings of the 35th International Conference on Machine Learning (ICML)},
  pages={560--569},
  year={2018},
}

@InProceedings{karimireddy2019error,
  title = 	 {Error Feedback Fixes {S}ign{SGD} and other Gradient Compression Schemes},
  author =       {Karimireddy, Sai Praneeth and Rebjock, Quentin and Stich, Sebastian and Jaggi, Martin},
  booktitle = 	 {Proceedings of the 36th International Conference on Machine Learning (ICML)},
  pages = 	 {3252--3261},
  year = 	 {2019},
}

@article{xiao2023stochastic,
  title={Stochastic Subgradient Methods with Guaranteed Global Stability in Nonsmooth Nonconvex Optimization},
  author={Xiao, Nachuan and Hu, Xiaoyin and Toh, Kim-Chuan},
  journal={arXiv preprint arXiv:2307.10053},
  year={2023}
}

@article{hazan2019lecture,
  title={Lecture notes: Optimization for machine learning},
  author={Hazan, Elad},
  journal={arXiv preprint arXiv:1909.03550},
  year={2019}
}

@article{hazan2019introduction,
  title={Introduction to Online Convex Optimization},
  author={Hazan, Elad},
  journal={arXiv preprint arXiv:1909.05207v3},
  year={2019}
}

@inproceedings{NIPS:2013:Zhang,
author = {Lijun Zhang and Mehrdad Mahdavi and Rong Jin},
title = {Linear Convergence with Condition Number Independent Access of Full Gradients},
booktitle = {Advance in Neural Information Processing Systems 26 (NIPS)},
pages = {980--988},
year = {2013},
}

@inproceedings{NIPS2013svrg,
 author = {Johnson, Rie and Zhang, Tong},
 booktitle = {Advances in Neural Information Processing Systems 26 (NIPS)},
 pages = {315--323},
 title = {Accelerating Stochastic Gradient Descent using Predictive Variance Reduction},
 year = {2013},
}

@inproceedings{bernstein2019signsgd,
  title={{S}ign{SGD} with Majority Vote is Communication Efficient and Fault Tolerant},
  author={Bernstein, Jeremy and Zhao, Jiawei and Azizzadenesheli, Kamyar and Anandkumar, Anima},
  booktitle={International Conference on Learning Representations (ICLR)},
  year={2019},
}

@misc{krizhevsky2009learning,
  title={Learning multiple layers of features from tiny images.(2009)},
  author={Krizhevsky, Alex and Hinton, Geoffrey and others},
  year={2009}
}

@inproceedings{safaryan2021stochastic,
  title={Stochastic sign descent methods: New algorithms and better theory},
  author={Safaryan, Mher and Richt{\'a}rik, Peter},
  booktitle={Proceedings of the 38th International Conference on Machine Learning (ICML)},
  pages={9224--9234},
  year={2021},
}

@inproceedings{chen2023symbolic,
  title={Symbolic discovery of optimization algorithms},
  author={Chen, Xiangning and Liang, Chen and Huang, Da and Real, Esteban and Wang, Kaiyuan and Pham, Hieu and Dong, Xuanyi and Luong, Thang and Hsieh, Cho-Jui and Lu, Yifeng and others},
  booktitle={Advances in Neural Information Processing Systems 36 (NeurIPS)},
  pages={49205--49233},
  year={2023}
}

@article{dong2024convergence,
  title={Convergence Rate Analysis of {LION}},
  author={Dong, Yiming and Li, Huan and Lin, Zhouchen},
  journal={arXiv preprint arXiv:2411.07724},
  year={2024}
}

@inproceedings{balles2018dissecting,
  title={Dissecting {A}dam: The sign, magnitude and variance of stochastic gradients},
  author={Balles, Lukas and Hennig, Philipp},
  booktitle={Proceedings of the 35th International Conference on Machine Learning (ICML)},
  pages={404--413},
  year={2018},
}

@article{jin2020stochastic,
  title={Stochastic-{S}ign {SGD} for Federated Learning with Theoretical Guarantees},
  author={Richeng Jin and Yufan Huang and Xiaofan He and Huaiyu Dai and Tianfu Wu},
  journal={arXiv preprint arXiv:2002.10940},
  year={2020}
}

@article{jiang2025improved,
  title={Improved Analysis for Sign-based Methods with Momentum Updates},
  author={Wei Jiang and Dingzhi Yu and Sifan Yang and Wenhao Yang and Lijun Zhang},
  journal={arXiv preprint arXiv:2507.12091},
  year={2025}
}

@article{kornowski2022oracle,
  author  = {Guy Kornowski and Ohad Shamir},
  title   = {Oracle Complexity in Nonsmooth Nonconvex Optimization},
  journal = {Journal of Machine Learning Research (JMLR)},
  year    = {2022},
  volume  = {23},
  number  = {314},
  pages   = {1--44},
}

@InProceedings{an2025asgo,
  title = 	 {{ASGO}: Adaptive Structured Gradient Optimization},
  author =       {Kang An and Yuxing Liu and Rui Pan and Yi Ren and Shiqian Ma and Donald Goldfarb and Tong Zhang},
  booktitle = 	 {Advances in Neural Information Processing Systems 38 (NeurIPS)},
  pages = 	 {126775--126814},
  year = 	 {2025},
}

@article{radford2019language,
  title={Language models are unsupervised multitask learners},
  author={Radford, Alec and Wu, Jeffrey and Child, Rewon and Luan, David and Amodei, Dario and Sutskever, Ilya and others},
  journal={OpenAI blog},
  volume={1},
  number={8},
  pages={9},
  year={2019}
}

@article{touvron2023llama,
  title={{Llama}: Open and efficient foundation language models},
  author={Touvron, Hugo and Lavril, Thibaut and Izacard, Gautier and Martinet, Xavier and Lachaux, Marie-Anne and Lacroix, Timoth{\'e}e and Rozi{\`e}re, Baptiste and Goyal, Naman and Hambro, Eric and Azhar, Faisal and others},
  journal={arXiv preprint arXiv:2302.13971},
  year={2023}
}

@article{yang2025qwen3,
  title={Qwen3 technical report},
  author={Yang, An and Li, Anfeng and Yang, Baosong and Zhang, Beichen and Hui, Binyuan and Zheng, Bo and Yu, Bowen and Gao, Chang and Huang, Chengen and Lv, Chenxu and others},
  journal={arXiv preprint arXiv:2505.09388},
  year={2025}
}

@misc{taori2023alpaca,
  title={{Stanford Alpaca}: An instruction-following {Llama} model},
  author={Taori, Rohan and Gulrajani, Ishaan and Zhang, Tianyi and Dubois, Yann and Li, Xuechen and Guestrin, Carlos and Liang, Percy and Hashimoto, Tatsunori B},
  year={2023},
  publisher={Stanford, CA, USA},
  url={https://github.com/tatsu-lab/stanford_alpaca}
}

@InProceedings{yuan25mars,
  title = 	 {{MARS}: Unleashing the Power of Variance Reduction for Training Large Models},
  author =       {Yuan, Huizhuo and Liu, Yifeng and Wu, Shuang and Xun, Zhou and Gu, Quanquan},
  booktitle = 	 {Proceedings of the 42nd International Conference on Machine Learning (ICML)},
  pages = 	 {73553--73587},
  year = 	 {2025},
}

@inproceedings{diao2024lmflow,
  title={{LMFlow}: An extensible toolkit for finetuning and inference of large foundation models},
  author={Diao, Shizhe and Pan, Rui and Dong, Hanze and Shum, KaShun and Zhang, Jipeng and Xiong, Wei and Zhang, Tong},
  booktitle={Proceedings of the 2024 Conference of the North American Chapter of the Association for Computational Linguistics: Human Language Technologies (Volume 3: System Demonstrations)},
  pages={116--127},
  year={2024},
  url={https://github.com/OptimalScale/LMFlow}
}

@article{liu2025muon,
  title={{M}uon is scalable for {LLM} training},
  author={Liu, Jingyuan and Su, Jianlin and Yao, Xingcheng and Jiang, Zhejun and Lai, Guokun and Du, Yulun and Qin, Yidao and Xu, Weixin and Lu, Enzhe and Yan, Junjie and others},
  journal={arXiv preprint arXiv:2502.16982},
  year={2025}
}

@inproceedings{brown2020gpt3,
 author = {Brown, Tom and Mann, Benjamin and Ryder, Nick and Subbiah, Melanie and Kaplan, Jared D and Dhariwal, Prafulla and Neelakantan, Arvind and Shyam, Pranav and Sastry, Girish and Askell, Amanda and Agarwal, Sandhini and Herbert-Voss, Ariel and Krueger, Gretchen and Henighan, Tom and Child, Rewon and Ramesh, Aditya and Ziegler, Daniel and Wu, Jeffrey and Winter, Clemens and Hesse, Chris and Chen, Mark and Sigler, Eric and Litwin, Mateusz and Gray, Scott and Chess, Benjamin and Clark, Jack and Berner, Christopher and McCandlish, Sam and Radford, Alec and Sutskever, Ilya and Amodei, Dario},
 booktitle = {Advances in Neural Information Processing Systems 33 (NeurIPS)},
 pages = {1877--1901},
 title = {Language Models are Few-Shot Learners},
 year = {2020}
}

@article{touvron2023llama2,
  title={{Llama} 2: Open foundation and fine-tuned chat models},
  author={Touvron, Hugo and Martin, Louis and Stone, Kevin and Albert, Peter and Almahairi, Amjad and Babaei, Yasmine and Bashlykov, Nikolay and Batra, Soumya and Bhargava, Prajjwal and Bhosale, Shruti and others},
  journal={arXiv preprint arXiv:2307.09288},
  year={2023}
}

@article{achiam2023gpt4,
  title={{GPT}-4 technical report},
  author={Achiam, Josh and Adler, Steven and Agarwal, Sandhini and Ahmad, Lama and Akkaya, Ilge and Aleman, Florencia Leoni and Almeida, Diogo and Altenschmidt, Janko and Altman, Sam and Anadkat, Shyamal and others},
  journal={arXiv preprint arXiv:2303.08774},
  year={2023}
}

@inproceedings{loshchilov2019adamw,
    title={Decoupled Weight Decay Regularization},
    author={Ilya Loshchilov and Frank Hutter},
    booktitle={International Conference on Learning Representations (ICLR)},
    year={2019},
}

@article{guo2025deepseek,
  title={{DeepSeek-R1}: Incentivizing reasoning capability in {LLM}s via reinforcement learning},
  author={Guo, Daya and Yang, Dejian and Zhang, Haowei and Song, Junxiao and Zhang, Ruoyu and Xu, Runxin and Zhu, Qihao and Ma, Shirong and Wang, Peiyi and Bi, Xiao and others},
  journal={arXiv preprint arXiv:2501.12948},
  year={2025}
}

@article{liu2024deepseek,
  title={{DeepSeek-V3} technical report},
  author={Liu, Aixin and Feng, Bei and Xue, Bing and Wang, Bingxuan and Wu, Bochao and Lu, Chengda and Zhao, Chenggang and Deng, Chengqi and Zhang, Chenyu and Ruan, Chong and others},
  journal={arXiv preprint arXiv:2412.19437},
  year={2024}
}

@article{semenov2025benchmarking,
  title={Benchmarking Optimizers for Large Language Model Pretraining},
  author={Semenov, Andrei and Pagliardini, Matteo and Jaggi, Martin},
  journal={arXiv preprint arXiv:2509.01440},
  year={2025}
}

@inproceedings{wen2026fantastic,
    title={Fantastic Pretraining Optimizers and Where to Find Them},
    author={Kaiyue Wen and David Leo Wright Hall and Tengyu Ma and Percy Liang},
    booktitle={International Conference on Learning Representations (ICLR)},
    year={2026},
    pages = {144731--144838}
}

@inproceedings{kunstner2023noise,
    title={Noise Is Not the Main Factor Behind the Gap Between {SGD} and {Adam} on Transformers, But Sign Descent Might Be},
    author={Frederik Kunstner and Jacques Chen and Jonathan Wilder Lavington and Mark Schmidt},
    booktitle={International Conference on Learning Representations (ICLR)},
    year={2023},
}

@inproceedings{jiang2026lion,
    title={Convergence Analysis of the Lion Optimizer in Centralized and Distributed Settings},
    author={Wei Jiang and Mao Xu and Wenhao Yang and Yibo Wang and Zechao Li and Lijun Zhang},
    booktitle={Proceedings of the 43rd International Conference on Machine Learning (ICML)},
    year={2026},
    pages={to appear}
}

@article{pan2025unbiased,
  title={Unbiased Gradient Low-Rank Projection},
  author={Pan, Rui and Luo, Yang and Liu, Yuxing and You, Yang and Zhang, Tong},
  journal={arXiv preprint arXiv:2510.17802},
  year={2025}
}

@InProceedings{pmlr-v235-kosson24a,
  title = 	 {Rotational Equilibrium: How Weight Decay Balances Learning Across Neural Networks},
  author =       {Kosson, Atli and Messmer, Bettina and Jaggi, Martin},
  booktitle = 	 {Proceedings of the 41st International Conference on Machine Learning (ICML)},
  pages = 	 {25333--25369},
  year = 	 {2024},
}

@inproceedings{zhaoICLR2025deconstructing,
 author = {Zhao, Rosie and Morwani, Depen and Brandfonbrener, David and Vyas, Nikhil and Kakade, Sham},
 booktitle = {International Conference on Learning Representations (ICLR)},
 pages = {2830--2850},
 title = {Deconstructing What Makes a Good Optimizer for Autoregressive Language Models},
 year = {2025}
}

@article{team2023gemini,
  title={{Gemini}: a family of highly capable multimodal models},
  author={Team, Gemini and Anil, Rohan and Borgeaud, Sebastian and Alayrac, Jean-Baptiste and Yu, Jiahui and Soricut, Radu and Schalkwyk, Johan and Dai, Andrew M and Hauth, Anja and Millican, Katie and others},
  journal={arXiv preprint arXiv:2312.11805},
  year={2023}
}

@inproceedings{hoffmann2022chinchilla,
      title={Training Compute-Optimal Large Language Models}, 
      author={Jordan Hoffmann and Sebastian Borgeaud and Arthur Mensch and Elena Buchatskaya and Trevor Cai and Eliza Rutherford and Diego de Las Casas and Lisa Anne Hendricks and Johannes Welbl and Aidan Clark and Tom Hennigan and Eric Noland and Katie Millican and George van den Driessche and Bogdan Damoc and Aurelia Guy and Simon Osindero and Karen Simonyan and Erich Elsen and Jack W. Rae and Oriol Vinyals and Laurent Sifre},
      year={2022},
      pages = {30016--30030},
      booktitle = {Advances in Neural Information Processing Systems 35 (NeurIPS)},
}

@misc{nanogpt,
  author = {Andrej Karpathy},
  title = {{NanoGPT}},
  year = {2022},
  publisher = {GitHub},
  url = {https://github.com/karpathy/nanoGPT}
}

@inproceedings{liu2026old,
    author = {Zijian Liu},
    title = {Online Convex Optimization with Heavy Tails: Old Algorithms, New Regrets, and Applications},
    booktitle = {Proceedings of the 37th International Conference on Algorithmic Learning Theory (ALT)},
    year = {2026},
    pages={1--47},
}

@article{yang2026distributed,
  title={Distributed Online Convex Optimization with Compressed Communication: Optimal Regret and Applications},
  author={Yang, Sifan and Li, Dan-Yue and Zhang, Lijun},
  journal={arXiv preprint arXiv:2604.09276},
  year={2026}
}

@article{zeng2026glm,
  title={{GLM-5}: from Vibe Coding to Agentic Engineering},
  author={Zeng, Aohan and Lv, Xin and Hou, Zhenyu and Du, Zhengxiao and Zheng, Qinkai and Chen, Bin and Yin, Da and Ge, Chendi and Xie, Chengxing and Wang, Cunxiang and others},
  journal={arXiv preprint arXiv:2602.15763},
  year={2026}
}

@inproceedings{pan-etal-2025-scalebio,
    title = "{S}cale{B}i{O}: Scalable Bilevel Optimization for {LLM} Data Reweighting",
    author = "Pan, Rui  and
      Zhang, Dylan  and
      Zhang, Hanning  and
      Pan, Xingyuan  and
      Xu, Minrui  and
      Zhang, Jipeng  and
      Pi, Renjie  and
      Wang, Xiaoyu  and
      Zhang, Tong",
    booktitle = "Proceedings of the 63rd Annual Meeting of the Association for Computational Linguistics (Volume 1: Long Papers)",
    year = "2025",
    pages = "31959--31982",
}

@article{xiong2025stepwiser,
  title={Stepwiser: Stepwise generative judges for wiser reasoning},
  author={Xiong, Wei and Zhao, Wenting and Yuan, Weizhe and Golovneva, Olga and Zhang, Tong and Weston, Jason and Sukhbaatar, Sainbayar},
  journal={arXiv preprint arXiv:2508.19229},
  year={2025}
}

@misc{numina_math_datasets,
  author = {Jia LI and Edward Beeching and Lewis Tunstall and Ben Lipkin and Roman Soletskyi and Shengyi Costa Huang and Kashif Rasul and Longhui Yu and Albert Jiang and Ziju Shen and Zihan Qin and Bin Dong and Li Zhou and Yann Fleureau and Guillaume Lample and Stanislas Polu},
  title = {{NuminaMath}},
  year = {2024},
  publisher = {Numina},
  journal = {Hugging Face repository},
  howpublished = {\url{[https://huggingface.co/AI-MO/NuminaMath-CoT](https://github.com/project-numina/aimo-progress-prize/blob/main/report/numina_dataset.pdf)}}
}

@article{cobbe2021gsm8k,
  title={Training verifiers to solve math word problems},
  author={Cobbe, Karl and Kosaraju, Vineet and Bavarian, Mohammad and Chen, Mark and Jun, Heewoo and Kaiser, Lukasz and Plappert, Matthias and Tworek, Jerry and Hilton, Jacob and Nakano, Reiichiro and others},
  journal={arXiv preprint arXiv:2110.14168},
  year={2021}
}

@inproceedings{hendrycks2021math,
 author = {Hendrycks, Dan and Burns, Collin and Kadavath, Saurav and Arora, Akul and Basart, Steven and Tang, Eric and Song, Dawn and Steinhardt, Jacob},
 booktitle = {Proceedings of the Neural Information Processing Systems Track on Datasets and Benchmarks},
 title = {Measuring Mathematical Problem Solving With the {MATH} Dataset},
 url = {https://github.com/hendrycks/math/},
 year = {2021}
}

@article{yang2024qwen2.5,
  title={Qwen2.5 Technical Report},
  author={Yang, An and Yang, Baosong and Zhang, Beichen and Hui, Binyuan and Zheng, Bo and Yu, Bowen and Li, Chengyuan and Liu, Dayiheng and Huang, Fei and Wei, Haoran and others},
  journal={arXiv preprint arXiv:2412.15115},
  year={2024}
}

@article{grattafiori2024llama,
  title={The {Llama 3} herd of models},
  author={Grattafiori, Aaron and Dubey, Abhimanyu and Jauhri, Abhinav and Pandey, Abhinav and Kadian, Abhishek and Al-Dahle, Ahmad and Letman, Aiesha and Mathur, Akhil and Schelten, Alan and Vaughan, Alex and others},
  journal={arXiv preprint arXiv:2407.21783},
  year={2024}
}

@article{jiang2023mistral,
  title={{Mistral 7B}},
  author={Jiang, Albert Q and Sablayrolles, Alexandre and Mensch, Arthur and Bamford, Chris and Chaplot, Devendra Singh and Casas, Diego de las and Bressand, Florian and Lengyel, Gianna and Lample, Guillaume and Saulnier, Lucile and others},
  journal={arXiv preprint arXiv:2310.06825},
  year={2023}
}

@inproceedings{amini-etal-2019-mathqa,
    title = "{M}ath{QA}: Towards Interpretable Math Word Problem Solving with Operation-Based Formalisms",
    author = "Amini, Aida  and
      Gabriel, Saadia  and
      Lin, Shanchuan  and
      Koncel-Kedziorski, Rik  and
      Choi, Yejin  and
      Hajishirzi, Hannaneh",
    booktitle = "Proceedings of the 2019 Conference of the North {A}merican Chapter of the Association for Computational Linguistics: Human Language Technologies, Volume 1 (Long and Short Papers)",
    year = "2019",
    pages = "2357--2367",
}

@inproceedings{ICLR2025coat,
 author = {Xi, Haocheng and Cai, Han and Zhu, Ligeng and Lu, Yao and Keutzer, Kurt and Chen, Jianfei and Han, Song},
 booktitle = {International Conference on Learning Representations (ICLR)},
 pages = {42989--43011},
 title = {{COAT}: Compressing Optimizer states and Activations for Memory-Efficient {FP8} Training},
 year = {2025}
}

@misc{eval-harness,
  author       = {Gao, Leo and Tow, Jonathan and Abbasi, Baber and Biderman, Stella and Black, Sid and DiPofi, Anthony and Foster, Charles and Golding, Laurence and Hsu, Jeffrey and Le Noac'h, Alain and Li, Haonan and McDonell, Kyle and Muennighoff, Niklas and Ociepa, Chris and Phang, Jason and Reynolds, Laria and Schoelkopf, Hailey and Skowron, Aviya and Sutawika, Lintang and Tang, Eric and Thite, Anish and Wang, Ben and Wang, Kevin and Zou, Andy},
  title        = {The Language Model Evaluation Harness},
  month        = 07,
  year         = 2024,
  publisher    = {Zenodo},
  version      = {v0.4.3},
  doi          = {10.5281/zenodo.12608602},
  url          = {https://zenodo.org/records/12608602}
}

@misc{Gokaslan2019OpenWeb,
    title={{OpenWebText Corpus}},
    author={Gokaslan, Aaron and Cohen, Vanya and Pavlick, Ellie and Tellex, Stefanie},
    howpublished={\url{http://Skylion007.github.io/OpenWebTextCorpus}},
    year={2019}
}

@inproceedings{qiu2026why,
    title={{Why Low-Precision Transformer Training Fails: An Analysis on Flash Attention}},
    author={Haiquan Qiu and Quanming Yao},
    booktitle={International Conference on Learning Representations (ICLR)},
    year={2026},
    pages = {70649--70672}
}

@inproceedings{wei2022cot,
 author = {Wei, Jason and Wang, Xuezhi and Schuurmans, Dale and Bosma, Maarten and ichter, brian and Xia, Fei and Chi, Ed and Le, Quoc V and Zhou, Denny},
 booktitle = {Advances in Neural Information Processing Systems 35 (NeurIPS)},
 pages = {24824--24837},
 title = {Chain-of-Thought Prompting Elicits Reasoning in Large Language Models},
 year = {2022}
}

@inproceedings{hu2022lora,
    title={Lo{RA}: Low-Rank Adaptation of Large Language Models},
    author={Edward J Hu and Yelong Shen and Phillip Wallis and Zeyuan Allen-Zhu and Yuanzhi Li and Shean Wang and Lu Wang and Weizhu Chen},
    booktitle={International Conference on Learning Representations (ICLR)},
    year={2022},
}

@article{yu2026signheavytails,
  title={Sign-Based Optimizers Are Effective Under Heavy-Tailed Noise},
  author={Yu, Dingzhi and Tao, Hongyi and Wan, Yuanyu and Luo, Luo and Zhang, Lijun},
  journal={arXiv preprint arXiv:2602.07425},
  year={2026}
}

@inproceedings{loshchilov2017sgdr,
    title={{SGDR}: Stochastic Gradient Descent with Warm Restarts},
    author={Ilya Loshchilov and Frank Hutter},
    booktitle={International Conference on Learning Representations (ICLR)},
    year={2017},
}

@inproceedings{wolf-etal-2020-transformers,
    title = "Transformers: State-of-the-Art Natural Language Processing",
    author = "Thomas Wolf and Lysandre Debut and Victor Sanh and Julien Chaumond and Clement Delangue and Anthony Moi and Pierric Cistac and Tim Rault and Rémi Louf and Morgan Funtowicz and Joe Davison and Sam Shleifer and Patrick von Platen and Clara Ma and Yacine Jernite and Julien Plu and Canwen Xu and Teven Le Scao and Sylvain Gugger and Mariama Drame and Quentin Lhoest and Alexander M. Rush",
    booktitle = "Proceedings of the 2020 Conference on Empirical Methods in Natural Language Processing: System Demonstrations",
    year = "2020",
    pages = "38--45"
}

@article{or2025torchao,
  title={{TorchAO: PyTorch-Native Training-to-Serving Model Optimization}},
  author={Or, Andrew and Jain, Apurva and Vega-Myhre, Daniel and Cai, Jesse and Hernandez, Charles David and Zheng, Zhenrui and Guessous, Driss and Kuznetsov, Vasiliy and Puhrsch, Christian and Saroufim, Mark and others},
  journal={arXiv preprint arXiv:2507.16099},
  year={2025}
}

@inproceedings{dao2022flashattention,
 author = {Dao, Tri and Fu, Dan and Ermon, Stefano and Rudra, Atri and R\'{e}, Christopher},
 booktitle = {Advances in Neural Information Processing Systems 35 (NeurIPS)},
 pages = {16344--16359},
 title = {{FlashAttention: Fast and Memory-Efficient Exact Attention with IO-Awareness}},
 year = {2022}
}

@inproceedings{ICLR2025fp8scaling,
 author = {Fishman, Maxim and Chmiel, Brian and Banner, Ron and Soudry, Daniel},
 booktitle = {International Conference on Learning Representations (ICLR)},
 pages = {98631--98644},
 title = {Scaling {FP8} training to trillion-token {LLMs}},
 year = {2025}
}

@inproceedings{daoICLR2024flashattention2,
 author = {Dao, Tri},
 booktitle = {International Conference on Learning Representations (ICLR)},
 pages = {35549--35562},
 title = {{FlashAttention-2: Faster Attention with Better Parallelism and Work Partitioning}},
 year = {2024}
}

@inproceedings{shah2024flashattention3,
 author = {Shah, Jay and Bikshandi, Ganesh and Zhang, Ying and Thakkar, Vijay and Ramani, Pradeep and Dao, Tri},
 booktitle = {Advances in Neural Information Processing Systems 37 (NeurIPS)},
 pages = {68658--68685},
 title = {{FlashAttention-3: Fast and Accurate Attention with Asynchrony and Low-precision}},
 year = {2024}
}

@InProceedings{alacaoglu2020newregretanalysis,
  title = 	 {A new regret analysis for {A}dam-type algorithms},
  author =       {Alacaoglu, Ahmet and Malitsky, Yura and Mertikopoulos, Panayotis and Cevher, Volkan},
  booktitle = 	 {Proceedings of the 37th International Conference on Machine Learning (ICML)},
  pages = 	 {202--210},
  year = 	 {2020},
}

@inproceedings{panICLR2024momentum,
 author = {Pan, Rui and Liu, Yuxing and Wang, Xiaoyu and Zhang, Tong},
 booktitle = {International Conference on Learning Representations (ICLR)},
 pages = {55989--56028},
 title = {Accelerated Convergence of Stochastic Heavy Ball Method under Anisotropic Gradient Noise},
 year = {2024}
}

@inproceedings{kidambi2018on,
    title={On the insufficiency of existing momentum schemes for Stochastic Optimization},
    author={Rahul Kidambi and Praneeth Netrapalli and Prateek Jain and Sham M. Kakade},
    booktitle={International Conference on Learning Representations (ICLR)},
    year={2018},
}

@article{micikevicius2022fp8,
  title={{FP8} formats for deep learning},
  author={Micikevicius, Paulius and Stosic, Dusan and Burgess, Neil and Cornea, Marius and Dubey, Pradeep and Grisenthwaite, Richard and Ha, Sangwon and Heinecke, Alexander and Judd, Patrick and Kamalu, John and others},
  journal={arXiv preprint arXiv:2209.05433},
  year={2022}
}

@article{peng2023fp8,
  title={{FP8-LM}: Training {FP8} large language models},
  author={Peng, Houwen and Wu, Kan and Wei, Yixuan and Zhao, Guoshuai and Yang, Yuxiang and Liu, Ze and Xiong, Yifan and Yang, Ziyue and Ni, Bolin and Hu, Jingcheng and others},
  journal={arXiv preprint arXiv:2310.18313},
  year={2023}
}

@article{perez2023training,
  title={Training and inference of large language models using 8-bit floating point},
  author={Perez, Sergio P and Zhang, Yan and Briggs, James and Blake, Charlie and Levy-Kramer, Josh and Balanca, Paul and Luschi, Carlo and Barlow, Stephen and Fitzgibbon, Andrew William},
  journal={arXiv preprint arXiv:2309.17224},
  year={2023}
}

@article{lee2024fp8,
  title={To {FP8} and back again: Quantifying the effects of reducing precision on {LLM} training stability},
  author={Lee, Joonhyung and Bae, Jeongin and Kim, Byeongwook and Kwon, Se Jung and Lee, Dongsoo},
  journal={arXiv preprint arXiv:2405.18710},
  year={2024}
}

@article{balancca2024scalify,
  title={Scalify: scale propagation for efficient low-precision LLM training},
  author={Balan{\c{c}}a, Paul and Hosegood, Sam and Luschi, Carlo and Fitzgibbon, Andrew},
  journal={arXiv preprint arXiv:2407.17353},
  year={2024}
}

@inproceedings{wortsman2023stable,
 author = {Wortsman, Mitchell and Dettmers, Tim and Zettlemoyer, Luke and Morcos, Ari and Farhadi, Ali and Schmidt, Ludwig},
 booktitle = {Advances in Neural Information Processing Systems 36 (NeurIPS)},
 pages = {10271--10298},
 title = {Stable and low-precision training for large-scale vision-language models},
 year = {2023}
}

@article{JMLR:v15:srivastava14dropout,
  author  = {Nitish Srivastava and Geoffrey Hinton and Alex Krizhevsky and Ilya Sutskever and Ruslan Salakhutdinov},
  title   = {Dropout: A Simple Way to Prevent Neural Networks from Overfitting},
  journal = {Journal of Machine Learning Research (JMLR)},
  year    = {2014},
  volume  = {15},
  number  = {56},
  pages   = {1929--1958},
}

@article{hendrycks2016gelu,
  title={Gaussian error linear units ({GeLUs})},
  author={Hendrycks, Dan and Gimpel, Kevin},
  journal={arXiv preprint arXiv:1606.08415},
  year={2016}
}

@inproceedings{NEURIPS2019hybrid_fp8,
 author = {Sun, Xiao and Choi, Jungwook and Chen, Chia-Yu and Wang, Naigang and Venkataramani, Swagath and Srinivasan, Vijayalakshmi (Viji) and Cui, Xiaodong and Zhang, Wei and Gopalakrishnan, Kailash},
 booktitle = {Advances in Neural Information Processing Systems 32 (NeurIPS)},
 pages = {4900--4909},
 title = {{Hybrid 8-bit Floating Point (HFP8) Training and Inference for Deep Neural Networks}},
 year = {2019}
}

@article{noune20228bit,
  title={8-bit numerical formats for deep neural networks},
  author={Noune, Badreddine and Jones, Philip and Justus, Daniel and Masters, Dominic and Luschi, Carlo},
  journal={arXiv preprint arXiv:2206.02915},
  year={2022}
}

@inproceedings{zheng-etal-2024-llamafactory,
    title = "{L}lama{F}actory: Unified Efficient Fine-Tuning of 100+ Language Models",
    author = "Zheng, Yaowei  and
      Zhang, Richong  and
      Zhang, Junhao  and
      Ye, Yanhan  and
      Luo, Zheyan",
    booktitle = "Proceedings of the 62nd Annual Meeting of the Association for Computational Linguistics (Volume 3: System Demonstrations)",
    year = "2024",
    url = "https://github.com/hiyouga/LLaMAFactory",
    pages = "400--410",
}

@inproceedings{jiangSTOC2025improved,
    author = {Jiang, Ruichen and Mokhtari, Aryan and Patitucci, Francisco},
    title = {Improved Complexity for Smooth Nonconvex Optimization: A Two-Level Online Learning Approach with Quasi-Newton Methods},
    year = {2025},
    booktitle = {Proceedings of the 57th Annual ACM Symposium on Theory of Computing (STOC)},
    pages = {2225–2236},
}

@inproceedings{patitucci2026improving,
    title={Improving Online-to-Nonconvex Conversion for Smooth Optimization via Double Optimism},
    author={Francisco Patitucci and Ruichen Jiang and Aryan Mokhtari},
    booktitle={International Conference on Learning Representations (ICLR)},
    year={2026},
    pages={26553--26581}
}

@inproceedings{liang2026cautious,
    title={Cautious Optimizers: Improving Training with One Line of Code},
    author={Kaizhao Liang and Lizhang Chen and Bo Liu and Qiang Liu},
    booktitle={International Conference on Learning Representations (ICLR)},
    year={2026},
    pages = {106538--106563}
}

@article{goldstein1977optimization,
  title={Optimization of Lipschitz continuous functions},
  author={Goldstein, Allen A},
  journal={Mathematical Programming},
  volume={13},
  number={1},
  pages={14--22},
  year={1977},
  publisher={Springer}
}

@InProceedings{sutskever2013momentum,
  title = 	 {On the importance of initialization and momentum in deep learning},
  author = 	 {Sutskever, Ilya and Martens, James and Dahl, George and Hinton, Geoffrey},
  booktitle = 	 {Proceedings of the 30th International Conference on Machine Learning (ICML)},
  pages = 	 {1139--1147},
  year = 	 {2013},
}

@InProceedings{arpit2017closer,
  title = 	 {A Closer Look at Memorization in Deep Networks},
  author =       {Devansh Arpit and Stanis{\l}aw Jastrz{\k{e}}bski and Nicolas Ballas and David Krueger and Emmanuel Bengio and Maxinder S. Kanwal and Tegan Maharaj and Asja Fischer and Aaron Courville and Yoshua Bengio and Simon Lacoste-Julien},
  booktitle = 	 {Proceedings of the 34th International Conference on Machine Learning (ICML)},
  pages = 	 {233--242},
  year = 	 {2017},
}

@inproceedings{shah2020pitfalls,
 author = {Shah, Harshay and Tamuly, Kaustav and Raghunathan, Aditi and Jain, Prateek and Netrapalli, Praneeth},
 booktitle = {Advances in Neural Information Processing Systems 33 (NeurIPS)},
 pages = {9573--9585},
 title = {The Pitfalls of Simplicity Bias in Neural Networks},
 year = {2020}
}

@article{zhang2024tinyllama,
  title={{TinyLlama}: An open-source small language model},
  author={Zhang, Peiyuan and Zeng, Guangtao and Wang, Tianduo and Lu, Wei},
  journal={arXiv preprint arXiv:2401.02385},
  year={2024}
}

@article{javaheripi2023phi2,
  title={Phi-2: The surprising power of small language models},
  author={Javaheripi, Mojan and Bubeck, S{\'e}bastien and Abdin, Marah and Aneja, Jyoti and Bubeck, Sebastien and Mendes, Caio C{\'e}sar Teodoro and Chen, Weizhu and Del Giorno, Allie and Eldan, Ronen and Gopi, Sivakanth and others},
  journal={Microsoft Research Blog},
  volume={1},
  number={3},
  pages={3},
  year={2023}
}

@misc{meta2024llama3.2_1b,
  title        = {Llama-3.2-1B},
  author       = {{Meta Llama}},
  year         = {2024},
  howpublished = {\url{https://huggingface.co/meta-llama/Llama-3.2-1B}},
}

@article{kamath2025gemma3,
  title={Gemma 3 technical report},
  author={Kamath, Aishwarya and Ferret, Johan and Pathak, Shreya and Vieillard, Nino and Merhej, Ramona and Perrin, Sarah and Matejovicova, Tatiana and Ram{\'e}, Alexandre and Rivi{\`e}re, Morgane and Rouillard, Louis and others},
  journal={arXiv preprint arXiv:2503.19786},
  year={2025},
}

@inproceedings{pan2024lisa,
 author = {Pan, Rui and Liu, Xiang and Diao, Shizhe and Pi, Renjie and Zhang, Jipeng and Han, Chi and Zhang, Tong},
 booktitle = {Advances in Neural Information Processing Systems 37 (NeurIPS)},
 pages = {57018--57049},
 title = {{LISA}: Layerwise Importance Sampling for Memory-Efficient Large Language Model Fine-Tuning},
 year = {2024}
}

@inproceedings{rajbhandari2020zero,
  title={{ZeRO}: memory optimizations toward training trillion parameter models},
  author={Rajbhandari, Samyam and Rasley, Jeff and Ruwase, Olatunji and He, Yuxiong},
  booktitle={Proceedings of the International Conference for High Performance Computing, Networking, Storage and Analysis},
  pages={1--16},
  year={2020}
}

@inproceedings{ren2021zero,
  title={{ZeRO-Offload}: Democratizing billion-scale model training},
  author={Ren, Jie and Rajbhandari, Samyam and Aminabadi, Reza Yazdani and Ruwase, Olatunji and Yang, Shuangyan and Zhang, Minjia and Li, Dong and He, Yuxiong},
  booktitle={Proceedings of the 2021 USENIX Annual Technical Conference (USENIX ATC 2021)},
  pages={551--564},
  year={2021}
}

@InProceedings{tang2021_1bitadam,
  title = 	 {{1-bit Adam: Communication Efficient Large-Scale Training with Adam’s Convergence Speed}},
  author =       {Tang, Hanlin and Gan, Shaoduo and Awan, Ammar Ahmad and Rajbhandari, Samyam and Li, Conglong and Lian, Xiangru and Liu, Ji and Zhang, Ce and He, Yuxiong},
  booktitle = 	 {Proceedings of the 38th International Conference on Machine Learning (ICML)},
  pages = 	 {10118--10129},
  year = 	 {2021},
}

@inproceedings{li2022_1bitlamb,
  title={1-bit {LAMB}: Communication efficient large-scale large-batch training with {LAMB}’s convergence speed},
  author={Li, Conglong and Awan, Ammar Ahmad and Tang, Hanlin and Rajbhandari, Samyam and He, Yuxiong},
  booktitle={2022 IEEE 29th International Conference on High Performance Computing, Data, and Analytics (HiPC)},
  pages={272--281},
  year={2022},
}

@article{shoeybi2019megatron,
  title={{Megatron-LM}: Training multi-billion parameter language models using model parallelism},
  author={Shoeybi, Mohammad and Patwary, Mostofa and Puri, Raul and LeGresley, Patrick and Casper, Jared and Catanzaro, Bryan},
  journal={arXiv preprint arXiv:1909.08053},
  year={2019}
}

@inproceedings{micikevicius2018mixed,
    title={Mixed Precision Training},
    author={Paulius Micikevicius and Sharan Narang and Jonah Alben and Gregory Diamos and Erich Elsen and David Garcia and Boris Ginsburg and Michael Houston and Oleksii Kuchaiev and Ganesh Venkatesh and Hao Wu},
    booktitle={International Conference on Learning Representations (ICLR)},
    year={2018},
}

@inproceedings{castro2025quartet_fp4,
    title={Quartet: Native {FP}4 Training Can Be Optimal for Large Language Models},
    author = {Castro, Roberto and Panferov, Andrei and Tabesh, Rush and Sieberling, Oliver and Chen, Jiale and Nikdan, Mahdi and Ashkboos, Saleh and Alistarh, Dan},
    booktitle={Advances in Neural Information Processing Systems 38 (NeurIPS)},
    year={2025},
    pages = {43552--43572}
}

@article{abecassis2025pretraining_fp4,
  title={Pretraining large language models with {NVFP4}},
  author={Abecassis, Felix and Agrusa, Anjulie and Ahn, Dong and Alben, Jonah and Alborghetti, Stefania and Andersch, Michael and Arayandi, Sivakumar and Bjorlin, Alexis and Blakeman, Aaron and Briones, Evan and others},
  journal={arXiv preprint arXiv:2509.25149},
  year={2025}
}

@article{xi2023int4,
  title={Training transformers with 4-bit integers},
  author={Xi, Haocheng and Li, Changhao and Chen, Jianfei and Zhu, Jun},
  journal={Advances in Neural Information Processing Systems 36 (NeurIPS)},
  pages={49146--49168},
  year={2023}
}

@inproceedings{liu2024dlion,
 author = {Liu, Bo and Wu, Lemeng and Chen, Lizhang and Liang, Kaizhao and Zhu, Jiaxu and Liang, Chen and Krishnamoorthi, Raghuraman and Liu, Qiang},
 booktitle = {Advances in Neural Information Processing Systems 37 (NeurIPS)},
 pages = {18388--18415},
 title = {Communication Efficient Distributed Training with Distributed {Lion}},
 year = {2024}
}

@article{li2023qft,
  title={{QFT}: Quantized full-parameter tuning of {LLMs} with affordable resources},
  author={Li, Zhikai and Liu, Xiaoxuan and Zhu, Banghua and Dong, Zhen and Gu, Qingyi and Keutzer, Kurt},
  journal={arXiv preprint arXiv:2310.07147},
  year={2023}
}

@inproceedings{narayan2025fp8unit,
  title = 	 {{$\mathrm{\mu}$}nit Scaling: Simple and Scalable {FP8} LLM Training},
  author =       {Narayan, Saaketh and Gupta, Abhay and Paul, Mansheej and Blalock, Davis},
  booktitle = 	 {Proceedings of the 42nd International Conference on Machine Learning (ICML)},
  pages = 	 {45720--45736},
  year = 	 {2025},
}

@article{schlotthauer2025pre,
  title={Pre-Training {LLMs} on a budget: A comparison of three optimizers},
  author={Schlotthauer, Joel and Kroos, Christian and Hinze, Chris and Hangya, Viktor and Hahn, Luzian and K{\"u}ch, Fabian},
  journal={arXiv preprint arXiv:2507.08472},
  year={2025}
}

@inproceedings{nair2010relu,
  title={Rectified linear units improve restricted boltzmann machines},
  author={Nair, Vinod and Hinton, Geoffrey E},
  booktitle={Proceedings of the 27th International Conference on Machine Learning (ICML)},
  pages={807--814},
  year={2010}
}

@article{JMLR2022switchtransformer,
  author  = {William Fedus and Barret Zoph and Noam Shazeer},
  title   = {Switch Transformers: Scaling to Trillion Parameter Models with Simple and Efficient Sparsity},
  journal = {Journal of Machine Learning Research (JMLR)},
  year    = {2022},
  volume  = {23},
  number  = {120},
  pages   = {1--39},
}

@inproceedings{shazeer2017moe,
    title={Outrageously Large Neural Networks: The Sparsely-Gated Mixture-of-Experts Layer},
    author={Noam Shazeer and *Azalia Mirhoseini and *Krzysztof Maziarz and Andy Davis and Quoc Le and Geoffrey Hinton and Jeff Dean},
    booktitle={International Conference on Learning Representations (ICLR)},
    year={2017},
}

@article{ajalloeian2020bias,
  title={On the convergence of {SGD} with biased gradients},
  author={Ajalloeian, Ahmad and Stich, Sebastian U},
  journal={arXiv preprint arXiv:2008.00051},
  year={2020}
}

@inproceedings{cutkosky2019storm,
 author = {Cutkosky, Ashok and Orabona, Francesco},
 booktitle = {Advances in Neural Information Processing Systems 32 (NeurIPS)},
 pages = {15236--15245},
 title = {Momentum-Based Variance Reduction in Non-Convex {SGD}},
 year = {2019}
}

@inproceedings{riedmiller1993rprop,
  title={A direct adaptive method for faster backpropagation learning: The {RPROP} algorithm},
  author={Riedmiller, Martin and Braun, Heinrich},
  booktitle={IEEE International Conference on Neural Networks},
  pages={586--591},
  year={1993},
  organization={IEEE}
}

@InProceedings{jordan2023deterministic,
  title = 	 {Deterministic Nonsmooth Nonconvex Optimization},
  author =       {Jordan, Michael and Kornowski, Guy and Lin, Tianyi and Shamir, Ohad and Zampetakis, Manolis},
  booktitle = 	 {Proceedings of 36th Conference on Learning Theory (COLT)},
  pages = 	 {4570--4597},
  year = 	 {2023},
}

@InProceedings{chen2023faster,
  title = 	 {Faster Gradient-Free Algorithms for Nonsmooth Nonconvex Stochastic Optimization},
  author =       {Chen, Lesi and Xu, Jing and Luo, Luo},
  booktitle = 	 {Proceedings of the 40th International Conference on Machine Learning (ICML)},
  pages = 	 {5219--5233},
  year = 	 {2023},
}

@inproceedings{lin2022gradient,
 author = {Lin, Tianyi and Zheng, Zeyu and Jordan, Michael},
 booktitle = {Advances in Neural Information Processing Systems 35 (NeurIPS)},
 pages = {26160--26175},
 title = {Gradient-Free Methods for Deterministic and Stochastic Nonsmooth Nonconvex Optimization},
 year = {2022}
}

@InProceedings{liu2024zeroth,
  title = 	 {Zeroth-Order Methods for Constrained Nonconvex Nonsmooth Stochastic Optimization},
  author =       {Liu, Zhuanghua and Chen, Cheng and Luo, Luo and Low, Bryan Kian Hsiang},
  booktitle = 	 {Proceedings of the 41st International Conference on Machine Learning (ICML)},
  pages = 	 {30842--30872},
  year = 	 {2024},
}

@inproceedings{liu2024gradient,
 author = {Liu, Zhuanghua and Luo, Luo and Low, Bryan Kian Hsiang},
 booktitle = {Advances in Neural Information Processing Systems 37 (NeurIPS)},
 pages = {45438--45461},
 title = {Gradient-Free Methods for Nonconvex Nonsmooth Stochastic Compositional Optimization},
 year = {2024}
}

@InProceedings{ahn2024adam,
  title = 	 {Understanding {A}dam Optimizer via Online Learning of Updates: {A}dam is {FTRL} in Disguise},
  author =       {Ahn, Kwangjun and Zhang, Zhiyu and Kook, Yunbum and Dai, Yan},
  booktitle = 	 {Proceedings of the 41st International Conference on Machine Learning (ICML)},
  pages = 	 {619--640},
  year = 	 {2024},
}

@inproceedings{dettmers2022_8bit,
    title={8-bit Optimizers via Block-wise Quantization},
    author={Tim Dettmers and Mike Lewis and Sam Shleifer and Luke Zettlemoyer},
    booktitle={International Conference on Learning Representations (ICLR)},
    year={2022},
}

@inproceedings{hernandez-cano2025fp8gemm,
    title={Towards Fully {FP}8 {GEMM} {LLM} Training at Scale},
    author={Alejandro Hern{\'a}ndez-Cano and Dhia Garbaya and Imanol Schlag and Martin Jaggi},
    booktitle={Advances in Neural Information Processing Systems (NeurIPS)},
    year={2025},
    pages = {56676--56701}
}

@article{hao2025lowprecisionsurvey,
  title={Low-Precision Training of Large Language Models: Methods, Challenges, and Opportunities},
  author={Hao, Zhiwei and Guo, Jianyuan and Shen, Li and Luo, Yong and Hu, Han and Wang, Guoxia and Yu, Dianhai and Wen, Yonggang and Tao, Dacheng},
  journal={arXiv preprint arXiv:2505.01043},
  year={2025}
}

@article{mishra2025mxfp8,
  title={Recipes for Pre-training LLMs with {MXFP8}},
  author={Asit Mishra and Dusan Stosic and Simon Layton and Paulius Micikevicius},
  journal={arXiv preprint arXiv:2506.08027},
  year={2025}
}

@article{zhang2026random,
  title={Random Scaling and Momentum for Non-smooth Non-convex Optimization}, 
  author={Qinzi Zhang and Ashok Cutkosky},
  year={2026},
  journal={arXiv preprint arXiv:2405.09742v2},
}

@inproceedings{abernethy2008optimal,
  author       = {Jacob D. Abernethy and
                  Peter L. Bartlett and
                  Alexander Rakhlin and
                  Ambuj Tewari},
  title        = {Optimal Strategies and Minimax Lower Bounds for Online Convex Games},
  booktitle    = {Proceedings of the 21st Annual Conference on Learning Theory (COLT)},
  pages        = {415--424},
  year         = {2008},
}

@misc{yu2026finite,
    title = {Finite-Time Nonsmooth Convergence of Signum and Muon},
    author = {Yu, Dingzhi and Liu, Yuxing and Pan, Rui and Zhang, Lijun and You, Yang and Zou, Difan},
    year = {2026},
    url = {https://www.lamda.nju.edu.cn/yudz/pdf/Finite-Time_Nonsmooth_Convergence_of_Signum_and_Muon.pdf}
}

@article{tao2026when,
  title={{When and Why SignSGD Outperforms SGD: A Theoretical Study Based on $\ell_1$-norm Lower Bounds}},
  author={Tao, Hongyi and Yu, Dingzhi and Zhang, Lijun},
  journal={arXiv preprint arXiv:2605.06615},
  year={2026}
}

@inproceedings{NeurIPS:2025:Wang:A,
    author = {Yibo Wang and Hai-Long Sun and Guangda Huzhang and Qing-Guo Chen and Zhao Xu and Weihua Luo and Kaifu Zhang and Lijun Zhang},
    title = {Triplets Better Than Pairs: Towards Stable and Effective Self-Play Fine-Tuning for LLMs},
    booktitle = {Advances in Neural Information Processing Systems 38 (NeurIPS)},
    pages = {40980--41009},
    year = {2025},
}

@inproceedings{NeurIPS:2025:Wang:B,
    author = {Yibo Wang and Guangda Huzhang and Qing-Guo Chen and Zhao Xu and Weihua Luo and Kaifu Zhang and Lijun Zhang},
    title = {SPACE: Noise Contrastive Estimation Stabilizes Self-Play Fine-Tuning for Large Language Models},
    booktitle = {Advances in Neural Information Processing Systems 38 (NeurIPS)},
    pages = {60042--60072},
    year = {2025},
}

@inproceedings{li2026the,
    title={The Implicit Bias of Steepest Descent with Mini-batch Stochastic Gradient},
    author={Jichu Li and Xuan Tang and Difan Zou},
    booktitle={Proceedings of the 43rd International Conference on Machine Learning (ICML)},
    year={2026},
    pages={to appear}
}

@inproceedings{zhang2026scaling,
    title={Scaling Laws for Precision in High-Dimensional Linear Regression},
    author={Dechen Zhang and Xuan Tang and Yingyu Liang and Difan Zou},
    booktitle={Proceedings of the 43rd International Conference on Machine Learning (ICML)},
    year={2026},
    pages={to appear}
}

@inproceedings{tang2026floating,
    title={A Convergence Analysis of Adaptive Optimizers under Floating-point Quantization},
    author={Xuan Tang and Jichu Li and Difan Zou},
    booktitle={International Conference on Learning Representations (ICLR)},
    year={2026},
    pages={4069--4127}
}

@inproceedings{zhang2026learning,
    title={Learning under Quantization for High-Dimensional Linear Regression},
    author={Dechen Zhang and Junwei Su and Difan Zou},
    booktitle={International Conference on Learning Representations (ICLR)},
    year={2026},
    pages={26307--26374}
}

@InProceedings{chen2025tetrajetv1,
  title = 	 {Oscillation-Reduced {MXFP}4 Training for Vision Transformers},
  author =       {Chen, Yuxiang and Xi, Haocheng and Zhu, Jun and Chen, Jianfei},
  booktitle = 	 {Proceedings of the 42nd International Conference on Machine Learning (ICML)},
  pages = 	 {9400--9414},
  year = 	 {2025},
}

@inproceedings{chen2026tetrajetv2,
    title={TetraJet-v2: Accurate {NVFP}4 Training for Large Language Models with Oscillation Suppression and Outlier Control},
    author={Yuxiang Chen and Yifan Liu and Xiaoming Xu and Pengle Zhang and Michael Beyer and Martin Rapp and Jun Zhu and Jianfei Chen},
    booktitle={Proceedings of the 43rd International Conference on Machine Learning (ICML)},
    year={2026},
    pages={to appear}
}

@article{olmo2024olmo2,
  title={2 OLMo 2 Furious},
  author={OLMo, Team and Walsh, Pete and Soldaini, Luca and Groeneveld, Dirk and Lo, Kyle and Arora, Shane and Bhagia, Akshita and Gu, Yuling and Huang, Shengyi and Jordan, Matt and others},
  journal={arXiv preprint arXiv:2501.00656},
  year={2024}
}

@article{clark2018arc,
  title={Think you have solved question answering? try {ARC}, the {AI2} reasoning challenge},
  author={Clark, Peter and Cowhey, Isaac and Etzioni, Oren and Khot, Tushar and Sabharwal, Ashish and Schoenick, Carissa and Tafjord, Oyvind},
  journal={arXiv preprint arXiv:1803.05457},
  year={2018}
}

@inproceedings{mihaylov2018openbookqa,
  title={Can a Suit of Armor Conduct Electricity? A New Dataset for Open Book Question Answering},
  author={Mihaylov, Todor and Clark, Peter and Khot, Tushar and Sabharwal, Ashish},
  booktitle={Proceedings of the 2018 Conference on Empirical Methods in Natural Language Processing (EMNLP)},
  pages={2381--2391},
  year={2018}
}

@inproceedings{zellers2019hellaswag,
  title={{HellaSwag}: Can a Machine Really Finish Your Sentence?},
  author={Zellers, Rowan and Holtzman, Ari and Bisk, Yonatan and Farhadi, Ali and Choi, Yejin},
  booktitle={Proceedings of the 57th Annual Meeting of the Association for Computational Linguistics (ACL)},
  pages={4791--4800},
  year={2019}
}

@inproceedings{bisk2020piqa,
  title={{PIQA}: Reasoning about Physical Commonsense in Natural Language},
  author={Bisk, Yonatan and Zellers, Rowan and Gao, Jianfeng and Choi, Yejin and others},
  booktitle={Proceedings of the AAAI Conference on Artificial Intelligence (AAAI)},
  volume={34},
  pages={7432--7439},
  year={2020}
}

@inproceedings{sap2019socialiqa,
  title={{Social IQa}: Commonsense Reasoning about Social Interactions},
  author={Sap, Maarten and Rashkin, Hannah and Chen, Derek and Le Bras, Ronan and Choi, Yejin},
  booktitle={Proceedings of the 2019 Conference on Empirical Methods in Natural Language Processing (EMNLP)},
  pages={4463--4473},
  year={2019}
}

@inproceedings{clark2019boolq,
  title={{BoolQ}: Exploring the Surprising Difficulty of Natural Yes/No Questions},
  author={Clark, Christopher and Lee, Kenton and Chang, Ming-Wei and Kwiatkowski, Tom and Collins, Michael and Toutanova, Kristina},
  booktitle={Proceedings of the 2019 Conference of the North American Chapter of the Association for Computational Linguistics: Human Language Technologies (NAACL-HLT)},
  pages={2924--2936},
  year={2019}
}

@article{xu2026deepseekv4,
  title={{DeepSeek-V4}: Towards highly efficient million-token context intelligence},
  author={Xu, Anyi and Lin, Bangcai and Xue, Bing and Wang, Bingxuan and Xu, Bingzheng and Wu, Bochao and Zhang, Bowei and Lin, Chaofan and Dong, Chen and Ling, Chenchen and others},
  journal={arXiv preprint arXiv:2606.19348},
  year={2026}
}

@article{kimiteam2026kimik3,
  title={{Kimi K3}: Open Frontier Intelligence}, 
  author={Kimi Team and Tongtong Bai and Yifan Bai and Yiping Bao and M. C. and Jianfeng Cai and Xinyuan Cai and Peizhou Cao and Yuxuan Cao and Ziwei Chai and others},
  year={2026},
  journal={arXiv preprint arXiv:2607.24653},
}
\bibliographystyle{plainnat}
\newpage
\appendix

\section{Related Work\label{sec:related-work}}

\paragraph{Sign-based optimization}
SignSGD (and its momentum variant Signum) was first introduced by~\citet{bernstein2018signsgd}, who established an $O(1/T+1/B)$ convergence rate for smooth non-convex objectives, where $T$ denotes the number of iterations and $B$ the batch size. This result suggests that the convergence of SignSGD relies heavily on large-batch training regimes. In non-smooth settings,~\citet{karimireddy2019error,xiao2023stochastic} develop convex and deterministic counterexamples where SignSGD diverges.~\citet{yu2026finite} recently refine previous deterministic counterexamples and also construct stochastic hard instances to argue that even with momentum (Signum and Lion), SignSGD still fails to achieve finite-time convergence. Owing to its communication efficiency, SignSGD subsequently attracted extensive attention in distributed learning settings~\citep{bernstein2019signsgd,jin2020stochastic,safaryan2021stochastic}. The general idea of injecting noise into the sign operator appears at~\citep{jin2020stochastic,safaryan2021stochastic,sun2023momentum,chzhen2023signsvrg,NeurIPS:2024:Jiang,jiang2025improved,jiang2026lion}. However, most of them neither consider the coordinate-wise noise nor do they allow the noise level to adapt to the optimization trajectory. These drawbacks lead to dimension-dependent convergence rates as well as stronger assumptions, such as bounded gradients. 

Several works have further extended or reinterpreted sign-based methods. For example,~\citet{crawshaw2022robustness} proposed a generalized version of SignSGD that is closely related to Adam, complementing another line of work that seeks to explain Adam's effectiveness from the perspective of sign descent~\citep{balles2018dissecting,kunstner2023noise}. Relatedly,~\citet{li2026the} studied the implicit bias of mini-batch stochastic steepest descent, covering SignSGD as the $\ell_\infty$-instance, and characterized how batch size, momentum, and variance reduction affect its max-margin behavior. On the theoretical side,~\citet{sun2023momentum} derived a tighter $O(1/\sqrt{BT})$ convergence for Signum, albeit with less favorable dimension dependence. This analysis was later improved by~\citet{jiang2025improved}, who obtained better dimension factors under weaker assumptions. Beyond vanilla sign descent,~\citet{chzhen2023signsvrg} proposed SignSVRG, which combines sign-based updates with variance reduction techniques~\citep{NIPS2013svrg,NIPS:2013:Zhang}. This direction was further advanced by~\citet{NeurIPS:2024:Jiang}, who leveraged the STORM estimator~\citep{cutkosky2019storm} to achieve the optimal convergence rate in $T$, and extended the method to distributed settings with significantly improved rates over prior works.~\citet{yang2026distributed} recently initiated the study of distributed OCO with compressed communication, establishing matching lower and upper bounds and thus achieving minimax-optimal regret rates for (strongly) convex objectives.

Empirically, the origins of sign-based optimization can be traced back to RProp~\citep{riedmiller1993rprop}. A notable recent breakthrough in this domain is the Lion optimizer~\citep{chen2023symbolic}, discovered through symbolic search. Built upon Signum, Lion maintains two momentum buffers with decoupled roles, assigning greater importance to the current gradient. Through a range of large-scale training tasks,~\citet{chen2023symbolic} showed that Lion can converge substantially faster than AdamW while achieving comparable or better generalization. On the theoretical side,~\citet{dong2024convergence} provided the first general convergence analysis of Lion, which was later sharpened by~\citet{jiang2026lion} under weaker assumptions and further extended to distributed optimization. More recently,~\citet{yu2026signheavytails} is the first to theoretically justify the potential benefits of Signum and Lion over AdamW, through the lens of heavy-tailed stochasticity in LLM training. Their analysis establishes convergence under generalized heavy-tailed noise and removes the dimensional dependence present in prior works~\citep{dong2024convergence,jiang2026lion}. More recently,~\citet{tao2026when} investigate when and why SignSGD will outperform SGD based on a strong theoretical comparison through geometry-aware optimal lower bounds.

\paragraph{Non-smooth optimization}
In convex settings, non-smooth optimization arises naturally in online convex optimization~\citep{zinkevich2003oco} and has been studied extensively over the years~\citep{hazan2019introduction,orabona2019intro}. In the offline setting, there is likewise a rich literature on non-smooth convex optimization~\citep{bubeck2015convex,hazan2019lecture}. Transitioning to non-convex settings, the seminal work of~\citet{zhang2020complexity} provided the first finite-time analysis of stochastic gradient methods based on their newly proposed $(\delta,\epsilon)$-Goldstein stationary points. Such measures are essential, since~\citet{kornowski2022oracle} established hardness results for finding even near approximately-stationary points. Moreover,~\citet{jordan2023deterministic} showed that no deterministic algorithm can identify $(\delta,\epsilon)$-Goldstein points in dimension-free time. Beyond first-order methods, gradient-free approaches have also been proposed and analyzed for this challenging setting~\citep{lin2022gradient,chen2023faster,liu2024zeroth,liu2024gradient}.

A recent major breakthrough is the online-to-non-convex conversion (O2NC) framework proposed by~\citet{cutkosky2023optimal}. The key idea is to use an online learning algorithm to predict and determine the next update step within the conversion procedure. Remarkably,~\citet{cutkosky2023optimal} showed that an optimal regret bound for online learning immediately yields an optimal complexity bound for non-smooth non-convex optimization. Building on this framework,~\citet{zhang2024random} introduced exponential random scaling into O2NC and showed that SGD with momentum can efficiently find $(\delta,\epsilon)$-stationary points.~\citet{ICML:2024:Liu} elegantly incorporated gradient clipping into this framework and established the first rate-optimal high-probability guarantees under heavy-tailed gradients. These results were subsequently refined by~\citet{liu2026old}, who developed an elegant analysis of OGD itself under heavy-tailed noise. More broadly, the O2NC framework has also been utilized to justify the theoretical benefits of Adam~\citep{ahn2024adam,ahn2024adamema} and schedule-free SGD~\citep{ahn2025general}, and has recently enabled a deeper understanding of quasi-Newton and optimistic methods~\citep{jiangSTOC2025improved,patitucci2026improving}.

\paragraph{Low-precision training}
Low-precision training has progressed from mixed-precision recipes that keep selected tensors in higher precision to increasingly aggressive native low-bit regimes. Early work showed that stable training can be achieved by combining low-precision arithmetic with loss scaling and higher-precision master parameters~\citep{micikevicius2018mixed}, while later studies developed hardware-friendly low-bit formats, including hybrid FP8 designs and more systematic 8-bit floating-point specifications~\citep{NEURIPS2019hybrid_fp8,noune20228bit,micikevicius2022fp8}. A recent survey provides a comprehensive overview of methods, challenges, and opportunities in low-precision LLM training~\citep{hao2025lowprecisionsurvey}. Complementing these empirical developments, recent theoretical work has studied learning and precision scaling under quantization in high-dimensional linear regression~\citep{zhang2026learning,zhang2026scaling}, as well as the convergence of adaptive optimizers under floating-point quantization~\citep{tang2026floating}.

Building on these foundations, recent works have demonstrated the practical viability of low-precision training for LLMs.~\citet{perez2023training} studied dynamic scaling strategies for FP8 training and inference of GPT- and Llama-style models, while~\citet{peng2023fp8} proposed an FP8 automatic mixed-precision framework in which gradients and optimizer states can also be quantized effectively.~\citet{ICLR2025fp8scaling} further scaled FP8 training to trillion-token LLMs and revealed long-horizon instabilities that do not appear in shorter runs. Meanwhile, optimizer-state compression has become an important complementary direction:~\citet{dettmers2022_8bit} developed 8-bit optimizers via block-wise quantization, and~\citet{ICLR2025coat} further compressed optimizer states and activations to improve the memory efficiency of FP8 training. Beyond FP8, low-precision training has also expanded to more aggressive formats:~\citet{xi2023int4} developed hardware-compatible INT4 transformer training,~\citet{castro2025quartet_fp4,abecassis2025pretraining_fp4} showed that end-to-end FP4/NVFP4 training can be competitive for billion-scale LLM pretraining,~\citet{mishra2025mxfp8} presented practical recipes for pre-training LLMs with MXFP8, and~\citet{hernandez-cano2025fp8gemm} pushed toward fully FP8 GEMM LLM training at scale. Along a complementary line, the TetraJet series tackles the weight oscillation induced by low-bit quantization, first reducing it for MXFP4 training of vision transformers~\citep{chen2025tetrajetv1} and then extending oscillation suppression and outlier control to NVFP4 LLM pretraining~\citep{chen2026tetrajetv2}, whose recipe we adopt for our FP4 experiments. Industrial technical reports such as DeepSeek-V3 and Qwen3 further indicate that low-precision training is becoming an important systems direction for modern LLM development~\citep{liu2024deepseek,yang2025qwen3}.

At the same time, another line of work studies why low-precision training fails and how to stabilize it.~\citet{wortsman2023stable} identified loss spikes associated with under-estimated AdamW second moments in large-scale low-precision training.~\citet{lee2024fp8} systematically quantified the degradation in LLM training stability caused by aggressive precision reduction, while~\citet{balancca2024scalify} formalized scale propagation as an end-to-end recipe for low-precision LLM training. Most recently,~\citet{qiu2026why} traced an important failure mode of low-precision Transformer training to FlashAttention~\citep{dao2022flashattention,daoICLR2024flashattention2,shah2024flashattention3}. In contrast to these works, our goal is not to design yet another stabilization recipe for AdamW, but to show that replacing AdamW with a sign-based optimizer can itself provide a simple and robust alternative in highly constrained low-precision regimes.

\section{Experimental Details}

\subsection{Mathematical Reasoning\label{sec:math-reasoning-cont}}

We present the omitted details in \cref{sec:math-reasoning}, including the training and evaluation setup for full-parameter finetuning as well as the accuracies of subcategories in the MATH dataset. We also supplement with LoRA finetuning results to support the main discovery from SFT experiments.

\subsubsection{Full Finetuning: Training and Evaluation Configurations\label{sec:sft-configuration}}

For SFT, we adopt a global batch size of 16, with 16 gradient accumulation steps. The max sequence length is set to 2048 according to the problem distributions in~\citet{numina_math_datasets}. We employ the standard cosine decay learning rate schedule~\citep{loshchilov2017sgdr}, in which the learning rate decays to 0 at the end of training. The optimizer configurations for \cref{tab:numinamath-results} are listed in \cref{tab:optimizer-hp-qwen}. All other settings remain at their \texttt{transformers==4.52.4}~\citep{wolf-etal-2020-transformers} defaults. To cut memory usage and accelerate training, we utilize the LMFlow toolbox~\citep{diao2024lmflow}.

\begin{table}[htbp]
\centering
\caption{Optimizer hyperparameters for finetuning Qwen2.5-7B\label{tab:optimizer-hp-qwen}}
\centering
\begin{tabular}{lccccc}
\toprule
\textbf{Optimizer} & $\eta$ & $\beta_1$ & $\beta_2$ & $\lambda$ & seed\\
\midrule
AdamW   & 1e-5 & 0.9 & 0.999 & 0.1 & 71\\
SignSGD & 3e-6 & 0.9 & -- & 0.1  & 71\\
StoSignSGD & 4e-6 & 0.9 & -- & 0.1 & 71\\
\bottomrule
\end{tabular}
\end{table}
For evaluation, we leverage the \texttt{lm-evaluation-harness} framework~\citep{eval-harness} and follow mostly from the configurations in~\citet{pan-etal-2025-scalebio}. Specifically, we set \texttt{max\_new\_tokens} to 2048 (the same as sequence length), temperature to 0.0, and \texttt{top\_p} to 1.0. The rest of the parameters remain at \texttt{lm\_eval==0.4.9.1} defaults. Inferences are run under a batch size of 32, and we report zero-shot accuracies.

Below, we also report the evaluation accuracy of seven problem subclasses of the MATH dataset in \cref{tab:math-subcategory-results}, which serves as a supplement to \cref{tab:numinamath-results}. \cref{tab:math-subcategory-results} shows that the improvement of StoSignSGD primarily stems from the improvement on relatively easy data, including algebra, prealgebra, and number\_theory, the top-3 categories of MATH in terms of accuracy. This crucial discovery aligns with the result in \cref{tab:numinamath-results}, where StoSignSGD exhibits a significant edge over SignSGD and AdamW on GSM8k, which consists of grad-school-level math questions. On the contrary, the MATH dataset consists of more challenging mathematics competition problems\footnote{According to~\citet{hendrycks2021math}, a computer science PhD student who does not especially like mathematics attained
approximately 40\% on MATH, while a three-time IMO gold medalist attained 90\%.}. In the sequel, we conduct low-rank adaptation finetuning of LLMs to support our discovery with more empirical evidence.

\begin{table}[htbp]
    \centering
    \caption{Zero-shot accuracy (\%) on GSM8k and MATH. Extended version of \cref{tab:numinamath-results}. ``$\pm$'' represents standard error automatically calculated by \texttt{lm\_eval}.}
    \label{tab:math-subcategory-results}
    \begin{tabular}{lccc}
        \toprule
        \textbf{Optimizer} & \textbf{AdamW} &\textbf{SignSGD} &  \textbf{StoSignSGD} \\
        \midrule
        GSM8k & 74.37 \(\pm\) 1.20 & 71.95 \(\pm\) 1.24 & \textbf{77.33 \(\pm\) 1.15} \\\midrule
        MATH (overall)    &48.86 \(\pm\) 0.66   & 47.66 \(\pm\) 0.66&\textbf{48.88 \(\pm\) 0.65}  \\\midrule
        algebra         & 70.60 $\pm$ 1.32 & 68.91 $\pm$ 1.34 & \textbf{71.19 $\pm$ 1.32}  \\
        prealgebra  & 65.33 $\pm$ 1.61 & 63.49 $\pm$ 1.63 & \textbf{65.44 $\pm$ 1.61}  \\
        num\_theory & 54.81 $\pm$ 2.14 & 54.44 $\pm$ 2.15 & \textbf{57.59 $\pm$ 2.13} \\
        counting\_and\_prob     & \textbf{40.51 $\pm$ 2.26} & 40.08 $\pm$ 2.25 & 40.08 $\pm$ 2.25 \\
        math\_geometry    & \textbf{35.70 $\pm$ 2.19} & 32.78 $\pm$ 2.15 & 35.49 $\pm$ 2.19 \\
        intermediate\_algebra    & \textbf{29.13 $\pm$ 1.51} & 28.02 $\pm$ 1.50 & 27.35 $\pm$ 1.48 \\
        precalc & 20.88 $\pm$ 1.74 & \textbf{21.67 $\pm$ 1.76}& 20.33 $\pm$ 1.72\\
        \bottomrule
    \end{tabular}
\end{table}

\subsubsection{LoRA Finetuning: StoSignSGD Improves Sample Efficiency on Basic Problems\label{sec:lora}}

As a typical representative of parameter-efficient fine-tuning (PEFT) methods, low-rank adaptation (LoRA)~\citep{hu2022lora} has become a standard practice in finetuning LLMs~\citep{zheng-etal-2024-llamafactory,diao2024lmflow}. Following the full-parameter finetuning experiments in \cref{sec:math-reasoning}, we conduct the corresponding LoRA finetuning experiments to broaden the applicability of StoSignSGD. In the meantime, we present more empirical evidence to support that StoSignSGD can better enhance the model's mathematical reasoning ability for foundational problems.

We conducted the LoRA experiments on a single A100 GPU, largely retaining the training, evaluation, and dataset configurations from the SFT setup detailed in \cref{sec:sft-configuration}. Due to hardware constraints, the global batch size was restricted to 1, with gradient accumulation steps also set to 1 to accelerate training. Notably, this intentionally establishes a challenging, small-batch regime that is known to be unfavorable for sign-based optimizers like SignSGD and StoSignSGD~\citep{bernstein2018signsgd,wen2026fantastic}. For evaluation, we utilized a batch size of 16 and a maximum generation length of 1024 tokens. Regarding the LoRA-specific configurations, we configured the low-rank matrices with a rank ($r$) of 256, a scaling factor ($\alpha$) of 256, and a dropout probability of 0.1. The final hyperparameters for each optimizer are listed in \cref{tab:optimizer-hp-qwen-lora}, which were determined via a grid search over the learning rate $\eta \in \{10^{-4}, 10^{-5}, 10^{-6}, 10^{-7}\}$, and momentum parameters $\beta_1 \in \{0.9, 0.95, 0.99\}$ and $\beta_2 \in \{0.95, 0.99, 0.999\}$.

\begin{table}[htbp]
\centering
\caption{Optimizer hyperparameters for LoRA finetuning Qwen2.5-7B\label{tab:optimizer-hp-qwen-lora}}
\centering
\begin{tabular}{lccccc}
\toprule
\textbf{Optimizer} & $\eta$ & $\beta_1$ & $\beta_2$ & $\lambda$ & seed\\
\midrule
AdamW   & 1e-5 & 0.9 & 0.999 & 0.1 & 43\\
SignSGD & 1e-6 & 0.9 & -- & 0.1  & 43\\
StoSignSGD & 1e-5 & 0.9 & -- & 0.1 & 43\\
\bottomrule
\end{tabular}
\end{table}

We present the zero-shot evaluation results in \cref{tab:lora} and highlight three key takeaways. First, \textbf{StoSignSGD yields performance improvements of 2.5\% on GSM8k and 0.5\% on MATH}, confirming its continued efficacy in parameter-efficient fine-tuning. Second, \textbf{StoSignSGD exhibits strong resilience to small batch sizes}, a setting where standard SignSGD suffers significant degradation. This confirms that StoSignSGD effectively mitigates a critical vulnerability of traditional sign-based methods. Finally, \textbf{the performance boosts are most pronounced in foundational mathematical tasks} (including GSM8k and the algebra, prealgebra, number theory, and counting/probability subsets of MATH), suggesting that StoSignSGD facilitates the learning of fundamental mathematical concepts with greater sample efficiency.

\begin{table}[htbp]
    \centering
    \caption{Zero-shot accuracy (\%) on GSM8k and MATH of Qwen2.5-7B trained by LoRA.}
    \label{tab:lora}
    \begin{tabular}{lccc}
        \toprule
        \textbf{Optimizer} & \textbf{AdamW} &\textbf{SignSGD} &  \textbf{StoSignSGD} \\
        \midrule      
        GSM8k & 60.05 \(\pm\) 1.35 & 48.07 \(\pm\) 1.38 & \textbf{62.55 \(\pm\) 1.33} \\\midrule
        MATH (overall)         & 48.08 $\pm$ 0.66 & 47.44 $\pm$ 0.66 & \textbf{48.60 $\pm$ 0.66} \\
        \midrule
        algebra                & 69.84 $\pm$ 1.33 & 69.17 $\pm$ 1.34 & \textbf{70.85 $\pm$ 1.32} \\
        prealgebra             & 63.15 $\pm$ 1.64 & 61.77 $\pm$ 1.65 & \textbf{63.38 $\pm$ 1.63} \\
        num\_theory            & 54.26 $\pm$ 2.15 & 56.11 $\pm$ 2.14 & \textbf{56.67 $\pm$ 2.13} \\
        counting\_and\_prob    & 39.66 $\pm$ 2.25 & 36.50 $\pm$ 2.21 & \textbf{40.51 $\pm$ 2.26} \\
        math\_geometry         & 34.45 $\pm$ 2.17 & \textbf{34.66 $\pm$ 2.18} & 33.40 $\pm$ 2.16 \\
        intermediate\_algebra  & \textbf{28.79 $\pm$ 1.51} & 27.35 $\pm$ 1.48 & \textbf{28.79 $\pm$ 1.51} \\
        precalc                & 21.79 $\pm$ 1.77 & \textbf{22.71 $\pm$ 1.79} & 21.79 $\pm$ 1.77 \\
        \bottomrule
    \end{tabular}
\end{table}

\subsubsection{A Closer Look into Momentum}
To back up our findings in \cref{sec:lora} and provide a mechanistic explanation for why StoSignSGD reduces the sample complexity required to learn fundamental concepts, we perform a simple empirical study on momentum and conclude that it is the \textbf{amplified momentum tails that reflect the acquisition of ubiquitous foundational skills}. 

In the context of deep learning, fundamental knowledge and core concepts correspond to structural patterns that appear with high frequency and ubiquity across the training data distribution~\citep{arpit2017closer, shah2020pitfalls}. Because these highly recurrent patterns are present in nearly every mini-batch, their corresponding stochastic gradients consistently point in the same optimization direction. The momentum term, acting as an exponential moving average, constructively accumulates these consistent gradient signals while destructively canceling out the fluctuating, zero-mean noise of batch-specific or rare features~\citep{sutskever2013momentum}. Therefore, the successful acquisition of highly recurrent, foundational "skills" will naturally manifest as a significant amplification of momentum magnitudes along specific coordinate axes.

\begin{figure}[htbp]
    \centering
    \vspace{-10pt}
    \includegraphics[width=\linewidth]{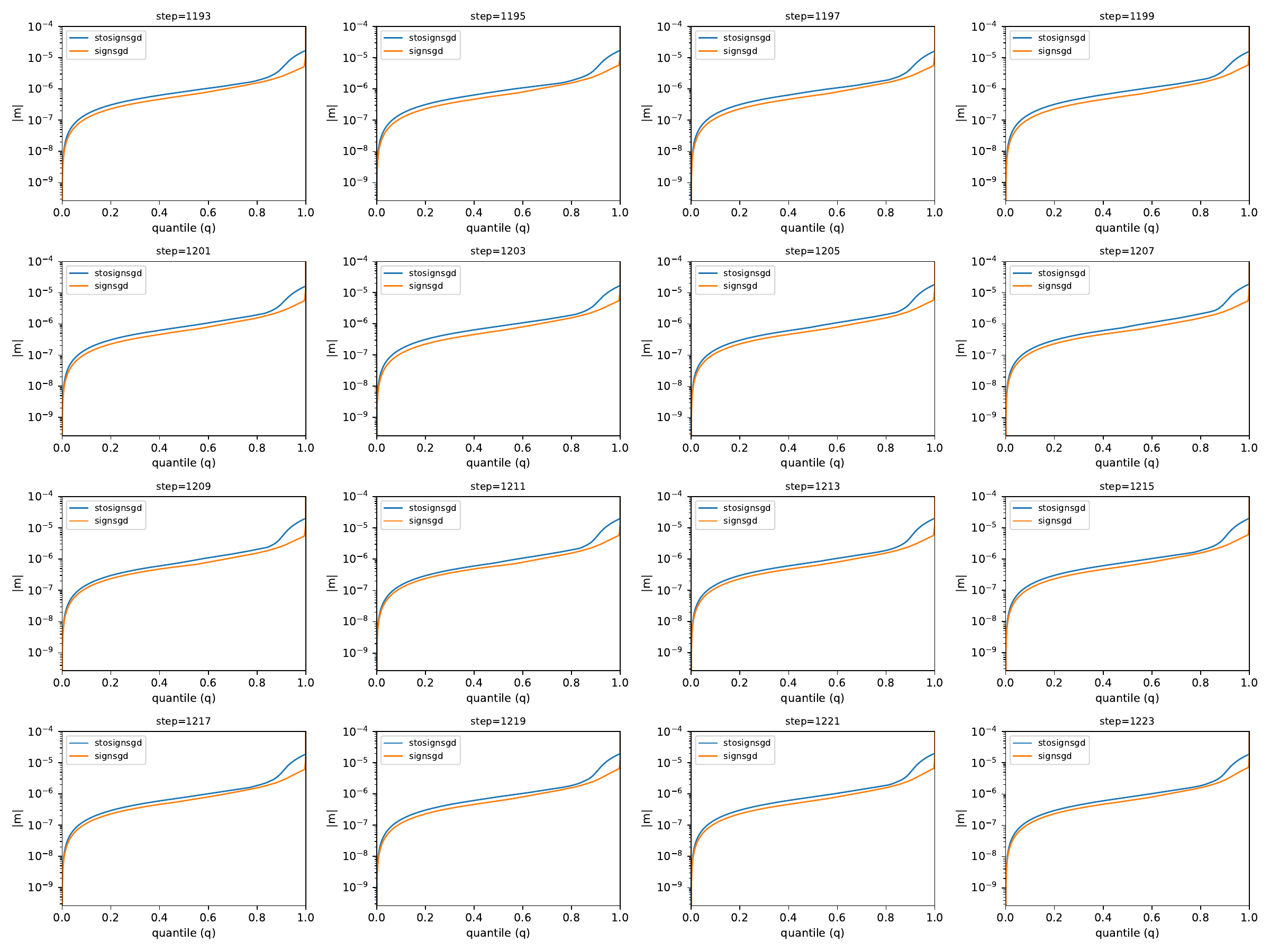}
    \vspace{-20pt}
    \caption{Momentum ($\abs{\m_t}$) distribution at the end of training.\label{fig:momentum}}
\end{figure}

To verify this dynamic, we follow the setup detailed in \cref{sec:dessect-sign-noise} and analyze the momentum distributions during the GPT-2 training trajectory on the Alpaca dataset. We isolate 16 timestamps near the conclusion of training (totaling 1,224 steps) and compute the coordinate-wise absolute values of the momentum vector, $|\m_t|$. Treating each dimension as an individual data point, we construct a value-quantile plot. Comparing StoSignSGD against the standard SignSGD baseline, the results in \cref{fig:momentum} consistently demonstrate that StoSignSGD yields a momentum distribution with significantly larger magnitudes, particularly in the highest echelon ($>0.9$ quantile). This pronounced surge in high-quantile momentum directly indicates that StoSignSGD more effectively locks onto and amplifies the consistent gradient directions associated with ubiquitous data patterns. This geometric behavior perfectly corroborates our empirical findings: by confidently amplifying the most reliable gradient signals, StoSignSGD naturally excels at isolating and reinforcing the foundational rules required for core mathematical reasoning. Ultimately, this momentum-driven amplification serves as a tangible metric for the model's structural learning. Because StoSignSGD successfully shields these consistent, high-frequency signals from being disrupted by batch-specific stochasticity, it locks onto fundamental features much faster than standard methods. This geometric stability provides a direct explanation for our empirical observations: by efficiently isolating and reinforcing these core rules, StoSignSGD achieves superior sample efficiency on tasks that strictly require robust foundational knowledge, such as the mathematical reasoning benchmarks evaluated in GSM8k and MATH.

\subsection{Low-precision Pretraining in FP8\label{sec:fp8-cont}}

\paragraph{Experimental Setup}
Our codebase is adapted from~\citet{qiu2026why}, where we follow the nanoGPT~\citep{nanogpt} implementation (124M parameters) of GPT-2 architecture on the OpenWebText dataset~\citep{Gokaslan2019OpenWeb}, where we use GeLU activations~\citep{hendrycks2016gelu}, disable bias, and set zero Dropout~\citep{JMLR:v15:srivastava14dropout}. The training and validation sets comprise approximately 9B and 4.4B tokens, respectively, and have been preprocessed using the GPT-2 tokenizer. The general hyperparameter configurations are listed in \cref{tab:general-hp-gpt2}. We adopt fewer learning-rate warmup steps and a smaller minimum learning rate than~\citet{hoffmann2022chinchilla} since we use sign-based methods. We train on a total of 13.1B tokens, which corresponds to a 5.3x Chinchilla-optimal ratio~\citep{hoffmann2022chinchilla}. This relatively long training run is usually enough to capture the dynamics between different optimization algorithms~\citep{an2025asgo}. For the optimizer configurations in \cref{tab:optimizer-hp-gpt2}, we set the ratio of the minimum learning rate to the peak learning rate to 1\%, and perform a grid search over $\cbrac{1e-3,5e-4,1e-4,5e-5,1e-5}$. We search $\beta_1$ over $\cbrac{0.9,0.95,0.99}$ and $\beta_2$ over $\cbrac{0.95,0.99,0.999}$. For the FP8 configurations, we use \texttt{torchao}~\citep{or2025torchao} to replace all \texttt{nn.Linear} layers with \texttt{Float8Linear} layers, where scaling is handled automatically. Master weights and non-Linear operations stay in relatively high precision (we use BF16 following common practice~\citep{an2025asgo,qiu2026why}). We then convert all optimizer states into FP8 following~\citet{NEURIPS2019hybrid_fp8,noune20228bit,micikevicius2022fp8}. The gradients are first cast into FP8\_E5M2 before the optimizer sees them, and the remaining momentum buffers are then cast into FP8\_E4M3.

\begin{table}[t]
\centering
\caption{General hyperparameters for pretraining GPT-2\label{tab:general-hp-gpt2}}
\centering
\begin{tabular}{lc}
\toprule
\textbf{Hyperparameter} & Value \\
\midrule
Gradient clipping & 1.0 \\
Learning rate schedule & cosine \\
Warmup steps & 2000 \\
Total steps   & 50000 \\
Sequence length & 1024 \\
Micro-batch size & 16 \\
Number of GPUs & 8 \\
Gradient accumulation steps & 16 \\
Per step tokens & 262144 \\
Total tokens & 13.1B \\
Evaluation interval & 1000 \\
\bottomrule
\end{tabular}
\end{table}

\begin{table}[t]
\centering
\caption{Optimizer hyperparameters for pretraining GPT-2. Muon uses Nesterov momentum with coefficient $0.95$ and, following the AdamW ablation, is additionally swept over a larger weight decay ($\lambda=0.5$) and a larger batch size; it nonetheless diverges to NaN in all settings.\label{tab:optimizer-hp-gpt2}\tablefootnote{We present the configurations for the divergent AdamW and Muon because these hyperparameters determine when they encounter numerical crash. Smaller learning rates will \emph{defer} the occurrence of NaN, but can not rescue it.}}
\centering
\begin{tabular}{lcccccc}
\toprule
\textbf{Optimizer} & $\eta_\text{peak}$ &$\eta_\text{min}$ & $\beta_1$ & $\beta_2$ & $\lambda$ & seed\\
\midrule
AdamW   & 1e-4 & 1e-6 & 0.9 & 0.95 & 0.1 & 61\\
Muon   & 1e-4 & 1e-6 & 0.95 & -- & 0.1 & 61\\
Lion   & 1e-4 & 1e-6 & 0.9 & 0.95 & 0.1 & 61\\
SignSGD & 1e-4 & 1e-6 & 0.9 & -- & 0.1  & 61\\
StoSignSGD & 1e-4 & 1e-6 & 0.9 & -- & 0.1 & 61\\
\bottomrule
\end{tabular}
\end{table}

\begin{table}[htbp]
\centering
\caption{AdamW optimizer state collapse under FP8\_E4M3 quantization at 103rd step. }
\label{tab:adamw_collapse}
\begin{tabular}{lcccc}
\toprule
\textbf{Layer Name} & \textbf{$\v_t$ Zeros (\%)} & \textbf{$\m_t$ Zeros (\%)} & \textbf{Max $|\g_t|$} & \textbf{Max $\g_t\odot\g_t$} \\
\midrule
Token Embedding     & 100.0 & 100.0 & 0.0023 & $5.29 \times 10^{-6}$ \\
Position Embedding  & 100.0 & 100.0 & 0.0004 & $1.60 \times 10^{-7}$ \\
L0: LayerNorm 1     & 100.0 & 100.0 & 0.0002 & $4.00 \times 10^{-8}$ \\
L0: Attn QKV        & 100.0 & 100.0 & 0.0009 & $8.10 \times 10^{-7}$ \\
L0: Attn Proj       & 100.0 & 100.0 & 0.0018 & $3.24 \times 10^{-6}$ \\
L0: LayerNorm 2     & 100.0 & 100.0 & 0.0001 & $1.00 \times 10^{-8}$ \\
L0: MLP FC1         & 100.0 & 100.0 & 0.0003 & $9.00 \times 10^{-8}$ \\
L0: MLP Proj        & 100.0 & 100.0 & 0.0024 & $5.76 \times 10^{-6}$ \\
L1: LayerNorm 1     & 100.0 & 100.0 & 0.0002 & $4.00 \times 10^{-8}$ \\
L1: Attn QKV        & 100.0 & 100.0 & 0.0005 & $2.50 \times 10^{-7}$ \\
L1: Attn Proj       & 100.0 & 100.0 & 0.0039 & $1.52 \times 10^{-5}$ \\
\bottomrule
\end{tabular}
\end{table}

\paragraph{Why AdamW fails in FP8 regimes?}
We locate the failure sample at iteration 103 and inspect the gradients across various layers in the model, where the visualization sits in \cref{fig:adamw_fp8_collapse}. \cref{tab:adamw_collapse} further presents the detailed optimizer states at the 103rd step. For all observed layers, $100\%$ of the variance ($\v_t$) and momentum ($\m_t$) buffer values underflowed to exactly zero. Because the squared gradients $\g_t\odot\g_t$ are strictly smaller than the E4M3 minimum representable subnormal ($\approx 1.95 \times 10^{-3}$), the updates are flushed to zero, causing catastrophic division-by-zero during the parameter update of AdamW. In contrast to the second-order momentum update of AdamW, $\v_t=\beta_2\v_{t-1}+(1-\beta_2)\g_t\odot\g_t$, StoSignSGD only extracts the sign and tracks the maximum absolute gradient magnitude. Since it avoids squaring, it also avoids this underflow cliff. Finally, we note that tuning down the learning rate only \emph{slows down} the loss spike rather than \emph{fixing} it. Setting $\eta_\text{peak}=5e-4$ advances the failure to appear at around step 50. Further tuning down the learning rate is meaningless since the current sign-aligned $\eta_\text{peak},\eta_\text{min}$ is already unfavorable for AdamW~\citep{liu2025muon,wen2026fantastic,liang2026cautious}. Beyond tuning the learning rate, we also further ablate on batch size and weight decay. Specifically, we additionally add a sweep over $\lambda\in\cbrac{0.1,0.5}$ and $\mathtt{global\_batch\_size}\in\cbrac{256,4096}$, while the rest configurations stay the same as in \cref{tab:general-hp-gpt2,tab:optimizer-hp-gpt2}. The divergence case remains similar, and there is no way to ``rescue'' AdamW unless extra stabilization tricks are applied.

\subsection{Low-precision Pretraining in FP4\label{sec:fp4-cont}}

\paragraph{Experimental Setup}
We adopt the OLMo2 architecture and training stack~\citep{olmo2024olmo2} and study three model scales, OLMo2-70M, -150M, and -370M, whose configurations are summarized in \cref{tab:fp4-arch}. All models are decoder-only Transformers with rotary position embeddings, SwiGLU feed-forward layers, and RMSNorm applied after each sublayer (the OLMo2 \emph{norm-after} block), with untied input/output embeddings, a sequence length of $4096$, and the OLMo2 (Dolma2) tokenizer with a vocabulary of $100{,}278$ (padded to $100{,}352$). Their total parameter counts, including embeddings, are \textbf{136M}, \textbf{267M}, and \textbf{474M}, respectively. We pretrain on the official OLMo2 pretraining mixture, \texttt{OLMo-mix-1124}~\citep{olmo2024olmo2},\footnote{Publicly available at \url{https://huggingface.co/datasets/allenai/olmo-mix-1124}.} a Dolma-based blend of DCLM, StarCoder, proof-pile-2, peS2o, and Wikipedia, preprocessed with the Dolma2 tokenizer.

Our FP4 pipeline follows the NVFP4 recipe of TetraJet-v2~\citep{chen2026tetrajetv2}. All \texttt{nn.Linear} layers are trained with NVFP4 GEMMs using double-block quantization (an inner block of $16$ with FP8 E4M3 scales and an outer block of $128$ with an FP32 scale), a random Hadamard transform in the backward pass, and stochastic rounding. As in the FP8 study, we further push both the gradients and the optimizer states into FP4, the most aggressive (\emph{radical}) configuration. Gradients are clipped and then rounded to NVFP4 (with a gradient scale of $1024$ and stochastic rounding) before the optimizer consumes them, and every persistent optimizer buffer is stored as an NVFP4 pack, dequantized to FP32 for the update math and repacked afterwards. Master weights are kept in higher precision on the AMP-BF16, FSDP training path~\citep{micikevicius2018mixed}, and no optimizer-specific stabilization (e.g., a variance floor for AdamW) is applied. This setup is much harsher than reported FP4-pretraining practice~\citep{castro2025quartet_fp4,abecassis2025pretraining_fp4}, which keeps gradients and optimizer states in higher precision.

\begin{table}[t]
\centering
\caption{Model architectures for FP4 pretraining (OLMo2 series).\label{tab:fp4-arch}}
\begin{tabular}{lccc}
\toprule
\textbf{Hyperparameter} & \textbf{OLMo2-70M} & \textbf{OLMo2-150M} & \textbf{OLMo2-370M} \\
\midrule
Total parameters       & 136M & 267M & 474M \\
Hidden size $d_\text{model}$ & 512 & 768 & 1024 \\
Number of layers       & 8 & 12 & 16 \\
Number of heads        & 8 & 12 & 16 \\
FFN hidden size        & 2048 & 3072 & 8192 \\
Activation             & \multicolumn{3}{c}{SwiGLU} \\
Normalization          & \multicolumn{3}{c}{RMSNorm (norm-after)} \\
Position embedding     & \multicolumn{3}{c}{RoPE ($\theta=5\times10^{5}$)} \\
Sequence length        & \multicolumn{3}{c}{4096} \\
Vocabulary size        & \multicolumn{3}{c}{100{,}278} \\
\bottomrule
\end{tabular}
\end{table}

\begin{table}[t]
\centering
\caption{General training hyperparameters for radical FP4 pretraining.\label{tab:general-hp-olmo}}
\begin{tabular}{lccc}
\toprule
\textbf{Hyperparameter} & \textbf{OLMo2-70M} & \textbf{OLMo2-150M} & \textbf{OLMo2-370M} \\
\midrule
Global batch size (seq.)     & 1024 & 1024 & 1024 \\
Device micro-batch size      & 4 & 8 & 8 \\
Sequence length              & 4096 & 4096 & 4096 \\
Tokens per step              & 4.19M & 4.19M & 4.19M \\
Training steps               & 1{,}000 & 2{,}200 & 5{,}000 \\
Total training tokens        & 4.19B & 9.23B & 20.97B \\
Warmup tokens ($\approx 2\%$) & 83.9M & 184.5M & 419.4M \\
LR schedule                  & \multicolumn{3}{c}{cosine (token units)} \\
Min./peak LR ratio $\alpha_f$ & \multicolumn{3}{c}{0.1} \\
Gradient clipping            & \multicolumn{3}{c}{1.0} \\
Master-weight precision      & \multicolumn{3}{c}{AMP BF16} \\
FP4 gradient scale           & \multicolumn{3}{c}{1024} \\
Stochastic rounding          & \multicolumn{3}{c}{true} \\
\bottomrule
\end{tabular}
\end{table}

The general training hyperparameters are listed in \cref{tab:general-hp-olmo}. We use a global batch size of $1024$ sequences (i.e., $\approx 4.19$M tokens per step), gradient clipping at $1.0$, and a cosine learning-rate schedule (in token units) that warms up over the first $2\%$ of training and decays to $10\%$ of the peak value ($\alpha_f=0.1$). The three scales are trained for $1{,}000$, $2{,}200$, and $5{,}000$ steps, which correspond to $4.19$B, $9.23$B, and $20.97$B tokens and go well beyond the Chinchilla-optimal budget~\citep{hoffmann2022chinchilla}. We compare StoSignSGD against Lion~\citep{chen2023symbolic}, AdamW~\citep{loshchilov2019adamw} and Muon~\citep{jordan2024muon}. For StoSignSGD we use the practical StoSignSGDv2 variant with a damped max-buffer (\cref{alg:stosignsgdv2}), whose damping factor $\beta_2<1$ fits these long-horizon low-precision runs. We fix $\beta_1=0.9$ and weight decay $\lambda=0.1$, and grid-search the peak learning rate over $\cbrac{8\text{e-}4, 9\text{e-}4, 1\text{e-}3, 2\text{e-}3}$ and $\beta_2$ over $\cbrac{0.9, 0.99, 0.995, 0.999}$. The selected configurations are reported in \cref{tab:optimizer-hp-olmo}.\footnote{We intentionally deviate from the default $(\beta_1,\beta_2)=(0.9,0.99)$ used in the codebase of~\citet{chen2026tetrajetv2}. We find that equal momentum coefficients consistently work better than the unequal default pair, though we do not yet have a clear explanation.} Downstream performance is evaluated zero-shot with the OLMo2 in-loop evaluation suite~\citep{olmo2024olmo2,eval-harness} following common practice~\citep{pan2025unbiased,chen2026tetrajetv2}. We report accuracy on seven standard commonsense and reasoning benchmarks, namely ARC-Easy and ARC-Challenge~\citep{clark2018arc}, OpenBookQA~\citep{mihaylov2018openbookqa}, HellaSwag~\citep{zellers2019hellaswag}, PIQA~\citep{bisk2020piqa}, Social IQa~\citep{sap2019socialiqa}, and BoolQ~\citep{clark2019boolq}, as summarized in \cref{tab:fp4}. Training loss and perplexity are reported in \cref{tab:fp4-train}. At every scale, AdamW diverges to NaN within the first few hundred steps because its second-moment buffer $\v_t$ underflows to zero under FP4, the same mechanism diagnosed for FP8 in \cref{fig:adamw_fp8_collapse}. The same catastrophic divergence also happens for Muon, and a smaller learning rate only slows down the occurrence of NaN but never avoids it. In contrast, StoSignSGD and Lion train stably, and StoSignSGD reaches the lowest training loss and perplexity and the highest average downstream accuracy at all three scales, with the gap over Lion largest at the biggest model.

\paragraph{Why does Muon also fail in low precision?}
Beyond AdamW, we also evaluate Muon~\citep{jordan2024muon,liu2025muon}, which orthogonalizes the momentum update via a Newton--Schulz iteration and falls back to AuxAdam for the remaining non-matrix parameters (with the Adam learning rate set to $0.1\times$ the Muon rate). For FP8, we use essentially the AdamW configuration but with Nesterov momentum $0.95$, and we also test a larger weight decay and a larger batch size. For FP4 we follow the AdamW grid and additionally try an extremely small $\eta_\text{peak}=1e-4$. This last setting is worth checking because Muon usually prefers a learning rate about $10\times$ larger than AdamW, so it stresses the opposite extreme. Across every model size the result is the same as AdamW. Muon diverges to NaN within the first hundred or so steps, and larger learning rates diverge faster. We attribute this failure to two reasons. First, Muon is coupled with Adam for non-matrix parameters, so it inherits the same variance-buffer underflow that destabilizes AdamW under FP4 and FP8. Second, the Newton--Schulz orthogonalization is very sensitive to numerical precision. When the optimizer states, and hence the momentum matrix being orthogonalized, are already stored in low precision, the iteration cannot recover well-orthogonalized update directions, and the error builds up and quickly drives training to NaN. StoSignSGD avoids both problems, since it neither keeps a squared-gradient buffer nor needs any orthogonalization.

\begin{table}[t]
\centering
\caption{Selected optimizer hyperparameters for radical FP4 pretraining. AdamW and Muon diverge to
NaN at every scale regardless of the learning rate; we list the learning rate matched to the
sign-based methods for completeness. Muon additionally sweeps an extremely small $\eta_\text{peak}=1$e-$4$
and uses AuxAdam (Adam learning rate set to $0.1\times$ the Muon rate) for non-matrix parameters.\label{tab:optimizer-hp-olmo}}
\begin{tabular}{llcccc}
\toprule
\textbf{Model} & \textbf{Optimizer} & $\eta_\text{peak}$ & $\beta_1$ & $\beta_2$ & $\lambda$ \\
\midrule
\multirow{4}{*}{OLMo2-70M}
 & AdamW      & 1e-3 & 0.9 & 0.95 & 0.1 \\
 & Muon       & 1e-2 & 0.95 & -- & 0.1 \\
 & Lion       & 1e-3 & 0.9 & 0.9  & 0.1 \\
 & \cellcolor{yellow!20}StoSignSGD & \cellcolor{yellow!20}1e-3 & \cellcolor{yellow!20}0.9 & \cellcolor{yellow!20}0.995 & \cellcolor{yellow!20}0.1 \\
\midrule
\multirow{4}{*}{OLMo2-150M}
 & AdamW      & 1e-3 & 0.9 & 0.95 & 0.1 \\
 & Muon       & 1e-2 & 0.95 & -- & 0.1 \\
 & Lion       & 1e-3 & 0.9 & 0.9  & 0.1 \\
 & \cellcolor{yellow!20}StoSignSGD & \cellcolor{yellow!20}8e-4 & \cellcolor{yellow!20}0.9 & \cellcolor{yellow!20}0.995 & \cellcolor{yellow!20}0.1 \\
\midrule
\multirow{4}{*}{OLMo2-370M}
 & AdamW      & 8e-4 & 0.9 & 0.95 & 0.1 \\
 & Muon       & 1e-3 & 0.95 & -- & 0.1 \\
 & Lion       & 8e-4 & 0.9 & 0.9  & 0.1 \\
 & \cellcolor{yellow!20}StoSignSGD & \cellcolor{yellow!20}8e-4 & \cellcolor{yellow!20}0.9 & \cellcolor{yellow!20}0.99 & \cellcolor{yellow!20}0.1 \\
\bottomrule
\end{tabular}
\end{table}

\begin{table}[ht]
    \centering
    \caption{Training loss and perplexity for radical FP4 pretraining.
    NaN denotes a failed or unavailable run.\label{tab:fp4-train}}
    \vspace{-5pt}
    \begin{tabular}{llcc}
        \toprule
        \textbf{Model}
        & \textbf{Method}
        & \textbf{Train Loss}
        & \textbf{Train PPL} \\
        \midrule

        \multirow{4}{*}{OLMo2-70M}
        & AdamW
        & NaN
        & NaN \\

        & Muon
        & NaN
        & NaN \\

        & Lion
        & 4.254
        & 70.36 \\

        & \cellcolor{yellow!20}\textbf{StoSignSGD}
        & \cellcolor{yellow!20}\textbf{4.211}
        & \cellcolor{yellow!20}\textbf{67.44} \\

        \midrule

        \multirow{4}{*}{OLMo2-150M}
        & AdamW
        & NaN
        & NaN \\

        & Muon
        & NaN
        & NaN \\

        & Lion
        & 3.671
        & 39.30 \\

        & \cellcolor{yellow!20}\textbf{StoSignSGD}
        & \cellcolor{yellow!20}\textbf{3.652}
        & \cellcolor{yellow!20}\textbf{38.54} \\

        \midrule

        \multirow{4}{*}{OLMo2-370M}
        & AdamW
        & NaN
        & NaN \\

        & Muon
        & NaN
        & NaN \\

        & Lion
        & 3.181
        & 24.07 \\

        & \cellcolor{yellow!20}\textbf{StoSignSGD}
        & \cellcolor{yellow!20}\textbf{3.177}
        & \cellcolor{yellow!20}\textbf{23.98} \\

        \bottomrule
    \end{tabular}
    \vspace{-10pt}
\end{table}

\subsection{Dessecting Sign Structural Stochasticity\label{sec:dessect-sign-noise}}

In this section, we present the omitted details in \cref{sec:demystify-structural}. 

\subsubsection{Unbiased Property of Sign Conversion}

First, we provide the following theoretical guarantee for the unbiased sign conversion framework.

\begin{prop}\label{prop:unbiased-sign-conversion}
    For \cref{alg:sign-conversion}, if $\bsigma_t\succeq\m_t$, then the \textcolor{blue}{sign-converted update} equals the \textcolor{gray}{general optimizer update} in the sense that
    \begin{align*}
        \E\sqbrac{\textcolor{blue}{\sign{\m_t+\bsigma_t\odot\n_t}}}=\textcolor{gray}{\m_t/\bsigma_t}.
    \end{align*}
\end{prop}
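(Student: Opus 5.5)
The plan is to reduce the statement to \cref{prop:unbiased-sign}, with a conditioning step to handle the fact that $\m_t$ and $\bsigma_t$ are random. I read the hypothesis $\bsigma_t\succeq\m_t$ the way \cref{def:stochastic-sign} requires, namely $\abs{m_{t,i}}\le\sigma_{t,i}$ for every $i$, with $\sigma_{t,i}>0$ so that the ratio is defined. Under this reading, $\sign{\m_t+\bsigma_t\odot\n_t}$ is exactly $\SS_{\bsigma_t}(\m_t)$ from \cref{def:stochastic-sign}. This holds because $\n_t\sim\unif\brac{[-1,1]^d}$ is freshly sampled at line 8 of \cref{alg:sign-conversion}.

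The first step is to condition on the history $\F_t$, generated by the stochastic gradients $\g_1,\dots,\g_t$ and the earlier noises $\n_1,\dots,\n_{t-1}$. Both $\m_t$ and $\bsigma_t$ are $\F_t$-measurable, and $\n_t$ is independent of $\F_t$. So the expectation in the statement is taken over $\n_t$ with $\m_t$ and $\bsigma_t$ held fixed.

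The second step is the coordinatewise computation. For each $i$, $m_{t,i}+\sigma_{t,i}n_{t,i}$ is uniform on $[m_{t,i}-\sigma_{t,i},\,m_{t,i}+\sigma_{t,i}]$. This interval lies in $\R$ and contains $0$ by the admissibility condition. Hence
\[
\Pr\brac{\sign{m_{t,i}+\sigma_{t,i}n_{t,i}}=\pm1}=\frac{\sigma_{t,i}\pm m_{t,i}}{2\sigma_{t,i}}.
\]
The event where the argument equals zero has probability zero, so the value of $\sgn(0)$ is irrelevant. Taking the expectation gives $m_{t,i}/\sigma_{t,i}$, which is \cref{prop:unbiased-sign} applied with $\x=\m_t$ and $\Gb=\bsigma_t$. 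Stacking the coordinates gives the conditional identity $\E\sqbrac{\sign{\m_t+\bsigma_t\odot\n_t}\mid\F_t}=\m_t/\bsigma_t$, which is the claim.

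The only real obstacle is interpretive rather than computational. The relation $\succeq$ must be taken in absolute value, and the zero denominators must be handled. If $\sigma_{t,i}=0$, then $m_{t,i}=0$ as well, and I would adopt the convention $0/0=0$. This matches $\sign{0}=0$ deterministically, so the identity still holds in that coordinate.
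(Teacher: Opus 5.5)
Your proposal is correct and matches the paper's proof, which is the one-line observation that the claim is \cref{prop:unbiased-sign} applied with $\x=\m_t$ and $\Gb=\bsigma_t$ over the fresh noise $\n_t$. Your additions make explicit what the paper leaves implicit: conditioning on the history so that $\m_t/\bsigma_t$ is fixed, reading $\bsigma_t\succeq\m_t$ as $\abs{m_{t,i}}\le\sigma_{t,i}$ as \cref{def:stochastic-sign} requires (the literal componentwise reading would make the claim false, e.g.\ $m=-2$, $\sigma=1$), and the $\sigma_{t,i}=0$ convention.
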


\begin{proof}
    Once we have $\bsigma_t\succeq\m_t$, then \cref{prop:unbiased-sign-conversion} is a direct instantiation of \cref{prop:unbiased-sign}.
\end{proof}
The condition $\bsigma_t\succeq\m_t$ can be easily satisfied by most optimizers in practice. Our empirical demonstration in \cref{fig:rms-norm} also confirms this point.

\subsubsection{Experimental Details}

To evaluate the optimizers detailed in \cref{sec:supporting-algorithms}, we establish two distinct experimental setups: instruction following and mathematical reasoning. Their configurations are elaborated below.

\paragraph{Instruction Following} 
For the results reported in \cref{tab:alpaca}, we finetune a diverse suite of small-scale LLMs (0.1B to 2.7B parameters) on the Alpaca dataset~\citep{taori2023alpaca}. The selected models include Qwen2.5-0.5B~\citep{yang2024qwen2.5}, TinyLLaMA~\citep{zhang2024tinyllama}, Llama-3.2-1B~\citep{meta2024llama3.2_1b}, GPT-2~\citep{radford2019language}, Gemma-3-1B~\citep{kamath2025gemma3}, and Phi-2~\citep{javaheripi2023phi2}. Following the training protocols recommended by~\citet{taori2023alpaca,liu2025adagrad}, we train each model for 3 epochs. We set the maximum sequence length to 512, with a global batch size of 256 and gradient accumulation steps set to 1.

\paragraph{Mathematical Reasoning} 
For this task, we scale up to three popular 7B+ parameter LLMs: Qwen2.5-7B~\citep{yang2024qwen2.5}, Llama-3.1-8B~\citep{grattafiori2024llama}, and Mistral-7B-v0.1~\citep{jiang2023mistral}. These models are finetuned on the training sets of GSM8k~\citep{cobbe2021gsm8k} and MathQA~\citep{amini-etal-2019-mathqa}, and subsequently evaluated on their respective test sets. In line with standard practice~\citep{pan2024lisa,pan-etal-2025-scalebio}, the models are finetuned for 1 epoch using a training batch size of 128 and an evaluation batch size of 64.

All remaining hyperparameters across both experimental setups adhere to the default configurations provided by~\citet{diao2024lmflow,eval-harness}. For optimizer-specific tuning, we employ a coordinate descent approach to systematically sweep the learning rates~\citep{wen2026fantastic}. To ensure a fair comparison, the momentum parameters ($\beta_1$ and $\beta_2$) for the sign-converted methods (StoSignSGD, SignAdamW, and SignAdaMax) are configured to strictly match the optimal values found for their respective base optimizers (IE-StoSignSGD, AdamW, and AdaMax). Finally, a uniform weight decay of 0.1 is applied across all evaluated optimizers.

\subsubsection{Evolution of the Signal-to-Noise Ratio}
To empirically validate our variance analysis, we track the signal-to-noise ratio, defined as $\r_t := \m_t/\bsigma_t$, during the finetuning of GPT-2 on the Alpaca dataset (utilizing the exact configurations outlined in \cref{tab:alpaca}). We monitor this quantity across two distinct optimizer parameter groups, comprising 15,539,808 and 15,168 parameters, respectively. Throughout the optimization trajectory, we compute the quadratic mean (i.e., the RMS-norm, $\norm{\r_t}_\text{RMS}$), alongside the arithmetic, geometric, and harmonic means of $\r_t$.

\begin{figure}[htbp]
    \centering
    \vspace{-10pt}
    \includegraphics[width=\linewidth]{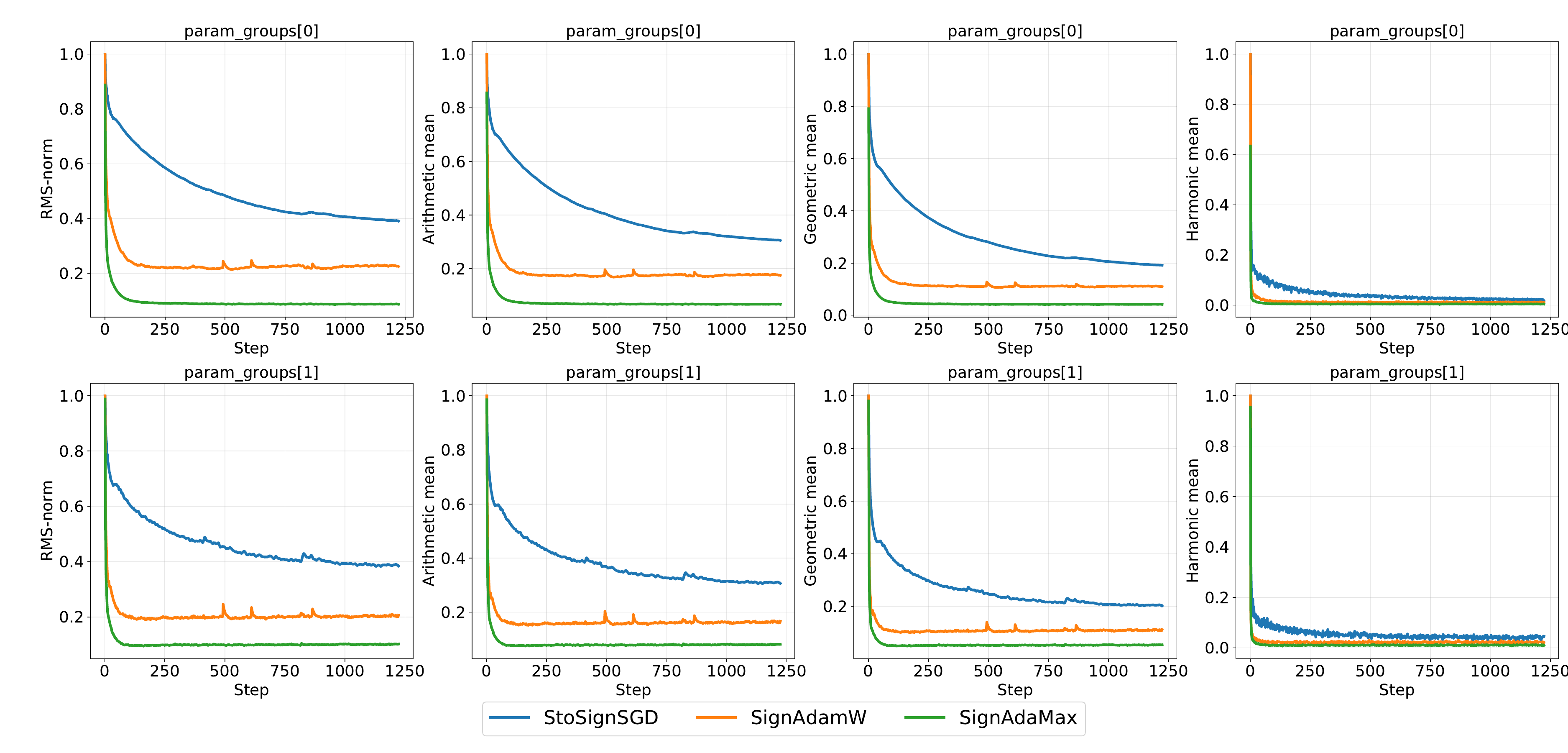}
    \vspace{-20pt}
    \caption{The intensity of structural noise among unbiased sign conversion optimizers.}
    \label{fig:rms-norm}
\end{figure}

As illustrated in \cref{fig:rms-norm}, the magnitude of $\r_t$ sustained by StoSignSGD is significantly larger than that of SignAdamW and SignAdaMax. Recall from \cref{alg:sign-conversion} and \cref{prop:unbiased-sign-conversion} that StoSignSGD, SignAdamW, and SignAdaMax serve as the unbiased sign-converted counterparts to IE-StoSignSGD, AdamW, and AdaMax, respectively. Consequently, the update RMS-norm ($\norm{\r_t}_\text{RMS}$) of these base optimizers directly dictates the structural $\r_t$ utilized by their sign-based derivatives. This relationship mathematically underpins the variance control mechanisms discussed previously in \cref{sec:ablation} and accounts for the empirical measurements reported in the final column of \cref{tab:tricks}. One would also note that our experiments indicate an approximate 0.2 update RMS-norm for AdamW, which aligns with the theoretical value~\citep{pmlr-v235-kosson24a} and empirical observation~\citep{liu2025muon} perfectly well.

\subsection{Computer Vision Experiments}

For completeness, we conduct a preliminary evaluation on standard computer vision tasks to validate the broader effectiveness of StoSignSGD compared to SignSGD and AdamW. Specifically, we perform multi-class image classification using the ResNet-18 architecture~\citep{he2016resnet} on the CIFAR-10 dataset~\citep{krizhevsky2009learning}, adhering to the experimental protocols established by~\citet{yuan25mars,jiang2025improved}.

During the hyperparameter tuning phase, we perform a grid search over the momentum parameter $\beta_1 \in \cbrac{0.99, 0.95, 0.9}$ and the learning rate $\eta \in \cbrac{0.5, 0.25, 0.1, 0.05, 0.025, 0.01}\times10^{-2}$. For AdamW specifically, we also sweep $\beta_2 \in \cbrac{0.95, 0.99, 0.999}$. All models are trained for 200 epochs with a global batch size of 128 and a uniform weight decay of 0.0001. Additionally, we employ a MultiStepLR learning rate scheduler~\citep{NEURIPS2019PYTORCH}, decaying the learning rate to 10\% of its initial value at epoch 50, and to 1\% at epoch 125.

The evolution of the test loss and test accuracy is illustrated in \cref{fig:cifar}. The empirical trajectories clearly demonstrate that StoSignSGD converges more rapidly than both SignSGD and AdamW. While all three methods achieve comparable final test accuracies---with StoSignSGD maintaining a slight edge---StoSignSGD achieves a significantly lower final test loss. This notable reduction in test loss highlights its superior generalization capabilities and stronger resilience against overfitting. Ultimately, these findings confirm that the variance control mechanisms of StoSignSGD translate effectively to visual domains, demonstrating its versatility as a general-purpose optimizer beyond natural language processing.

\begin{figure}[htbp] 
\centering
\includegraphics[width=0.49\linewidth]{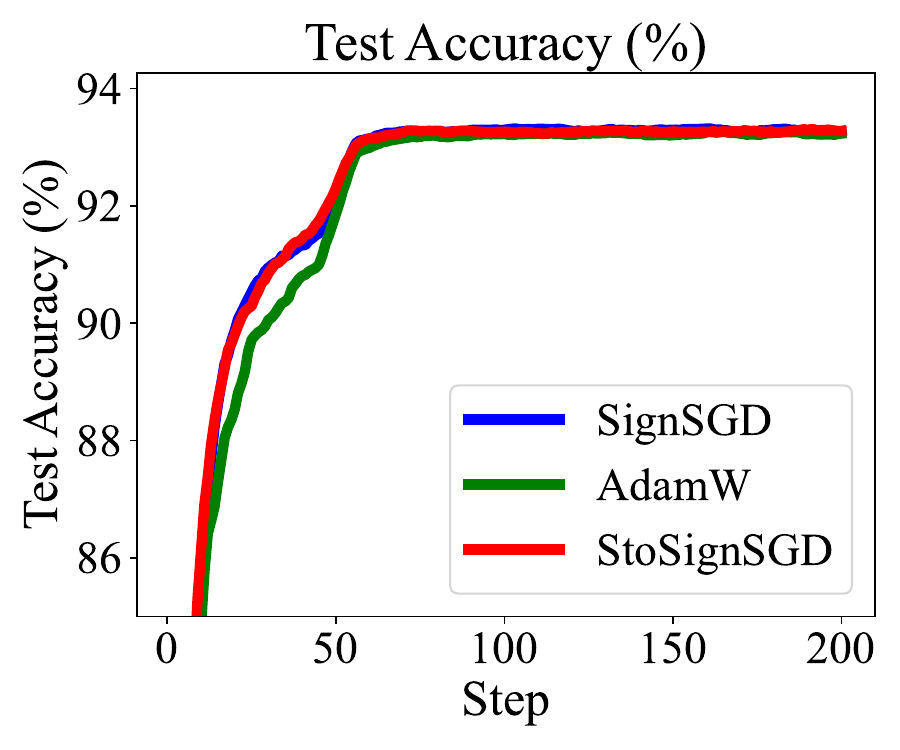}\hfill
\includegraphics[width=0.49\linewidth]{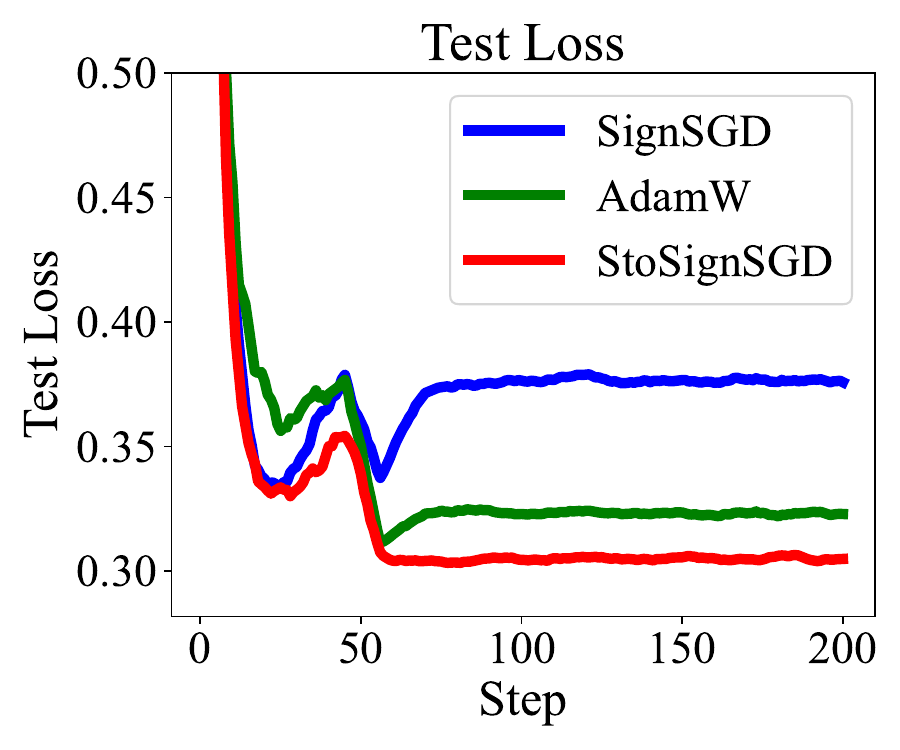}
\vspace{-10pt}
\caption{Test results for ResNet18 on CIFAR-10 dataset.\label{fig:cifar}}
\vspace{-10pt}
\end{figure}

\subsection{Supporting Algorithms\label{sec:supporting-algorithms}}

This section provides the practical implementations for the algorithms in \cref{tab:tricks}, i.e., SignSGD (\cref{alg:practical-signsgd}), AdamW (\cref{alg:adamw}), AdaMax (\cref{alg:adamax}), SignAdamW (\cref{alg:signadamw}), SignAdaMax (\cref{alg:signadamax}), and IE-StoSignSGD (\cref{alg:iestosignsgd}). Similar to the practical implementation of StoSignSGD in \cref{alg:practical-stosignsgd}, we implement SignSGD as \cref{alg:practical-signsgd}, adding decoupled weight decay on top of Signum~\citep{bernstein2018signsgd}, which is SignSGD with momentum, as it has been proven to be highly beneficial for sign-based methods~\citep{sun2023momentum,yu2026signheavytails}. To avoid confusion with relevant literature~\citep{bernstein2018signsgd,sun2023momentum,jiang2025improved,yu2026signheavytails}, we choose the name SignSGD. For AdamW and AdaMax, our implementation follows common practice~\citep{NEURIPS2019PYTORCH}. For SignAdamW and SignAdaMax, they build upon AdamW and AdaMax and are generated by the sign conversion framework in \cref{alg:sign-conversion}.

\begin{algorithm}[htbp]
    \caption{Practical Implementation of SignSGD}
    \label{alg:practical-signsgd}
    \begin{algorithmic}[1]
    \STATE {\bfseries Input:} Start point $\x_1\in\R^d$, momentum $\beta_1\in[0,1)$, learning rate $\cbrac{\eta_t}_{t=1}^T$, weight decay $\lambda\ge0$.\\
    \FOR{$t=1$ {\bfseries to} $T$}
    \STATE Get stochastic gradient $\g_t$
    \STATE Update momentum $\m_t=\beta_1\m_{t-1}+(1-\beta_1)\g_t$\hfill\COMMENT{\textcolor{gray}{$\m_1=\g_1$}}
    \STATE Compute $\x_{t+1} = \x_t - \eta_t \sign{\m_t}-\eta_t\lambda\x_t$
    \ENDFOR
\end{algorithmic}
\end{algorithm}

\begin{algorithm}[htbp]
    \caption{AdamW~\citep{kingma15adam,loshchilov2019adamw}}
    \label{alg:adamw}
    \begin{algorithmic}[1]
    \STATE {\bfseries Input:} Initialization $\x_1\in\R^d$, learning rate $\cbrac{\eta_t}_{t=1}^T$, weight decay $\lambda\ge0$, $\beta_1,\beta_2\in[0,1)$, $\epsilon\ge0$.\\
    \STATE {\bfseries Initialize:} $\m_0=\0,\ \v_0=\0$.
    \FOR{$t=1$ {\bfseries to} $T$}
        \STATE Get stochastic gradient $\g_t$
        \STATE $\m_t = \beta_1 \m_{t-1} + (1-\beta_1)\g_t$
        \STATE $\v_t = \beta_2 \v_{t-1} + (1-\beta_2)\g_t\odot\g_t$
        \STATE Bias correction: $\widehat{\m}_t = \m_t/(1-\beta_1^t)$, $\widehat{\v}_t = \v_t/(1-\beta_2^t)$
        \STATE \textcolor{gray}{\itshape \# General Optimizer Update}
        \STATE \textcolor{gray}{$\x_{t+1} = \x_t - \eta_t \frac{\widehat{\m}_t}{\sqrt{\widehat{\v}_t}+\epsilon} - \eta_t\lambda\x_t$}
    \ENDFOR
\end{algorithmic}
\end{algorithm}

\begin{algorithm}[htbp]
    \caption{AdaMax with Decoupled Weight Decay~\citep{kingma15adam,loshchilov2019adamw}}
    \label{alg:adamax}
    \begin{algorithmic}[1]
    \STATE {\bfseries Input:} Initialization $\x_1\in\R^d$, learning rate $\cbrac{\eta_t}_{t=1}^T$, weight decay $\lambda\ge0$, $\beta_1,\beta_2\in[0,1)$, $\epsilon\ge0$.\\
    \STATE {\bfseries Initialize:} $\m_0=\0,\ \u_0=\0$.
    \FOR{$t=1$ {\bfseries to} $T$}
        \STATE Get stochastic gradient $\g_t$
        \STATE $\m_t = \beta_1 \m_{t-1} + (1-\beta_1)\g_t$
        \STATE $\u_t = \max\cbrac{\beta_2 \u_{t-1},\, |\g_t|}$
        \STATE Bias correction: $\widehat{\m}_t = \m_t/(1-\beta_1^t)$
        \STATE \textcolor{gray}{\itshape \# General Optimizer Update}
        \STATE \textcolor{gray}{$\x_{t+1} = \x_t - \eta_t \frac{\widehat{\m}_t}{\u_t+\epsilon} - \eta_t\lambda\x_t$}
    \ENDFOR
\end{algorithmic}
\end{algorithm}

\begin{algorithm}[htbp]
    \caption{SignAdamW}
    \label{alg:signadamw}
    \begin{algorithmic}[1]
    \STATE {\bfseries Input:} Initialization $\x_1\in\R^d$, learning rate $\cbrac{\eta_t}_{t=1}^T$, weight decay $\lambda\ge0$, $\beta_1,\beta_2\in[0,1)$, $\epsilon\ge0$.\\
    \STATE {\bfseries Initialize:} $\m_0=\0,\ \v_0=\0$.
    \FOR{$t=1$ {\bfseries to} $T$}
        \STATE Get stochastic gradient $\g_t$
        \STATE $\m_t =\beta_1 \m_{t-1} + (1-\beta_1)\g_t$
        \STATE $\v_t =\beta_2 \v_{t-1} + (1-\beta_2)\g_t\odot\g_t$
        \STATE Bias correction: $\widehat{\m}_t = \m_t/(1-\beta_1^t)$, $\widehat{\v}_t = \v_t/(1-\beta_2^t)$
        \STATE \textcolor{blue}{\itshape \# Sign Conversion Update}
        \STATE \textcolor{blue}{Sample Uniform noise $\n_t\sim\unif\brac{[-1,1]^d}$}
        \STATE \textcolor{blue}{$\x_{t+1} = \x_t - \eta_t\sign{\widehat{\m}_t +\brac{\sqrt{\widehat{\v}_t}+\epsilon}\odot \n_t} - \eta_t\lambda\x_t$}
    \ENDFOR
\end{algorithmic}
\end{algorithm}

\begin{algorithm}[htbp]
    \caption{SignAdaMax}
    \label{alg:signadamax}
    \begin{algorithmic}[1]
    \STATE {\bfseries Input:} Initialization $\x_1\in\R^d$, learning rate $\cbrac{\eta_t}_{t=1}^T$, weight decay $\lambda\ge0$, $\beta_1,\beta_2\in[0,1)$, $\epsilon>0$.\\
    \STATE {\bfseries Initialize:} $\m_0=\0,\ \u_0=\0$.
    \FOR{$t=1$ {\bfseries to} $T$}
        \STATE Get stochastic gradient $\g_t$
        \STATE $\m_t = \beta_1 \m_{t-1} + (1-\beta_1)\g_t$
        \STATE $\u_t = \max\cbrac{\beta_2 \u_{t-1},\, |\g_t|}$ 
        \STATE Bias correction: $\widehat{\m}_t = \m_t/(1-\beta_1^t)$
        \STATE \textcolor{blue}{\itshape \# Sign Conversion Update}
        \STATE \textcolor{blue}{Sample Uniform noise $\n_t\sim\unif\brac{[-1,1]^d}$}
        \STATE \textcolor{blue}{$\x_{t+1} = \x_t - \eta_t\sign{\widehat{\m}_t +\brac{\u_t+\epsilon}\odot \n_t} - \eta_t\lambda\x_t$}
    \ENDFOR
\end{algorithmic}
\end{algorithm}

\begin{algorithm}[htbp]
    \caption{In-expectation StoSignSGD (IE-StoSignSGD)}
    \label{alg:iestosignsgd}
    \begin{algorithmic}[1]
    \STATE {\bfseries Input:} Initialization $\x_1\in\R^d$, momentum $\beta_1\in[0,1),\beta_2\in(0,1]$, learning rate $\cbrac{\eta_t}_{t=1}^T$, weight decay $\lambda\ge0$.\\
    \FOR{$t=1$ {\bfseries to} $T$}
    \STATE Get stochastic gradient $\g_t$
    \STATE Update momentum $\m_t=\beta_1\m_{t-1}+(1-\beta_1)\g_t$\hfill\COMMENT{\textcolor{gray}{$\m_1=\g_1$}}
    \STATE Update max-buffer $\Gb_t=\max\cbrac{\beta_2\Gb_{t-1},\abs{\m_t}}$\hfill\COMMENT{\textcolor{gray}{$\Gb_1=\abs{\m_1}$}}
    \STATE \textcolor{gray}{\itshape \# General Optimizer Update}
    \STATE \textcolor{gray}{Compute $\x_{t+1} = \x_t - \eta_t \m_t/\Gb_t-\eta_t\lambda\x_t$}
    \ENDFOR
\end{algorithmic}
\end{algorithm}

\subsubsection{StoSignSGDv2}

A closer inspection of our proposed methods reveals a structural discrepancy between the practical StoSignSGD (\cref{alg:practical-stosignsgd}) and its in-expectation counterpart, IE-StoSignSGD (\cref{alg:iestosignsgd}). Specifically, IE-StoSignSGD incorporates a damping factor $\beta_2$ when updating the maximum buffer $\Gb_t$. Inspired by AdaMax (cf. line 6 in \cref{alg:adamax}), this damping mechanism is highly advantageous during extended training trajectories because it prevents the tracked infinity norm of the historical gradients from becoming permanently locked to early, large-magnitude updates, thereby reducing algorithmic conservativity over time. To fully leverage this advantage within our sign-based framework, we introduce StoSignSGDv2 (\cref{alg:stosignsgdv2}), which serves as the exact, unbiased sign conversion of \cref{alg:iestosignsgd}.

\begin{algorithm}[htbp]
    \caption{StoSignSGD with Damping Max (StoSignSGDv2)}
    \label{alg:stosignsgdv2}
    \begin{algorithmic}[1]
    \STATE {\bfseries Input:} Initialization $\x_1\in\R^d$, momentum $\beta_1\in[0,1),\beta_2\in(0,1]$, learning rate $\cbrac{\eta_t}_{t=1}^T$, weight decay $\lambda\ge0$.\\
    \FOR{$t=1$ {\bfseries to} $T$}
    \STATE Get stochastic gradient $\g_t$
    \STATE Update momentum $\m_t=\beta_1\m_{t-1}+(1-\beta_1)\g_t$\hfill\COMMENT{\textcolor{gray}{$\m_1=\g_1$}}
    \STATE Update max-buffer $\Gb_t=\max\cbrac{\beta_2\Gb_{t-1},\abs{\m_t}}$\hfill\COMMENT{\textcolor{gray}{$\Gb_1=\abs{\m_1}$}}
    \STATE \textcolor{blue}{\itshape \# Sign Conversion Update}
    \STATE \textcolor{blue}{Sample Uniform noise $\n_t\sim\unif\brac{[-1,1]^d}$}
    \STATE \textcolor{blue}{$\x_{t+1} = \x_t - \eta_t \sign{\m_t+\Gb_t\odot\n_t}-\eta_t\lambda\x_t$}
    \ENDFOR
\end{algorithmic}
\end{algorithm}

A critical hyperparameter governing StoSignSGDv2 is the damping factor $\beta_2$. From a theoretical perspective, there is a strict trade-off: while a $\beta_2 < 1$ effectively reduces update variance in late-stage training (when gradient magnitudes naturally decay), it violates the monotonic non-decreasing property of the preconditioner ($\Gb_t \succeq \Gb_{t-1}$). As highlighted by~\citet{reddi2018convergence}, this monotonicity is essential for establishing rigorous convergence guarantees in adaptive optimization. To reconcile this theoretical requirement with the desire for empirical flexibility, we find it necessary to set $\beta_2$ extremely close to 1 (e.g., $\beta_2 = 0.99995$ is what we adopt for the following pretraining runs).

We evaluate StoSignSGDv2 across both LLM finetuning and pretraining regimes. While its performance in standard finetuning closely mirrors that of the original StoSignSGD, StoSignSGDv2 demonstrates distinct superiority in low-precision environments. Replicating the FP8 pretraining experiments from \cref{sec:fp8,sec:fp8-cont}, the results in \cref{tab:fp8-v2,fig:gpt2fp8v2} show that StoSignSGDv2 achieves the lowest training loss and perplexity, outperforming the original StoSignSGD, Lion, and standard baselines. This highlights the practical potential of the damped max-buffer design for highly constrained, large-scale training regimes.

\begin{table}
    \centering   
    \caption{FP8 pretraining results for GPT-2 with StoSignSGDv2. Extended from \cref{tab:fp8}\label{tab:fp8-v2}}

    \begin{tabular}{lcccc}
        \toprule
        \multirow{2}{*}{\textbf{Optimizer}} & \multicolumn{2}{c}{\textbf{Loss}} & \multicolumn{2}{c}{\textbf{Perplexity}} \\
        \cmidrule(lr){2-3} \cmidrule(lr){4-5}
        & \textbf{Train} & \textbf{Val} & \textbf{Train} & \textbf{Val} \\
        \midrule
        AdamW      & NaN   & NaN   & NaN   & NaN   \\
        Muon       & NaN   & NaN   & NaN   & NaN   \\
        SignSGD    & 3.539 & 3.546 & 34.43 & 34.67 \\
        Lion       & 3.452 & 3.460 & 31.55 & 31.82 \\
        StoSignSGD & 3.410 & 3.418 & 30.26 & 30.51 \\
        StoSignSGDv2 & \textbf{3.394} & \textbf{3.405} & \textbf{29.78} & \textbf{30.12} \\
        \bottomrule
    \end{tabular}
    \vspace{-10pt} 
\end{table}

\begin{figure}[htbp] 
\centering
\includegraphics[width=\linewidth]{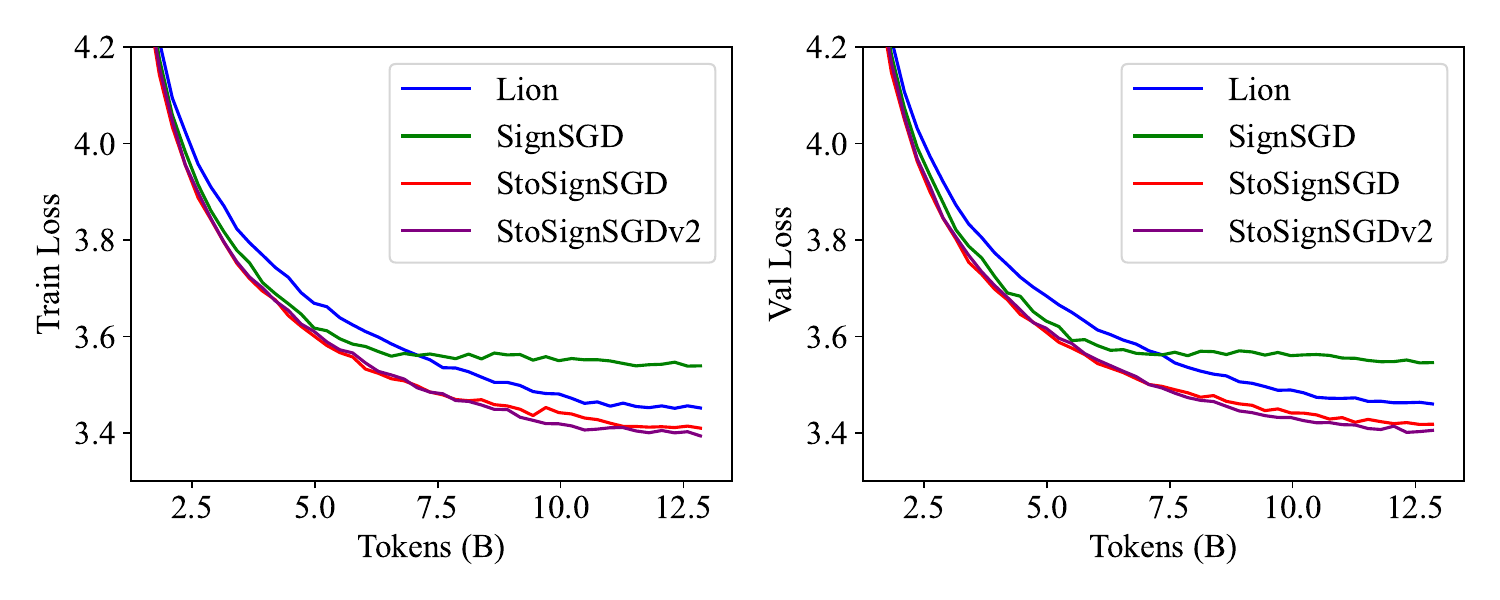}\hfill
\vspace{-10pt}
\caption{FP8 pretraining loss curves for GPT-2 with StoSignSGDv2. Extended from \cref{fig:gpt2fp8}.\label{fig:gpt2fp8v2}}
\vspace{-10pt}
\end{figure}

\section{Analysis for StoSignSGD}

In this section, we give the convergence analysis of the theoretical versions of StoSignSGD in \cref{alg:online-stosignsgd,alg:nonconvex-stosignsgd}. The practical version in \cref{alg:practical-stosignsgd}, incorporating momentum, can share the same derivations as \cref{alg:online-stosignsgd,alg:nonconvex-stosignsgd} according to, for example, the techniques in~\citet[Lemma~1]{alacaoglu2020newregretanalysis}. However, it is highly challenging to prove the benefits of momentum (see\mbox{~\citet{kidambi2018on,panICLR2024momentum}} and references therein), especially in our non-smooth settings\footnote{Here, we refer to benefits in the worst case analysis. A data-dependent analysis could still showcase the advantages of momentum updates.}. Therefore, we only provide the analysis without momentum.

\subsection{Stochastic Convex Optimization\label{sec:sco}}

In this section, we consider the settings of stochastic convex optimization (SCO) as a supplement to \cref{sec:convex-theory}. StoSignSGD is initialized at $\x_1\in\X$ and proceeds for $t\in[T]$ as
\begin{equation}\label{eq:convex-stosignsgd}
    \Gb_t=\max\cbrac{\Gb_{t-1},\abs{\g_t}},\quad\x_{t+1}=\Pi_\X^{\Gb_t}\sqbrac{\x_{t}-\eta_t\cdot\SS_{\Gb_t}(\g_t)},
\end{equation}
where $\g_t$ is a stochastic subgradient satisfying $\E_{\O_t}[\g_t]\in\partial f(\x_t)$, and $\Gb_t=(G_{t,1},\cdots,G_{t,d})\in\R^d$ tracks the infinity norm of past gradients. The procedure in~\eqref{eq:convex-stosignsgd} can be viewed as a offline deployment of \cref{alg:online-stosignsgd}. Below, we present the theoretical guarantee of StoSignSGD for SCO.

\begin{thm}\label{thm:convex-upper-bound}
    Under \cref{ass:convexity,ass:coord-lipschitz}, suppose StoSignSGD in~\eqref{eq:convex-stosignsgd} runs $T$ steps. Let $\xb_T=\frac{1}{T}\sum_{t=1}^T\x_t$, and $\eta_t=\frac{D_\infty}{\sqrt{2t}}$, it holds that 
    \begin{align*}
        \E\sqbrac{f(\xb_T)-f^*}\le\frac{\sqrt{2}D_\infty\norm{\Lb}_1}{\sqrt{T}}.
    \end{align*}
\end{thm}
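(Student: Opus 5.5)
The plan is to reduce to a regret bound for the linearized online problem and then apply online-to-batch conversion. By convexity and Jensen,
\[
f(\xb_T)-f^*\le\frac1T\sum_{t=1}^T\brac{f(\x_t)-f(\x_*)}\le\frac1T\sum_{t=1}^T\inner{\nabla_t}{\x_t-\x_*},
\]
where $\nabla_t:=\E_{\O_t}[\g_t]\in\partial f(\x_t)$. Since $\x_t$ is measurable with respect to the past, $\E\inner{\nabla_t}{\x_t-\x_*}=\E\inner{\g_t}{\x_t-\x_*}$. Next, $\Gb_t$ is determined once $\g_t$ is known, and $\n_t$ is independent of everything else. So \cref{prop:unbiased-sign} gives $\E_{\n_t}[\SS_{\Gb_t}(\g_t)]=\g_t/\Gb_t$, which is admissible because $\Gb_t\succeq|\g_t|$; coordinates with $G_{t,i}=0$ have $g_{t,i}=0$ and are handled trivially. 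Hence
\[
\E\inner{\g_t}{\x_t-\x_*}=\E\inner{\Gb_t\odot\SS_{\Gb_t}(\g_t)}{\x_t-\x_*}.
\]
It therefore suffices to bound $\E\sum_t\inner{\Gb_t\odot\bm{s}_t}{\x_t-\x_*}$ with $\bm{s}_t:=\SS_{\Gb_t}(\g_t)\in\{\pm1\}^d$. This is the linear-loss regret of weighted projected OGD with preconditioner $\diag{\Gb_t}$.

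For the one-step analysis, use the nonexpansiveness of the $\Gb_t$-weighted projection, $\norm{\x_{t+1}-\x_*}_{\Gb_t}^2\le\norm{\x_t-\eta_t\bm{s}_t-\x_*}_{\Gb_t}^2$. Expanding gives
\[
\inner{\Gb_t\odot\bm{s}_t}{\x_t-\x_*}\le\frac{\norm{\x_t-\x_*}_{\Gb_t}^2-\norm{\x_{t+1}-\x_*}_{\Gb_t}^2}{2\eta_t}+\frac{\eta_t}{2}\norm{\bm{s}_t}_{\Gb_t}^2 .
\]
The last term equals $\frac{\eta_t}{2}\sum_iG_{t,i}$, because $s_{t,i}^2=1$.

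Now sum over $t$ and telescope as in AdaGrad. Rearranging gives
\[
\sum_t\frac{1}{2}\Bigl(\frac{\Gb_t}{\eta_t}-\frac{\Gb_{t-1}}{\eta_{t-1}}\Bigr)\cdot(\x_t-\x_*)^{2}.
\]
Because $\Gb_t$ is coordinate-wise nondecreasing and $\eta_t$ is decreasing, $\Gb_t/\eta_t$ is nondecreasing. Bounding $(x_{t,i}-x_{*,i})^2\le D_\infty^2$ therefore telescopes to at most $\frac{D_\infty^2}{2\eta_T}\sum_iG_{T,i}$. For the second term, $\sum_t\frac{\eta_t}{2}G_{t,i}\le\frac{G_{T,i}}2\sum_t\eta_t\le G_{T,i}D_\infty\sqrt{T}/\sqrt2$, using $\sum_t t^{-1/2}\le2\sqrt T$. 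With $\eta_T=D_\infty/\sqrt{2T}$ the first term is $G_{T,i}D_\infty\sqrt{T}/\sqrt2$, so the total is $\sqrt2 D_\infty\sqrt T\sum_iG_{T,i}$. Finally $G_{T,i}=\norm{\g_{1:T,i}}_\infty$, and Jensen with \cref{ass:coord-lipschitz} gives $\E G_{T,i}\le\sqrt{\E G_{T,i}^2}\le L_i$. Dividing by $T$ yields $\sqrt2D_\infty\norm{\Lb}_1/\sqrt T$.

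The main obstacle is the step equating the expectations, which requires care about the order of conditioning. $\Gb_t$ depends on $\g_t$, so $\Gb_t\odot\E_{\n_t}[\bm{s}_t]=\g_t$ must be applied conditionally on $\g_t$ (and on $\x_t$), and only afterwards can we average over the oracle. This avoids any coupling issue with $\nabla_t$. A second point is that $\Gb_t$ is random, so the telescoping must be done pathwise before taking expectations. It also relies on the weighted projection at step $t$ using $\Gb_t$, which matches the weight in the telescoped norm. These two checks are where the constants could slip.
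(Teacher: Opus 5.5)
Your proposal is correct and follows essentially the same argument as the paper. Both proofs use nonexpansiveness of the $\Gb_t$-weighted projection (\cref{lem:projection}), \cref{prop:unbiased-sign} to replace $\Gb_t\odot\SS_{\Gb_t}(\g_t)$ by $\g_t$ in conditional expectation, AdaGrad-style telescoping via the monotonicity of $\Gb_t/\eta_t$ and the $\ell_\infty$-diameter $D_\infty$, and online-to-batch conversion. The only differences are cosmetic: you telescope pathwise before taking expectations, and you use $\sum_t t^{-1/2}\le 2\sqrt{T}$ instead of $2\sqrt{T}-1$, which still gives exactly the stated constant.
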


The rate is known to be minimax optimal in $T$~\citep{nemirovskij1983problem}. Under the common coordinate-wise Lipschitz constant $L_i=L$, it also matches the information-theoretic lower bound~\citep{agarwal2012information}.  

The proof of \cref{thm:convex-upper-bound} lies in \cref{app:sec:proof-convex-upper-bound}, which serves as a basis of the analysis behind \cref{thm:oco-bound}. Concretely, the bound in \cref{thm:convex-upper-bound} could be easily transferred into a pseudo-regret bound for OCO, where the elaborations are in \cref{app:sec:sco2oco,app:sec:pseudo2expected}

\subsection{Proof of Theorem~\ref{thm:convex-upper-bound}\label{app:sec:proof-convex-upper-bound}}

We adopt an AdaGrad style of analysis~\citep{McMahanS10adagrad,duchi2011adaptive,liu2025adagrad} to prove \cref{thm:convex-upper-bound}. Consider the update in~\eqref{eq:convex-stosignsgd}, by \cref{lem:projection} we have
\begin{align*}
    &\E\sqbrac{\norm{\x_{t+1}-\x_*}^2_{\Gb_t/\eta_t}}\\\le&\E\sqbrac{\norm{\x_{t}-\eta_t\cdot\SS_{\Gb_t}(\g_t)-\x_*}^2_{\Gb_t/\eta_t}}\\=&\E\sqbrac{\norm{\x_{t}-\x_*}^2_{\Gb_t/\eta_t}-2\inner{\diag{\Gb_t}\SS_{\Gb_t}(\g_t)}{\x_{t}-\x_*}+\eta_t^2\norm{\SS_{\Gb_t}(\g_t)}^2_{\Gb_t/\eta_t}}\\=&\E\sqbrac{\norm{\x_{t}-\x_*}^2_{\Gb_t/\eta_t}}-\E\sqbrac{2\inner{\E_{\O_t,\SS_t}\sqbrac{\diag{\Gb_t}\SS_{\Gb_t}(\g_t)}}{\x_{t}-\x_*}}+\E\sqbrac{\eta_t\norm{\Gb_t}_1}\\=&\E\sqbrac{\norm{\x_{t}-\x_*}^2_{\Gb_{t-1}/\eta_{t-1}}+\norm{\x_{t}-\x_*}^2_{\Gb_t/\eta_t-\Gb_{t-1}/\eta_{t-1}}+\eta_t\norm{\Gb_t}_1}-\E\sqbrac{2\inner{\E_{\O_t}[\g_t]}{\x_{t}-\x_*}}\\\le&\E\sqbrac{\norm{\x_{t}-\x_*}^2_{\Gb_{t-1}/\eta_{t-1}}+D_\infty^2\brac{\frac{\norm{\Gb_t}_1}{\eta_t}-\frac{\norm{\Gb_{t-1}}_1}{\eta_{t-1}}}+\eta_t\norm{\Gb_t}_1}-\E\sqbrac{2\inner{\partial f(\x_t)}{\x_{t}-\x_*}},
\end{align*}
where the third equality is due to \cref{prop:unbiased-sign}; the last step makes use of \cref{ass:convexity} along with the fact that
\begin{align*}
    &\forall\eta_t\le\eta_{t-1},\ \frac{\Gb_t}{\eta_t}-\frac{\Gb_{t-1}}{\eta_{t-1}}\succeq\frac{\Gb_t-\Gb_{t-1}}{\eta_t}=\frac{1}{\eta_t}\begin{pmatrix}
        \norm{\g_{1:t,1}}_\infty-\norm{\g_{1:t-1,i}}_\infty\\\cdots\\\norm{\g_{1:t,d}}_\infty-\norm{\g_{1:t-1,d}}_\infty
    \end{pmatrix}\succeq\0.\\\Longrightarrow&\norm{\frac{\Gb_t}{\eta_t}-\frac{\Gb_{t-1}}{\eta_{t-1}}}_1=\frac{\norm{\Gb_t}_1}{\eta_t}-\frac{\norm{\Gb_{t-1}}_1}{\eta_{t-1}}.
\end{align*}
Define $\eta_0=\eta_1,\Gb_0=\Gb_1$\footnote{This is without loss of generality since the telescoping is only valid for $2\le t\le T$.}. Rearranging the above inequality and summing from $1$ to $T$ yields the following regret bound:
\begin{equation}\label{eq:convex-telescoping}
    \begin{aligned}
        &2\E\sqbrac{\sum_{t=1}^T\inner{\partial f(\x_t)}{\x_{t}-\x_*}}\\\le& \E\sqbrac{\sum_{t=1}^T\brac{\norm{\x_t-\x_*}^2_{\Gb_{t-1}/\eta_{t-1}}-\norm{\x_{t+1}-\x_*}^2_{\Gb_t/\eta_t}}}\\&+\E\sqbrac{D_\infty^2\sum_{t=1}^T\brac{\frac{\norm{\Gb_t}_1}{\eta_t}-\frac{\norm{\Gb_{t-1}}_1}{\eta_{t-1}}}}+\E\sqbrac{\sum_{t=1}^T\eta_t\norm{\Gb_t}_1}\\\le&\norm{\x_1-\x_*}^2_{\Gb_1/\eta_1}+D_\infty^2\E\sqbrac{\frac{\norm{\Gb_T}_1}{\eta_T}-\frac{\norm{\Gb_1}_1}{\eta_1}}+\sum_{t=1}^T\eta_t\E\sqbrac{\norm{\Gb_t}_1}\\\le&\norm{\x_1-\x_*}^2_\infty\frac{\norm{\Gb_1}_1}{\eta_1}-\frac{D_\infty^2\norm{\Gb_1}_1}{\eta_1}+\frac{D_\infty^2}{\eta_T}\E\sqbrac{\norm{\Gb_T}_1}+\sum_{t=1}^T\eta_t\E\sqbrac{\norm{\Gb_t}_1}\\\le&\frac{D_\infty^2}{\eta_T}\E\sqbrac{\norm{\Gb_T}_1}+\sum_{t=1}^T\eta_t\E\sqbrac{\norm{\Gb_T}_1}=D_\infty\sqrt{2T}\E\sqbrac{\norm{\Gb_T}_1}+D_\infty\sum_{t=1}^T\frac{1}{\sqrt{2t}}\cdot\E\sqbrac{\norm{\Gb_T}_1}\\\le&\brac{2\sqrt{2T}-\frac{1}{\sqrt{2}}}D_\infty\norm{\Lb}_1,
    \end{aligned}
\end{equation}
where we used \cref{lem:sqrt-sum} and \cref{ass:coord-lipschitz} in the last step. By convexity, we complete the proof using the well-known online-to-batch conversion~\citep{cesa2006prediction}:
\begin{align*}
    \E\sqbrac{f(\xb_T)-f^*}\le\E\sqbrac{\frac{1}{T}\sum_{t=1}^T\brac{f(\x_t)-f^*}}\le\E\sqbrac{\sum_{t=1}^T\inner{\partial f(\x_t)}{\x_{t}-\x_*}}\le\frac{\sqrt{2}D_\infty\norm{\Lb}_1}{\sqrt{T}}. 
\end{align*}

\subsection{Proof of Theorem~\ref{thm:oco-bound}\label{app:sec:proof-online-upper-bound}}

In this section, we analyze \cref{alg:online-stosignsgd} in the context of online convex optimization (OCO). We first prove a weaker pseudo-regret bound in \cref{app:sec:sco2oco}, building on the framework of stochastic convex optimization (SCO) in \cref{app:sec:proof-convex-upper-bound}. Then, we demonstrate a stronger regret bound in \cref{app:sec:pseudo2expected}, which will play an indispensable role in the non-convex setting.

\subsubsection{From SCO to OCO\label{app:sec:sco2oco}}

\begin{thm}\label{thm:oco-bound-pseudo}
    Under \cref{ass:convexity,ass:coord-lipschitz}, set $\eta_t=\frac{D_\infty}{\sqrt{2t}}$. For any fixed $\x\in\X$, \cref{alg:online-stosignsgd} ensures 
    \begin{align*}
        \E\sqbrac{\textnormal{Regret}_T(\x)}\le\E\sqbrac{\sum_{t=1}^T\inner{\g_t}{\x_t-\x}}\le\sqrt{2}D_\infty\norm{\Lb}_1\sqrt{T}.
    \end{align*}
\end{thm}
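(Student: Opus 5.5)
The argument reruns the AdaGrad-style telescoping from the proof of \cref{thm:convex-upper-bound}, with two changes: the fixed comparator $\x_*$ is replaced by an arbitrary fixed $\x\in\X$, and the expected subgradient $\E_{\O_t}[\g_t]\in\partial f(\x_t)$ is replaced by the observed linear loss $\g_t\in\partial f_t(\x_t)$.

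\textbf{Step 1: reduce to linear regret.} Convexity of each $f_t$ gives $f_t(\x_t)-f_t(\x)\le\inner{\g_t}{\x_t-\x}$. Summing over $t$ and taking expectations yields the first inequality in the statement.

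\textbf{Step 2: one-step bound.} By \cref{lem:projection}, since $\x\in\X$,
\[
\norm{\x_{t+1}-\x}^2_{\Gb_t/\eta_t}\le\norm{\x_t-\x}^2_{\Gb_t/\eta_t}-2\inner{\diag{\Gb_t}\SS_{\Gb_t}(\g_t)}{\x_t-\x}+\eta_t\norm{\Gb_t}_1 ,
\]
where the last term uses $\SS_{\Gb_t}(\g_t)\in\{\pm1\}^d$. Now condition on the history up to the revelation of $\g_t$. Both $\Gb_t$ and $\x_t$ are measurable with respect to this history, and only the fresh noise $\n_t$ remains random. \cref{prop:unbiased-sign} then gives $\E_{\n_t}[\diag{\Gb_t}\SS_{\Gb_t}(\g_t)]=\g_t$, which is valid because $\Gb_t\succeq\abs{\g_t}$ by construction.

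This conditioning is the main point of care. The adversary may be adaptive, so $\g_t$ may depend on $\n_1,\dots,\n_{t-1}$ through $\x_t$. It must not depend on $\n_t$, and it does not, because $\n_t$ is sampled after $\g_t$ is received. Since $\x$ is fixed in advance, the tower property gives
\[
\E\sqbrac{\inner{\diag{\Gb_t}\SS_{\Gb_t}(\g_t)}{\x_t-\x}}=\E\sqbrac{\inner{\g_t}{\x_t-\x}}.
\]
This step is where fixing $\x$ matters. The maximum over $\x$ in \cref{thm:oco-bound} breaks this exchange, and that is why the pseudo-regret is treated separately first.

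\textbf{Step 3: change the norm weights.} Split
\[
\norm{\x_t-\x}^2_{\Gb_t/\eta_t}=\norm{\x_t-\x}^2_{\Gb_{t-1}/\eta_{t-1}}+\norm{\x_t-\x}^2_{\Gb_t/\eta_t-\Gb_{t-1}/\eta_{t-1}} .
\]
The weight vector $\Gb_t/\eta_t-\Gb_{t-1}/\eta_{t-1}$ is coordinatewise nonnegative, because $\Gb_t$ is monotone and $\eta_t$ is nonincreasing. Combined with $\norm{\x_t-\x}_\infty\le D_\infty$, the second term is at most $D_\infty^2\brac{\norm{\Gb_t}_1/\eta_t-\norm{\Gb_{t-1}}_1/\eta_{t-1}}$.

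\textbf{Step 4: telescope and sum.} Sum over $t\in[T]$ with the convention $\eta_0=\eta_1$, $\Gb_0=\Gb_1$, exactly as in \eqref{eq:convex-telescoping}. This gives
\[
2\E\sqbrac{\sum_{t=1}^T\inner{\g_t}{\x_t-\x}}\le\frac{D_\infty^2}{\eta_T}\E\norm{\Gb_T}_1+\sum_{t=1}^T\eta_t\E\norm{\Gb_t}_1 .
\]
Use $\norm{\Gb_t}_1\le\norm{\Gb_T}_1$ for every $t\le T$. By Jensen's inequality and \cref{ass:coord-lipschitz}, $\E[G_{T,i}]=\E\norm{\g_{1:T,i}}_\infty\le L_i$, so $\E\norm{\Gb_T}_1\le\norm{\Lb}_1$. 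With $\eta_t=D_\infty/\sqrt{2t}$ and \cref{lem:sqrt-sum}, the right-hand side is at most $\brac{\sqrt{2T}+\sqrt{2T}}D_\infty\norm{\Lb}_1$. Dividing by $2$ gives $\sqrt{2}D_\infty\norm{\Lb}_1\sqrt{T}$.
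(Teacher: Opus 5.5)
Your proposal is correct and follows essentially the same route as the paper: the weighted-projection inequality from \cref{lem:projection}, replacing $\diag{\Gb_t}\SS_{\Gb_t}(\g_t)$ by $\g_t$ in expectation via \cref{prop:unbiased-sign} (valid because $\x$ is fixed), bounding the monotone reweighting term by $D_\infty^2$, and the telescoping of \eqref{eq:convex-telescoping} with $\E\norm{\Gb_T}_1\le\norm{\Lb}_1$ and \cref{lem:sqrt-sum}. Your explicit conditioning argument, that $\g_t$ may depend on $\n_1,\dots,\n_{t-1}$ but not on $\n_t$, is a more careful justification of the step the paper writes as $\E_{\SS_t}$, but the proof is the same.
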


\begin{proof}
    Inspecting \cref{app:sec:proof-convex-upper-bound}, one should realize that \cref{thm:oco-bound} could be proven almost identically to \cref{thm:convex-upper-bound}. For completeness, we still present its proof. Following \cref{app:sec:proof-convex-upper-bound}, for any fixed $\x\in\X$, we have
\begin{align*}
    &\E\sqbrac{\norm{\x_{t+1}-\x}^2_{\Gb_t/\eta_t}}\overset{\text{Lemma~\ref{lem:projection}}}{\le}\E\sqbrac{\norm{\x_{t}-\eta_t\cdot\SS_{\Gb_t}(\g_t)-\x}^2_{\Gb_t/\eta_t}}\\=&\E\sqbrac{\norm{\x_{t}-\x}^2_{\Gb_t/\eta_t}-2\inner{\diag{\Gb_t}\SS_{\Gb_t}(\g_t)}{\x_{t}-\x}+\eta_t^2\norm{\SS_{\Gb_t}(\g_t)}^2_{\Gb_t/\eta_t}}\\=&\E\sqbrac{\norm{\x_{t}-\x}^2_{\Gb_t/\eta_t}}-\E\sqbrac{2\inner{\E_{\SS_t}\sqbrac{\Gb_t\odot\SS_{\Gb_t}(\g_t)}}{\x_{t}-\x}}+\E\sqbrac{\eta_t\norm{\Gb_t}_1}\\=&\E\sqbrac{\norm{\x_{t}-\x}^2_{\Gb_{t-1}/\eta_{t-1}}+\norm{\x_{t}-\x}^2_{\Gb_t/\eta_t-\Gb_{t-1}/\eta_{t-1}}+\eta_t\norm{\Gb_t}_1}-\E\sqbrac{2\inner{\g_t}{\x_{t}-\x}}\\\le&\E\sqbrac{\norm{\x_{t}-\x}^2_{\Gb_{t-1}/\eta_{t-1}}+D_\infty^2\brac{\frac{\norm{\Gb_t}_1}{\eta_t}-\frac{\norm{\Gb_{t-1}}_1}{\eta_{t-1}}}+\eta_t\norm{\Gb_t}_1}-\E\sqbrac{2\inner{\g_t}{\x_{t}-\x}}.
\end{align*}
Then, by the same telescoping step as depicted in~\eqref{eq:convex-telescoping}, we obtain the following regret bound for linear losses:
\begin{align*}
    \E\sqbrac{\sum_{t=1}^T\inner{\g_t}{\x_t-\x}}\le\sqrt{2T}D_\infty\norm{\Lb}_1.
\end{align*}
By convexity of $\cbrac{f_t}_{t\in[T]}$, the regret bound for $\cbrac{f_t}_{t\in[T]}$ follows directly from above.
\end{proof}

\subsubsection{From Pseudo-Regret to Expected Regret\label{app:sec:pseudo2expected}}

The theoretical guarantee in \cref{thm:oco-bound-pseudo} is actually a pseudo-regret bound which only holds for a \emph{fixed} comparator $\x\in\X$~\citep{bubeck2012regret}. In this section, we prove a stronger regret bound called expected regret. In pseudo-regret, the ``best'' action is only defined w.r.t.~the expectation and is not random. The randomness only plays a role once the ``deterministic'' comparator $\x\in\X $ is fixed. In expected regret, the ``best'' action depends on the randomness and is itself a random variable, because the maximum is inside the expectation. For a complete overview of these notions, one may refer to~\citep{bubeck2012regret,lattimore2020bandit}.

\begin{thm}[Full version of \cref{thm:oco-bound}]\label{thm:oco-bound-expected}
    Under \cref{ass:convexity,ass:coord-lipschitz}, set $\eta_t=\frac{D_\infty}{\sqrt{2t}}$. \cref{alg:online-stosignsgd} ensures 
    \begin{align*}
        \E\sqbrac{\max_{\x\in\X}\textnormal{Regret}_T(\x)}\le\E\sqbrac{\max_{\x\in\X}\sum_{t=1}^T\inner{\g_t}{\x_t-\x}}\le(2+\sqrt{2})D_\infty\norm{\Lb}_1\sqrt{T}.
    \end{align*}
    If we set $\eta_t\equiv\frac{D_\infty}{\sqrt{T}}$, then \cref{alg:online-stosignsgd} ensures 
    \begin{align*}
        \E\sqbrac{\max_{\x\in\X}\textnormal{Regret}_T(\x)}\le\E\sqbrac{\max_{\x\in\X}\sum_{t=1}^T\inner{\g_t}{\x_t-\x}}\le3D_\infty\norm{\Lb}_1\sqrt{T}.
    \end{align*}
\end{thm}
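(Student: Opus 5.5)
We would separate the actual stochastic update direction from its conditional mean. Write $\widetilde{\g}_t:=\Gb_t\odot\SS_{\Gb_t}(\g_t)$. By \cref{prop:unbiased-sign}, $\E_{\SS_t}[\widetilde{\g}_t]=\g_t$, and $\g_t$ is determined before the sign noise $\n_t$ is drawn. The first inequality $\textnormal{Regret}_T(\x)\le\sum_t\inner{\g_t}{\x_t-\x}$ is immediate from convexity for every $\x$, so it also holds after maximizing over $\x$. For the linear regret, decompose, for any $\x\in\X$,
\begin{align*}
\sum_{t=1}^T\inner{\g_t}{\x_t-\x}=\sum_{t=1}^T\inner{\widetilde{\g}_t}{\x_t-\x}+\sum_{t=1}^T\inner{\g_t-\widetilde{\g}_t}{\x_t}-\inner{\sum_{t=1}^T(\g_t-\widetilde{\g}_t)}{\x}.
\end{align*}
The middle sum is a martingale difference sequence, because $\x_t$ is fixed before $\n_t$ is drawn, so its expectation is zero. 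The maximum over $\x$ then only affects the first and third terms.

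\textbf{First term.} The weighted-projection argument of \cref{app:sec:sco2oco} is pathwise and valid for any comparator, including a random one, as long as we do not take the conditional expectation over $\SS_t$. Running that computation with $\widetilde{\g}_t$ in place of its mean uses the identity $\eta_t^2\norm{\SS_{\Gb_t}(\g_t)}^2_{\Gb_t/\eta_t}=\eta_t\norm{\Gb_t}_1$, which holds deterministically since signs are $\pm1$. Together with monotonicity of $\Gb_t/\eta_t$ and the $\ell_\infty$-diameter bound, this gives almost surely
\begin{align*}
2\max_{\x\in\X}\sum_t\inner{\widetilde{\g}_t}{\x_t-\x}\le\frac{D_\infty^2\norm{\Gb_T}_1}{\eta_T}+\sum_t\eta_t\norm{\Gb_t}_1.
\end{align*}
Taking expectations and using \cref{ass:coord-lipschitz} yields $\sqrt2 D_\infty\norm{\Lb}_1\sqrt T$, exactly as in the pseudo-regret bound.

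\textbf{Third term (main obstacle).} We need to control $\E\max_{\x\in\X}\inner{-\sum_t(\g_t-\widetilde{\g}_t)}{\x}$. Since a constant shift of $\x$ contributes zero in expectation, we may center $\X$ at some fixed $\x_0\in\X$. The quantity is then at most $D_\infty\E\norm{\sum_t(\g_t-\widetilde{\g}_t)}_1=D_\infty\sum_i\E\abs{\sum_t(g_{t,i}-\widetilde g_{t,i})}$. For each coordinate, the summands form a martingale difference sequence with conditional variance $G_{t,i}^2-g_{t,i}^2\le G_{t,i}^2\le \norm{\g_{1:T,i}}_\infty^2$. Jensen and orthogonality of increments then give $\E\abs{\cdot}\le\sqrt{\sum_t\E[G_{t,i}^2]}\le\sqrt{T}\,L_i$. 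Hence this term is at most $D_\infty\norm{\Lb}_1\sqrt T$. The delicate point is to keep the bound per-coordinate with $\E[\norm{\g_{1:T,i}}^2_\infty]\le L_i^2$, so that we get $\norm{\Lb}_1$ rather than a $\sqrt d$ factor. We also need to ensure the centering is handled in $\ell_\infty$ diameter; if this only gives $2D_\infty$, we would use $\sup_{\x,\x'}$ differences, and the stated constant $2$ suggests precisely that: $2D_\infty\norm{\Lb}_1\sqrt T$ from the ball of radius $D_\infty$ around $\x_0$ bounded via the $\ell_1$ pairing, possibly loosely.

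Summing the pieces gives $(\sqrt2+2)D_\infty\norm{\Lb}_1\sqrt T$. For the constant step $\eta_t\equiv D_\infty/\sqrt T$, the first term instead gives $\tfrac12(D_\infty\sqrt T+D_\infty\sqrt T)\norm{\Lb}_1=D_\infty\norm{\Lb}_1\sqrt T$, and adding the same $2D_\infty\norm{\Lb}_1\sqrt T$ yields $3D_\infty\norm{\Lb}_1\sqrt T$.
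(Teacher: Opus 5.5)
Your proof is correct, and it handles the non-oblivious noise term by a different route from the paper. Both arguments share the same first step: the weighted-projection inequality applied pathwise with the realized direction $\Gb_t\odot\SS_{\Gb_t}(\g_t)$, which is valid for a data-dependent comparator. Both then isolate the noise $\e_t:=\g_t-\Gb_t\odot\SS_{\Gb_t}(\g_t)$.

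The paper controls $\E\sqbrac{\sum_t\inner{\e_t}{\x_t-\xt_T}}$ with the ghost-iterate technique:
\begin{itemize}
\item it runs an auxiliary online StoSignSGD $\y_t$ on $\e_t$;
\item it removes $\sum_t\inner{\e_t}{\x_t-\y_t}$ as a martingale;
\item it bounds $\E\sqbrac{\sum_t\inner{\e_t}{\y_t-\xt_T}}$ with the pseudo-regret theorem, using $\norm{\EB_T}_1\le 2\norm{\Gb_T}_1$. This is the source of its extra $2D_\infty\norm{\Lb}_1\sqrt{T}$.
\end{itemize}
You effectively take the ghost to be the constant point $\x_0$. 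You then bound $\E\max_{\x\in\X}\inner{-\sum_t\e_t}{\x-\x_0}$ directly by H\"older and the per-coordinate martingale second moment, using the exact conditional variance $G_{t,i}^2-g_{t,i}^2\le G_{t,i}^2$.

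Your route is more elementary. It needs no auxiliary learner, and no argument that $\xt_T$ is a fixed comparator relative to the ghost's own sign noise. It also exploits the box geometry: the dual norm $\norm{\cdot}_1$ splits across coordinates, so you keep $\norm{\Lb}_1$ without a $\sqrt{d}$ factor. The ghost-iterate argument is more portable, since it only requires some online learner with a regret bound on the noise sequence and never needs an explicit bound on $\E\norm{\sum_t\e_t}_*$.

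One correction to your bookkeeping: the hedge about $2D_\infty$ is unnecessary. Take $\x_0\in\X$ deterministic, for example $\x_1$, so that $\E\inner{\sum_t\e_t}{\x_0}=0$. Since $\X$ has $\ell_\infty$-diameter at most $D_\infty$, you have $\X\subseteq B_\infty(\x_0,D_\infty)$, and the third term is at most $D_\infty\norm{\Lb}_1\sqrt{T}$, exactly as you derived. Your argument therefore proves the sharper constants $1+\sqrt{2}$ and $2$ in place of $2+\sqrt{2}$ and $3$. The stated bounds follow a fortiori, with no need to inflate that term to match them.
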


\begin{proof}
    It suffices to prove the second inequality, i.e., the regret bound for linear losses. Denote by $\xt_T:=\argmax_{\x\in\X}\sum_{t=1}^T\inner{\g_t}{\x_t-\x}$. By the similar derivations as in \cref{app:sec:proof-convex-upper-bound,app:sec:sco2oco}, we have
    \begin{align*}
    \E\sqbrac{\norm{\x_{t+1}-\xt_T}^2_{\Gb_t/\eta_t}}\le&\E\sqbrac{\norm{\x_{t}-\xt_T}^2_{\Gb_{t-1}/\eta_{t-1}}+D_\infty^2\brac{\frac{\norm{\Gb_t}_1}{\eta_t}-\frac{\norm{\Gb_{t-1}}_1}{\eta_{t-1}}}+\eta_t\norm{\Gb_t}_1}\\&-\E\sqbrac{2\inner{\Gb_t\odot\SS_{\Gb_t}(\g_t)}{\x_{t}-\xt_T}}.
\end{align*}
Rearranging and telescoping analogously to~\eqref{eq:convex-telescoping}, we obtain
\begin{align*}
    \E\sqbrac{\sum_{t=1}^T\inner{\g_t}{\x_{t}-\xt_T}}\le\E\sqbrac{\sum_{t=1}^T\underbrace{\inner{\g_t-\Gb_t\odot\SS_{\Gb_t}(\g_t)}{\x_{t}-\xt_T}}_{:=V_t}}+\frac{D_\infty^2\norm{\Lb}_1}{2\eta_T}+\sum_{t=1}^T\frac{\eta_t\norm{\Lb}_1}{2}.
\end{align*}
This is where we address the key relation $\max\E[\cdot]\le\E[\max\cdot]$. In \cref{app:sec:proof-convex-upper-bound,app:sec:sco2oco}, the $V_t$ term simply equals to $0$ in expectation, since $\inner{\g_t-\Gb_t\odot\SS_{\Gb_t}(\g_t)}{\x_{t}-\x}$ is a martingale difference sequence for any \emph{fixed} $\x\in\X$. Obviously, this is not the case here, as $\xt_T$ depends on the entire trajectory of the algorithm. 
To decouple this dependency elegantly, we borrow the ``ghost-iterate'' technique~\citep{nemirovski2009robust,NeurIPS:2023:Zhang,yu24egdro,bai2025group,zhang2026gdro} to resolve this implicit non-oblivious online learning problem~\citep{cesa2006prediction}. 
Define $\e_t:=\g_t-\Gb_t\odot\SS_{\Gb_t}(\g_t)$. Suppose there is an \emph{imaginary} online StoSignSGD algorithm which receives $\e_t$ as subgradients:
\begin{align}\label{eq:ghost-iterate}
    \y_{t+1}=\Pi_\X^{\EB_t}\sqbrac{\y_t-\gamma_t\cdot\SS_{\EB_t}(\e_t)},\text{ where }\y_1=\x_1, \EB_t=(\norm{\e_{1:t,1}}_\infty,\cdots,\norm{\e_{1:t,d}}_\infty)^\top,
\end{align}
where $\gamma_t>0$ is a step size to be determined later. For the update defined above, we can invoke \cref{thm:oco-bound-pseudo} to obtain
\begin{align*}
    \E\sqbrac{\sum_{t=1}^T\inner{\e_t}{\y_t-\xt_T}}\overset{\mathtt{(a)}}{\le} \brac{\frac{D_\infty^2}{2\gamma_T}+\sum_{t=1}^T\frac{\gamma_t}{2}}\E\sqbrac{\norm{\EB_T}_1}\overset{\mathtt{(b)}}{\le}2D_\infty\norm{\Lb}_1\sqrt{T}.
\end{align*}
Note that step $\mathtt{(a)}$ is valid since $\xt_T$ does not depend on the trajectory of~\eqref{eq:ghost-iterate}. While $\mathtt{(b)}$ is due to
\begin{align*}
    \E\sqbrac{\norm{\EB_T}_1}=\E\sqbrac{\sum_{i=1}^d\norm{\e_{1:t,i}}_\infty}\le\E\sqbrac{\sum_{i=1}^d\brac{\norm{\g_{1:t,i}}_\infty+\Gb_{t,i}}}=2\E\sqbrac{\norm{\Gb_t}_1}\le2\norm{\Lb}_1,
\end{align*}
and the choice $\gamma_t\equiv D_\infty/\sqrt{T}$. We proceed to analyze the original expected regret:
\begin{align*}
    \E\sqbrac{\sum_{t=1}^TV_t}=\E\sqbrac{\sum_{t=1}^T\underbrace{\inner{\e_t}{\x_t-\y_t}}_{:=M_t}}+\E\sqbrac{\sum_{t=1}^T\inner{\e_t}{\y_t-\xt_T}}.
\end{align*}
Let's be precise about the information available at each step. The filtration $\F_{t-1}$ represents the history of all random events up to the beginning of round $t$. This includes the initial points $\x_1,\y_1$ and the sequences of gradients, iterates, and any other random variables from rounds $1,\cdots,t-1$. Thus, we know that $\x_t-\y_t$ is $\F_{t-1}$-measurable. So we deduce that $\E[M_t|\F_{t-1}]=\E\sqbrac{\inner{\E[\e_t|\F_{t-1}]}{\x_t-\y_t}}=0$, i.e., $\cbrac{M_t}_{t\in[T]}$ is a martingale difference sequence. This immediately implies that $\E[\sum_{t=1}^TM_t]=0$. Hence, we conclude that
\begin{align*}
    \E\sqbrac{\sum_{t=1}^TV_t}=\E\sqbrac{\sum_{t=1}^TM_t}+\E\sqbrac{\sum_{t=1}^T\inner{\e_t}{\y_t-\xt_T}}=\E\sqbrac{\sum_{t=1}^T\inner{\e_t}{\y_t-\xt_T}}\le2D_\infty\norm{\Lb}_1\sqrt{T}.
\end{align*}
Finally, we arrive at
\begin{align*}
    \E\sqbrac{\sum_{t=1}^T\inner{\g_t}{\x_{t}-\xt_T}}\le&\E\sqbrac{\sum_{t=1}^TV_t}+\frac{D_\infty^2\norm{\Lb}_1}{2\eta_T}+\sum_{t=1}^T\frac{\eta_t\norm{\Lb}_1}{2}\\\le&2D_\infty\norm{\Lb}_1\sqrt{T}+\norm{\Lb}_1\brac{\frac{D_\infty^2}{2\eta_T}+\sum_{t=1}^T\frac{\eta_t}{2}}.
\end{align*}
Plugging in $\eta_t=D_\infty/\sqrt{2t}$ and $\eta_t\equiv D_\infty/\sqrt{T}$ respectively completes the proof.
\end{proof}

\subsection{Proof of Theorem~\ref{thm:lower-bound}\label{app:sec:proof-lower-bound}}

A well-known technique in the online learning community is that running different online learning algorithms on all coordinates yields regret bounds w.r.t.~the $\ell_1$-norm of linear losses~\citep{McMahanS10adagrad,duchi2011adaptive}. The regret for this procedure equals to the sum of regrets of the online learning algorithms for each coordinate. We use this idea to present a lower bound. First, we present a general regret lower bound for online linear optimization (OLO).

\begin{lem}[Theorem~5.1 in~\citet{orabona2019intro}]\label{lem:olo-lower-bound}
    Let $\X\in\R^d$ be any non-empty closed convex set. Let $D = \sup_{\x,\x^\prime\in\X}\norm{\x-\x^\prime}_2$ be the diameter of $\X$. Let $\A$ be any (possibly randomized) algorithm for OLO on $\X$. Let $T$ be any non-negative integer. Then, there exists a sequence of vectors $\cbrac{\g_t}_{t\in[T]}$ with $\norm{\g_t}_2 \le L$ and $\x\in\X$ such that the regret of algorithm $\A$ satisfies
    \begin{align*}
        \textnormal{Regret}_T(\x)=\sum_{t=1}^T\inner{\g_t}{\x_t-\x}\ge\frac{LD\sqrt{2T}}{4}.
    \end{align*}
\end{lem}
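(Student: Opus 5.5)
The plan is the standard probabilistic-method argument for online linear optimization: draw random linear losses supported on a single direction, show that no algorithm can do better than zero in expectation on its own plays, and show that the best fixed comparator in hindsight gains an amount proportional to $\abs{\sum_t\epsilon_t}$. If $D=\infty$ the same construction with arbitrarily distant pairs makes the regret arbitrarily large, so assume $D<\infty$. Fix $\rho>0$ and pick $\u,\v\in\X$ with $\norm{\u-\v}_2\ge D-\rho$. Let $\w:=(\u-\v)/\norm{\u-\v}_2$, and define $\g_t:=\epsilon_t L\w$ with $\epsilon_1,\dots,\epsilon_T$ i.i.d.\ Rademacher signs, drawn independently of the algorithm's internal randomness. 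Then $\norm{\g_t}_2=L$ for every $t$.

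\emph{Step 1 (the learner gains nothing).} The play $\x_t$ depends only on $\g_1,\dots,\g_{t-1}$ and the algorithm's own coins, so it is independent of $\epsilon_t$. Hence $\E[\inner{\g_t}{\x_t}]=L\,\E[\epsilon_t]\,\E[\inner{\w}{\x_t}]=0$, and therefore $\E[\sum_{t=1}^T\inner{\g_t}{\x_t}]=0$.

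\emph{Step 2 (the comparator gains $\abs{\sum_t\epsilon_t}$).} Restrict the comparator to $\{\u,\v\}\subseteq\X$. Writing $\u,\v=\frac{\u+\v}{2}\pm\frac{\u-\v}{2}$, we get
\begin{align*}
\max_{\x\in\{\u,\v\}}\sum_{t=1}^T\inner{\g_t}{-\x}=-\inner{\textstyle\sum_t\g_t}{\tfrac{\u+\v}{2}}+\tfrac{L\norm{\u-\v}_2}{2}\abs{\textstyle\sum_t\epsilon_t}.
\end{align*}
The first term has zero mean. Combining this with Step 1,
\begin{align*}
\E\sqbrac{\max_{\x\in\X}\textnormal{Regret}_T(\x)}\ge\frac{L(D-\rho)}{2}\,\E\abs{\textstyle\sum_{t=1}^T\epsilon_t}.
\end{align*}

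\emph{Step 3 (Khintchine lower bound).} This is the step I expect to be the main obstacle, because the constant $\sqrt{2T}/4$ is exactly what the sharp constant delivers. I would invoke the lower Khintchine inequality with Szarek's optimal constant, $\E\abs{\sum_{t=1}^T\epsilon_t}\ge\sqrt{T/2}$. As a fallback, one can verify it directly: for even $T$, $\E\abs{\sum_t\epsilon_t}=T\binom{T}{T/2}2^{-T}$, and the central binomial bound $\binom{2m}{m}\ge 4^m/\sqrt{2m}$ gives the claim (with a similar computation for odd $T$). Substituting into Step 2 gives
\begin{align*}
\E\sqbrac{\max_{\x\in\X}\textnormal{Regret}_T(\x)}\ge\frac{L(D-\rho)}{2}\sqrt{\frac{T}{2}}=\frac{L(D-\rho)\sqrt{2T}}{4}.
\end{align*}

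\emph{Step 4 (from expectation to a fixed sequence).} Since the expectation over $(\epsilon_t)$ of the (algorithm-expected) regret is at least this value, some realization of the signs achieves it. This yields a deterministic sequence $\{\g_t\}_{t\in[T]}$ with $\norm{\g_t}_2\le L$ together with a comparator $\x\in\{\u,\v\}$. For a randomized algorithm, the regret here is understood in expectation over its coins. Finally, let $\rho\to0$; if the supremum defining $D$ is attained, take $\rho=0$ directly. This gives the bound $LD\sqrt{2T}/4$.
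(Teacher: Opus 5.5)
Your proposal is correct and is essentially the argument behind Theorem~5.1 of~\citet{orabona2019intro}, which the paper cites rather than reproves: Rademacher losses $\g_t=\epsilon_t L\w$ along a diameter direction, zero expected learner loss because $\epsilon_t$ is independent of $\x_t$, restriction of the comparator to the two endpoints via $\max\cbrac{a,b}=\frac{a+b}{2}+\frac{\abs{a-b}}{2}$, and the sharp Khintchine bound $\E\abs{\sum_{t}\epsilon_t}\ge\sqrt{T/2}$ (your Step~4 handles randomized learners correctly, since the maximizing endpoint depends only on the signs). Two small corrections: first, your fallback bound $\binom{2m}{m}\ge 4^m/\sqrt{2m}$ is false (already $2<2\sqrt{2}$ at $m=1$), whereas the true bound $\binom{2m}{m}\ge 4^m/(2\sqrt{m})$ is exactly what gives $\E\abs{\sum_{t=1}^{2m}\epsilon_t}\ge\sqrt{m}$; second, for $D<\infty$ the closed bounded set $\X$ is compact, so the diameter is attained and the $\rho\to0$ limit is unnecessary, whereas for $D=\infty$ (the paper dropped Orabona's boundedness hypothesis) the bound can only be read as ``arbitrarily large,'' as you do.
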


Our high-level idea is to construct a separable function $f(\x):=\sum_{i=1}^df_i(x_i)$ where each $f_i:\R\mapsto\R$. This allows us to decompose the total regret of the algorithm into a sum of regrets incurred independently on each coordinate. We then apply \cref{lem:olo-lower-bound} to each of these terms and sum the results to obtain the final lower bound. Since the class of linear functions is a subset of convex functions, a lower bound for OLO is also a valid lower bound for Online Convex Optimization.

\begin{thm}[Regret Lower Bound]\label{thm:oco-lower-bound}
    Let $\X \subseteq \R^d$ be a convex set contained within a hypercube of side length $D_\infty$, i.e., $\sup_{\x, \x' \in \X} \norm{\x - \x'}_\infty \le D_\infty$. Let $\A$ be any (possibly randomized) online algorithm that generates iterates $\x_t \in \X$. For any $T \ge 1$ and any vector $\Lb = (L_1, \dots, L_d)^\top \in \R^d_+$, there exists a sequence of loss vectors $\{\g_t\}_{t \in [T]}$ with $|g_{t,i}| \le L_i$ for all $t, i$, and a competitor $\x \in \X$ such that the expected regret of algorithm $\A$ satisfies:
    \begin{align*}
        \textnormal{Regret}_T(\x)=\sum_{t=1}^T \inner{\g_t}{\x_t - \x} \ge \frac{D_\infty \norm{\Lb}_1\sqrt{2T}}{4} .
    \end{align*}
\end{thm}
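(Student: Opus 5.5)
We reduce the $d$-dimensional statement to $d$ independent one-dimensional instances of \cref{lem:olo-lower-bound} and add up their regrets. Because the lemma needs a set of known diameter, we will choose the feasible set as a hypercube with side exactly $D_\infty$; by the hypothesis and the way the statement is phrased (``there exists a sequence ... and a competitor''), it suffices to exhibit hard losses for the box $\X=\prod_{i=1}^d[0,D_\infty]$. If $\X$ is meant to be arbitrary, we would instead restrict to any box of full side length that $\X$ contains; to keep the argument clean, we take the box as the instance, which matches how \cref{thm:lower-bound} is invoked. For each $i$, the projection $\x_t\mapsto x_{t,i}$ of $\A$'s plays gives a (randomized) one-dimensional learner $\A_i$ on $[0,D_\infty]$. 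That interval has $\ell_2$-diameter $D_\infty$, and its losses will satisfy $|g_{t,i}|\le L_i$.

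The main obstacle is that $\A_i$ is not a fixed algorithm: $x_{t,i}$ may depend on the losses fed to the other coordinates. We handle this by building the adversary coordinate by coordinate, following the adaptive construction behind \cref{lem:olo-lower-bound}. Its proof in \citet{orabona2019intro} uses i.i.d.\ Rademacher losses $g_{t,i}=L_i\epsilon_{t,i}$, which are independent of the learner. The bound we aim for is
\begin{align*}
\E\Bigl[\sum_{t=1}^T g_{t,i}x_{t,i}-\min_{u\in[0,D_\infty]}\sum_{t=1}^T g_{t,i}u\Bigr]\ge \frac{L_iD_\infty\sqrt{2T}}{4}.
\end{align*}
To get it, we use two facts. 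First, $\E[g_{t,i}x_{t,i}]=0$, because $x_{t,i}$ is determined by the past and by independent coordinates while $\epsilon_{t,i}$ is fresh. Second, $\E\max_u(-u\sum_t g_{t,i})=\tfrac{D_\infty L_i}{2}\E|\sum_t\epsilon_{t,i}|\ge \tfrac{D_\infty L_i}{2}\sqrt{T/2}$ by Khintchine's inequality. Together these give the display, and the argument goes through unchanged when $x_{t,i}$ depends on the other coordinates' losses.

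Next we sum the per-coordinate bounds. For linear losses on a box the regret decomposes:
\begin{align*}
\max_{\x\in\X}\sum_{t=1}^T\inner{\g_t}{\x_t-\x}=\sum_{i=1}^d\max_{u\in[0,D_\infty]}\sum_{t=1}^T g_{t,i}(x_{t,i}-u).
\end{align*}
Taking expectations and applying the per-coordinate bound gives expected regret at least $\sum_i L_iD_\infty\sqrt{2T}/4=D_\infty\norm{\Lb}_1\sqrt{2T}/4$. A probabilistic-method step then fixes one realization of the signs and the maximizing $\x$, which produces the deterministic sequence and competitor required by the statement. Finally, linear losses $f_t(\x)=\inner{\g_t}{\x}$ are convex, and $|g_{t,i}|\le L_i$ gives $\norm{\g_{1:T,i}}_\infty\le L_i$, so \cref{ass:coord-lipschitz} holds for this instance.
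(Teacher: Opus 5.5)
Your argument is correct, and it is more careful than the paper's. The paper fixes the cube $[-D_\infty/2,D_\infty/2]^d$ and invokes \cref{lem:olo-lower-bound} as a black box on each coordinate to get a hard one-dimensional sequence and competitor $x_i^*$. It then sums the coordinatewise regrets. That is shorter and modular, but it has two problems:
\begin{itemize}
\item It asserts a single sequence $\{g_{t,i}\}$ that defeats ``any $1$-dimensional online algorithm.'' The lemma only gives a sequence depending on the algorithm, so this reverses the quantifiers.
\item It treats $x_{t,i}$ as the output of a fixed one-dimensional learner. In fact $x_{t,i}$ may depend on the losses of the other coordinates, so the $d$ hard sequences cannot be chosen independently of one another.
\end{itemize}

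You instead fix $\A$ first and reopen the lemma's proof:
\begin{itemize}
\item You draw all $dT$ signs $\epsilon_{t,i}$ i.i.d.\ and independent of $\A$'s internal coins. Then $\E[g_{t,i}x_{t,i}]=0$ holds however $x_{t,i}$ depends on the other coordinates, since $\x_t$ is a function only of $\g_1,\dots,\g_{t-1}$ and the coins.
\item The product structure of the box splits $\max_{\x\in\X}$ into coordinatewise maxima.
\item The symmetry identity $\E[\max_{u\in[0,D_\infty]}(-uS_i)]=\tfrac{D_\infty}{2}\E|S_i|$, combined with Khintchine's inequality at Szarek's sharp constant $1/\sqrt2$, gives exactly $L_iD_\infty\sqrt{2T}/4$ per coordinate. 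That sharp constant is needed to match $\sqrt{2T}/4$.
\item One global probabilistic-method step yields the deterministic sequence and competitor. This works because the maximizing $\x$ depends only on $\sum_t\g_t$, hence only on the signs and not on $\A$'s coins.
\end{itemize}
Your route gives a rigorous treatment of the cross-coordinate coupling and the quantifier order. The price is re-deriving the one-dimensional bound instead of citing it.

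Two small remarks. First, the adversary you actually use is oblivious (random signs fixed in advance), not ``adaptive'' as you describe it. Obliviousness is exactly why the argument survives the coupling between coordinates. Second, your fallback of restricting to ``any box of full side length that $\X$ contains'' is not available in general. For example, a singleton $\X$ has zero regret, so the theorem can only be read with $\X$ chosen as the full cube. That is the instance you use, and it is also what the paper's ``without loss of generality'' silently does.
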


\begin{proof}

Without loss of generality, let the domain be the hypercube $\X = [-D_\infty/2, D_\infty/2]^d$. For each coordinate $i \in [d]$, \cref{lem:olo-lower-bound} guarantees the existence of a ``hard'' sequence of one-dimensional loss functions. Specifically, for the one-dimensional domain $[-D_\infty/2, D_\infty/2]$ (which has diameter $D = D_\infty$) and a Lipschitz loss bound $L=L_i$, there exists a sequence of scalars $\{g_{t,i}\}_{t \in [T]}$ with $|g_{t,i}| \le L_i$ and a point $x_i^* \in [-D_\infty/2, D_\infty/2]$ such that any $1$-dimensional online algorithm incurs a regret of at least
\begin{equation}\label{eq:olo-coordinate-lower-bound}
    \textnormal{Regret}_{T,i}(x_i^*)=\sum_{t=1}^T g_{t,i} (x_{t,i} - x_i^*)\ge\frac{L_i D_\infty \sqrt{2T}}{4}
\end{equation}
Next, we construct our $d$-dimensional problem using these hard sequences. We define the sequence of loss vectors as $\g_t = (g_{t,1}, g_{t,2}, \dots, g_{t,d})^\top$ for all $t \in [T]$. Our competitor vector is simply the combination of the $1$-dimensional competitor points: $\x^* = (x_1^*, x_2^*, \dots, x_d^*)^\top \in \X$. The total regret of any algorithm $\A$ against our constructed competitor $\x^*$ can be written as:
\begin{align*}
    \textnormal{Regret}_T(\x^*) = &\sum_{t=1}^T \inner{\g_t}{\x_t - \x^*}=\sum_{t=1}^T \sum_{i=1}^d g_{t,i} (x_{t,i} - x_i^*)=\sum_{i=1}^d \sum_{t=1}^T g_{t,i} (x_{t,i} - x_i^*)\\=&\sum_{i=1}^d \textnormal{Regret}_{T,i}(x_i^*)\overset{\eqref{eq:olo-coordinate-lower-bound}}{\ge}\sum_{i=1}^d \frac{L_i D_\infty \sqrt{2T}}{4}= \frac{D_\infty \norm{\Lb}_1\sqrt{2T}}{4}.
\end{align*}
Therefore, we have finished the proof.
\end{proof}

Since this lower bound holds for a deterministic sequence of losses, it must also hold in expectation for any randomized algorithm. Also note that when $\abs{g_{t,i}}\le L_i$ holds, then \cref{ass:coord-lipschitz} also holds. Hence, \cref{thm:lower-bound} is a direct consequence of \cref{thm:oco-lower-bound}. This result confirms that the upper bound in \cref{thm:oco-bound} is tight in its dependence on $T$, $D_\infty$, and the $\ell_1$-norm of the coordinate-wise subgradient bounds, establishing the optimality of the algorithm up to constant factors.

\subsection{Proof of Theorem~\ref{thm:nonconvex-bound}\label{app:sec:exponential}}

By \cref{lem:random-taylor-expansion}, we have
\begin{align*}
    \E\sqbrac{f(\x_t)-f(\x_{t-1})}=&\E\sqbrac{\inner{\nabla f(\x_t)}{\bDelta_t}}\\=&\E\sqbrac{\inner{\g_t}{\bDelta_t}}+\E\sqbrac{\inner{\E_{\O_t}\sqbrac{\nabla f(\x_t)-\g_t}}{\bDelta_t}}=\E\sqbrac{\inner{\g_t}{\bDelta_t}}.
\end{align*}
Summing from $1$ to $T$ we obtain
\begin{align}\label{eq:grad+regret}
    \E\sqbrac{f(\x_T)-f(\x_0)}=\E\sqbrac{\sum_{t=1}^T\inner{\g_t}{\u_t}}+\E\sqbrac{\sum_{t=1}^T\inner{\g_t}{\bDelta_t-\u_t}}.
\end{align}
We proceed to bound the terms in~\eqref{eq:grad+regret} separately. Define
\begin{equation}
\begin{aligned}\label{eq:def-uk}
    &\forall k\in[K],\forall t:(k-1)N<t\le kN,\quad\uh^k:=\sum_{n=1}^N\nabla f(\x_{(k-1)N+n})\\&\u^k:=\u_{(k-1)N+1}=\cdots=\u_{kN}=-D_\infty\uh^k/\abs{\uh^k}.
\end{aligned}
\end{equation}
Then we write
\begin{align}
    &\E\sqbrac{\sum_{t=1}^T\inner{\g_t}{\u_t}}=\E\sqbrac{\sum_{k=1}^K\sum_{n=1}^N\inner{\g_{(k-1)N+n}}{\u^k}}\notag\\=&\sum_{k=1}^K\E\sqbrac{\inner{\sum_{n=1}^N\nabla f(\x_{(k-1)N+n})}{\u^k}}+\sum_{k=1}^K\E\sqbrac{\sum_{n=1}^N\inner{\underbrace{\nabla f(\x_{(k-1)N+n})-\g_{(k-1)N+n}}_{:=\eps_n^k}}{\u^k}}\notag\\\le&\sum_{k=1}^K\E\sqbrac{\inner{\uh^k}{\frac{-D_\infty\uh^k}{\abs{\uh^k}}}}+\sum_{k=1}^K\E\sqbrac{\norm{\sum_{n=1}^N\eps_n^k}_1\cdot\norm{\u^k}_\infty}\label{eq:bound-grad-inner-prod}\\\le&-\sum_{k=1}^KD_\infty\E\sqbrac{\norm{\uh^k}_1}+\sum_{k=1}^KD_\infty\sqrt{N}\sum_{i=1}^d\sigma_i\notag\\\le&-\sum_{k=1}^KD_\infty\E\sqbrac{\norm{\sum_{n=1}^N\nabla f(\x_{(k-1)N+n})}_1}+KD_\infty\sqrt{N}\norm{\Lb}_1\notag,
\end{align}
where the first inequality is due to Cauchy-Schwarz inequality; the second inequality follows from \cref{lem:coordinate-variance} and $\norm{\u^k}_\infty=D_\infty$; the last inequality holds under \cref{ass:coord-lipschitz}. Next, we control the linear regret term in~\eqref{eq:grad+regret}. Similarly, we first decompose it into the $K$-shifting regret~\citep{cutkosky2023optimal} of online stochastic sign gradient descent, and then apply the expected regret bound in \cref{thm:oco-bound} to obtain the final result\footnote{This is the place where one get to see further the differences between the bounds in \cref{app:sec:sco2oco,app:sec:pseudo2expected}. Since $\u^k$ depends on the whole history, we must invoke \cref{thm:oco-bound-expected} rather than \cref{thm:oco-bound-pseudo}.}.
\begin{equation}\label{eq:bound-regret-T}
    \begin{aligned}
        &\E\sqbrac{\sum_{t=1}^T\inner{\g_t}{\bDelta_t-\u_t}}=\sum_{k=1}^K\E\sqbrac{\sum_{n=1}^N\inner{\g_{(k-1)N+n}}{\bDelta_{(k-1)N+n}-\u^k}}\\\le&\sum_{k=1}^K\brac{2+\sqrt{2}}\cdot(2D_\infty)\norm{\Lb}_1\sqrt{N}=\brac{4+2\sqrt{2}}KD_\infty\norm{\Lb}_1\sqrt{N}.
    \end{aligned}
\end{equation}
Combining the bounds above, we obtain the following relation after rearranging:
\begin{equation}\label{eq:pre-bound}
    \begin{aligned}
        \E\sqbrac{\frac{1}{K}\sum_{k=1}^K\norm{\sum_{n=1}^N\frac{1}{N}\nabla f(\x_{(k-1)N+n})}_1}\le&\frac{f(\x_0)-\E[f(\x_T)]}{D_\infty KN}+\frac{\brac{5+2\sqrt{2}}\norm{\Lb}_1}{\sqrt{N}}\\\le&\frac{\Delta_f}{D_\infty KN}+\frac{\brac{5+2\sqrt{2}}\norm{\Lb}_1}{\sqrt{N}}.
    \end{aligned}
\end{equation}
Lastly, it suffices to connect the LHS of~\eqref{eq:pre-bound} to the stationary measure defined in \cref{def:stationary-point}. For any $k\in[K]$, we upper bound the quantity $\norm{\nabla f(\xb^k)}_{1,\infty}^{[\delta]}$ by the uniform distribution on the set $S_k=\cbrac{\x_{(k-1)N+1},\cdots,\x_{kN}}$, i.e., $\inf_{P\in\P(\R^d)}$ is relaxed by the realization $P=\unif(S_k)$. Note that \cref{alg:nonconvex-stosignsgd} calls \cref{alg:online-stosignsgd} to ensure that $\norm{\bDelta_t}_\infty\le D_\infty$. Hence, for any $\x_{n_1},\x_{n_2}\in S_k, (k-1)N+1\le n_1<n_2\le kN$, we have
\begin{equation}\label{eq:Sk-diam}
    \begin{aligned}
        \E\sqbrac{\norm{\x_{n_1}-\x_{n_2}}_\infty^2}\le&(n_2-n_1)\sum_{n=n_1+1}^{n_2}\E\sqbrac{\norm{\x_n-\x_{n-1}}_\infty^2}\\=&(n_2-n_1)\sum_{n=n_1+1}^{n_2}\E\sqbrac{\E_s[s_n^2]\norm{\bDelta_n}_\infty^2}\\\overset{\text{\cref{lem:exp-moments}}}{=}&(n_2-n_1)\sum_{n=n_1+1}^{n_2}\E\sqbrac{2\norm{\bDelta_n}_\infty^2}\le2(n_2-n_1)^2D_\infty^2,
    \end{aligned}
\end{equation}
where the first step utilizes Jensen's inequality. Based on the above derivations, we immediately obtain the following relation for any $\x_n\in S_k$:
\begin{equation}\label{eq:xb^k-center}
    \begin{aligned}
        \E\sqbrac{\norm{\x_n-\xb^k}_\infty^2}\le \frac{1}{N}\sum_{\x_{n^\prime}\in S_k}\E\sqbrac{\norm{\x_n-\x_{n^\prime}}_\infty^2}\overset{\eqref{eq:Sk-diam}}{\le}2N^2D_\infty^2.
    \end{aligned}
\end{equation}
Thus, we can compute the $(\delta,\epsilon)$-$\ell_{1,\infty}$-stationary point:
\begin{align*}
    \E\sqbrac{\norm{\nabla f(\xb^k)}_{1,\infty}^{[\delta]}}=&\E\sqbrac{\inf_{P\in\P(\R^d),\E_{\y\sim P}[\y]=\xb^k}\cbrac{\norm{\E\sqbrac{\nabla f(\y)}}_1+\delta\E\sqbrac{\norm{\y-\xb^k}_\infty^2}}}\\\le&\E\sqbrac{\norm{\E_{\y\sim\unif(S_k)}\sqbrac{\nabla f(\y)}}_1+\delta\E_{\y\sim\unif(S_k)}\sqbrac{\norm{\y-\xb^k}_\infty^2}}\\=&\E\sqbrac{\norm{\sum_{n=1}^N\frac{1}{N}\nabla f(\x_{(k-1)N+n})}_1}+\delta\E\sqbrac{\sum_{n=1}^N\frac{1}{N}\norm{\x_{(k-1)N+n}-\xb^k}_\infty^2}\\\overset{\eqref{eq:xb^k-center}}{\le}&\E\sqbrac{\norm{\sum_{n=1}^N\frac{1}{N}\nabla f(\x_{(k-1)N+n})}_1}+2\delta N^2D_\infty^2.
\end{align*}
Since $\xb\sim\unif\brac{\cbrac{\xb^1,\cdots,\xb^k}}$, we have
\begin{align*}
    \E\sqbrac{\norm{\nabla f(\xb)}_{1,\infty}^{[\delta]}}=&\frac{1}{K}\sum_{k=1}^K\E\sqbrac{\norm{\nabla f(\xb^k)}_{1,\infty}^{[\delta]}}\\\le&\frac{1}{K}\sum_{k=1}^K\E\sqbrac{\norm{\sum_{n=1}^N\frac{1}{N}\nabla f(\x_{(k-1)N+n})}_1}+2\delta N^2D_\infty^2\\\overset{\eqref{eq:pre-bound}}{\le}&\frac{\Delta_f}{D_\infty KN}+\frac{\brac{5+2\sqrt{2}}\norm{\Lb}_1}{\sqrt{N}}+2\delta N^2D_\infty^2\\\le& \frac{2}{7}\epsilon+\frac{4}{7}\epsilon+\frac{1}{7}\epsilon=\epsilon,
\end{align*}
where the last step follows from
\begin{align*}
    N=\ceil{\frac{49\brac{33+20\sqrt{2}}\norm{\Lb}_1^2}{16\epsilon^2}},\ K=\ceil{\frac{7\Delta_f\sqrt{14\delta}}{2\epsilon^{3/2}}},\ D_\infty=\frac{1}{N}\sqrt{\frac{\epsilon}{14\delta}}.
\end{align*}
Then the complexity can be bounded by
\begin{align*}
    T=NK\le4\cdot\frac{49\brac{33+20\sqrt{2}}\norm{\Lb}_1^2}{16\epsilon^2}\cdot\frac{7\Delta_f\sqrt{14\delta}}{2\epsilon^{3/2}}\le 39326\Delta_f\norm{\Lb}_1^2\delta^{1/2}\epsilon^{-7/2}.
\end{align*}

\subsection{Technical Lemmas}

\begin{lem}[Folklore]\label{lem:sqrt-sum}
$\forall T\ge1,\ \sum_{t=1}^T\frac{1}{\sqrt{t}}\le2\sqrt{T}-1.$
\end{lem}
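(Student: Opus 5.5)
The plan is to compare the sum with an integral, using that $x\mapsto x^{-1/2}$ is positive and decreasing on $(0,\infty)$. I would keep the first term $t=1$ separate, since it equals $1$ exactly. Each remaining term then satisfies $\frac{1}{\sqrt{t}}\le\int_{t-1}^{t}\frac{\diff x}{\sqrt{x}}$ for $t\ge2$, because the integrand on $[t-1,t]$ is at least its value at the right endpoint.

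Summing over $t=2,\dots,T$ gives
\begin{align*}
\sum_{t=1}^T\frac{1}{\sqrt{t}}\le 1+\int_1^T\frac{\diff x}{\sqrt{x}}=1+2\sqrt{T}-2=2\sqrt{T}-1.
\end{align*}
For $T=1$ the sum over $t\ge2$ is empty, and the claim reads $1\le1$.

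I would also keep an induction argument in mind as an alternative. It needs only the step $2\sqrt{T}-1+\frac{1}{\sqrt{T+1}}\le 2\sqrt{T+1}-1$, which is equivalent to $\frac{1}{\sqrt{T+1}}\le 2(\sqrt{T+1}-\sqrt{T})=\frac{2}{\sqrt{T+1}+\sqrt{T}}$. This last inequality holds because $\sqrt{T}\le\sqrt{T+1}$.

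I do not expect any real obstacle. The only point needing care is splitting off the $t=1$ term, so that the integral starts at $1$ rather than at $0$. That is exactly what produces the $-1$ in the bound.
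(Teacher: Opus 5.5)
Your proof is correct. Splitting off the $t=1$ term and bounding each remaining term by $\int_{t-1}^{t}x^{-1/2}\,\mathrm{d}x$ gives exactly $2\sqrt{T}-1$, and your induction step via $\frac{1}{\sqrt{T+1}}\le\frac{2}{\sqrt{T+1}+\sqrt{T}}$ is also valid. The paper gives no proof of this lemma: it labels it folklore and uses it only to bound $\sum_t\eta_t$ for $\eta_t=D_\infty/\sqrt{2t}$. Your integral comparison is the standard argument that the citation implicitly relies on.
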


\begin{lem}[Folklore]\label{lem:exp-moments}
The moments of $s\sim\expdist(\lambda),\lambda>0$ are given by $\E_s[s^n]=n!/\lambda^n,\forall n\in\N$.
\end{lem}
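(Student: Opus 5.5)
The plan is to compute the moments directly from the density $p(s)=\lambda e^{-\lambda s}$ on $s\ge0$ and reduce the integral to the Gamma function. For $n\in\N$ we write
\begin{align*}
    \E_s[s^n]=\int_0^\infty s^n\lambda e^{-\lambda s}\diff{s}.
\end{align*}
We then substitute $u=\lambda s$, so that $\diff{s}=\diff{u}/\lambda$. This gives $\E_s[s^n]=\lambda^{-n}\int_0^\infty u^n e^{-u}\diff{u}=\lambda^{-n}\Gamma(n+1)$. What remains is to show $\Gamma(n+1)=n!$.

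To keep the argument self-contained, we prove $I_n:=\int_0^\infty u^ne^{-u}\diff{u}=n!$ by induction on $n$. The base case is $I_0=\int_0^\infty e^{-u}\diff{u}=1$. For the inductive step, integrate by parts with $u^n$ and $-e^{-u}$:
\begin{align*}
    I_n=\big[-u^ne^{-u}\big]_0^\infty+n\int_0^\infty u^{n-1}e^{-u}\diff{u}=nI_{n-1}.
\end{align*}
The boundary term vanishes for $n\ge1$, since $u^n\to0$ at $u=0$ and $u^ne^{-u}\to0$ as $u\to\infty$. Hence $I_n=n!$, and so $\E_s[s^n]=n!/\lambda^n$. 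In the paper this is applied with $\lambda=1$ and $n=2$, giving $\E[s^2]=2$ as used in~\eqref{eq:Sk-diam}.

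There is no real obstacle. The only point needing care is justifying that the boundary term vanishes, which follows from exponential decay dominating polynomial growth, and noting that all integrals converge absolutely. As an alternative route, one could differentiate the moment generating function $\E[e^{\theta s}]=\lambda/(\lambda-\theta)$ $n$ times at $\theta=0$; the integration-by-parts induction is the more elementary choice.
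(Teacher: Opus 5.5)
Your proof is correct: the substitution $u=\lambda s$, which reduces the moment to $\Gamma(n+1)$, together with the integration-by-parts induction $I_n=nI_{n-1}$, is the standard computation. The paper states this lemma as folklore without proof, so your argument supplies the routine verification it relies on, and your rate parameterization $p(s)=\lambda e^{-\lambda s}$ is consistent with the paper's use of $\E[s^2]=2$ for $\expdist(1)$.
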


\begin{lem}[Lemma~8 in~\citet{hazan2007logarithmic}]\label{lem:projection}
    Let $\X\in\R^d$ be a convex set, $\x\in\R^d$ and $\y=\Pi_\X^\Ab[\x]$ be the generalized projection of $\x$ onto $\X$ according to a symmetric, positive semidefinite matrix $\Ab\succeq\0$. Then for any point $\z\in\X$ it holds that
    \begin{align*}
        \norm{\y-\z}_\Ab^2\le \norm{\x-\z}_\Ab^2.
    \end{align*}
\end{lem}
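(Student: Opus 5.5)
This lemma is the weighted (Mahalanobis) non-expansiveness of projection onto a convex set, so I would prove it through the variational characterization of the minimizer. First suppose $\Ab\succ\0$. Then $\y=\Pi_\X^\Ab[\x]$ minimizes the convex quadratic $\phi(\w)=\norm{\w-\x}_\Ab^2$ over $\X$. Its gradient is $\nabla\phi(\y)=2\Ab(\y-\x)$, so the first-order optimality condition reads
\begin{align*}
    \inner{\Ab(\y-\x)}{\z-\y}\ge0\qquad\forall\z\in\X.
\end{align*}

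Next, expand $\norm{\x-\z}_\Ab^2$ by writing $\x-\z=(\x-\y)+(\y-\z)$:
\begin{align*}
    \norm{\x-\z}_\Ab^2=\norm{\x-\y}_\Ab^2+\norm{\y-\z}_\Ab^2+2\inner{\Ab(\x-\y)}{\y-\z}.
\end{align*}
The cross term equals $2\inner{\Ab(\y-\x)}{\z-\y}$, which is nonnegative by the optimality condition. The first term is nonnegative because $\Ab\succeq\0$. Dropping both gives $\norm{\y-\z}_\Ab^2\le\norm{\x-\z}_\Ab^2$.

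The main subtlety is the merely semidefinite case, which matters in the paper because $\Gb_t$ may have zero coordinates early on. Two things can go wrong there: the minimizer need not be unique, and existence needs care when $\X$ is unbounded. In the paper's uses $\X$ is closed and bounded (e.g.\ $B_\infty(\0,D_\infty)$), so a minimizer exists. The argument above never used uniqueness: any minimizer $\y$ satisfies the first-order condition, since $\phi$ is convex and differentiable on the convex set $\X$, so the same chain of inequalities goes through for whichever minimizer the projection returns.

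If existence is in doubt, I would fall back on a limiting argument. Replace $\Ab$ by $\Ab+\lambda\mathbf{I}$ with $\lambda>0$, apply the strictly convex case, and let $\lambda\to0$, using compactness of the relevant sublevel set. I expect this degenerate case to be the only real obstacle; the rest is standard.
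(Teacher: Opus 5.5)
Your proof is correct and is the standard argument behind the cited Lemma~8 of \citet{hazan2007logarithmic}: the first-order optimality condition for the weighted projection, followed by the three-term expansion of $\norm{\x-\z}_\Ab^2$. The paper gives no proof of its own beyond that citation. As you observe, neither $\Ab\succ\0$ nor uniqueness is used, and since the statement presupposes that $\y$ exists, the $\lambda\to0$ fallback is unnecessary.
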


\begin{lem}[Lemma~3.2 in~\citet{zhang2024random}]\label{lem:random-taylor-expansion}
    Under \cref{ass:well-behavedness}, let $s\sim\expdist(\lambda)$ for some $\lambda>0$, then 
    \begin{align*}
        \E_s\sqbrac{f(\x+s\bDelta)-f(\x)}=\frac{1}{\lambda}\E_s\sqbrac{\inner{\nabla f(\x+s\bDelta)}{\bDelta}}.
    \end{align*}
\end{lem}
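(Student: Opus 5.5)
The plan is to reduce the statement to a one-dimensional identity along the ray $\x+u\bDelta$, $u\ge0$, and then exchange the order of integration. Write $\varphi(u):=f(\x+u\bDelta)$ and $h(u):=\inner{\nabla f(\x+u\bDelta)}{\bDelta}$. In the non-convex setting the domain in \cref{ass:well-behavedness} is all of $\R^d$. Applying that assumption to the pair $\x^\prime=\x+s\bDelta$, $\x$ and substituting $u=ts$ in the integral gives, for every fixed $s\ge0$,
\begin{align*}
    f(\x+s\bDelta)-f(\x)=\int_0^1\inner{\nabla f(\x+ts\bDelta)}{s\bDelta}\diff t=\int_0^s h(u)\diff u .
\end{align*}

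Next, take the expectation over $s\sim\expdist(\lambda)$, whose density is $\lambda e^{-\lambda s}$ on $[0,\infty)$, and swap the two integrals over the region $\{0\le u\le s<\infty\}$:
\begin{align*}
    \E_s\sqbrac{f(\x+s\bDelta)-f(\x)}=\int_0^\infty\lambda e^{-\lambda s}\int_0^s h(u)\diff u\diff s=\int_0^\infty h(u)\brac{\int_u^\infty\lambda e^{-\lambda s}\diff s}\diff u=\int_0^\infty h(u)e^{-\lambda u}\diff u .
\end{align*}
The last integral equals $\frac{1}{\lambda}\int_0^\infty h(u)\lambda e^{-\lambda u}\diff u=\frac{1}{\lambda}\E_s\sqbrac{h(s)}$, which is exactly the claimed right-hand side. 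This is the memorylessness of the exponential law: the tail $\Pr(s\ge u)=e^{-\lambda u}$ is proportional to the density.

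The main obstacle is justifying the exchange of integrals. I would apply Tonelli to $\abs{h}$: the same computation shows $\int_0^\infty\lambda e^{-\lambda s}\int_0^s\abs{h(u)}\diff u\diff s=\int_0^\infty\abs{h(u)}e^{-\lambda u}\diff u$. Hence Fubini applies as soon as $\E_s\abs{h(s)}<\infty$, which is implicitly required for the right-hand side to be meaningful anyway. In our applications this holds because $\norm{\nabla f}$ is controlled by the Lipschitz-type bound in \cref{ass:coord-lipschitz} and $\bDelta$ is bounded. I would state this integrability as the standing proviso, under which the identity holds, and note that it also guarantees that the left-hand expectation is finite.
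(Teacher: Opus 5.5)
Your argument is correct and is essentially the standard proof behind the cited result of \citet{zhang2024random}, which this paper imports without reproof. The steps match: write the increment as $\int_0^s h(u)\diff u$ via \cref{ass:well-behavedness}, swap the two integrals, and use that the exponential tail $e^{-\lambda u}$ is $1/\lambda$ times the density.

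Your integrability proviso $\E_s\abs{h(s)}<\infty$ is exactly what Fubini needs. One small correction to your justification: \cref{ass:coord-lipschitz} bounds gradients only along the trajectory, not globally. It still gives the proviso almost surely in the application, since $\E\norm{\nabla f(\x_t)}_1\le\norm{\Lb}_1$.
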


\begin{lem}[Coordinate-wise variance bound]\label{lem:coordinate-variance}
    Define $\eps_t:=\g_t-\nabla f(\x_t)$ with coordinate-wise finite variance: $\E_t[\epsilon_{t,i}^2]\le\sigma_i^2,\forall t\in[T],\forall i\in[d]$, then it holds that
    \begin{align*}
        \E\sqbrac{\norm{\sum_{t=1}^T\eps_t}_1}\le\sqrt{T}\sum_{i=1}^d\sigma_i.
    \end{align*}
\end{lem}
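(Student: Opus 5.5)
The plan is to prove the bound coordinate by coordinate and then sum over $i\in[d]$; this turns the $\ell_1$-norm into a sum of one-dimensional martingale bounds. I write $\eps_t=(\epsilon_{t,1},\cdots,\epsilon_{t,d})^\top$ and use the triangle structure of the $\ell_1$-norm:
\begin{align*}
    \E\sqbrac{\norm{\sum_{t=1}^T\eps_t}_1}=\sum_{i=1}^d\E\sqbrac{\abs{\sum_{t=1}^T\epsilon_{t,i}}}.
\end{align*}
This equality is just linearity of expectation, so everything reduces to bounding $\E\abs{S_{T,i}}$, where $S_{T,i}:=\sum_{t=1}^T\epsilon_{t,i}$, by $\sigma_i\sqrt{T}$ for each fixed coordinate $i$.

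For a fixed $i$, I would first apply Jensen's inequality (concavity of the square root) to get $\E\abs{S_{T,i}}\le\sqrt{\E[S_{T,i}^2]}$. Then I expand the square:
\begin{align*}
    \E[S_{T,i}^2]=\sum_{t=1}^T\E[\epsilon_{t,i}^2]+2\sum_{s<t}\E[\epsilon_{s,i}\epsilon_{t,i}].
\end{align*}

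The key step is to show that the cross terms vanish. The random variable $\epsilon_{s,i}$ with $s<t$ is measurable with respect to the history $\F_{t-1}$ up to the start of round $t$. Moreover, $\E_t[\epsilon_{t,i}]=0$, because the oracle is unbiased conditional on the past: $\E_{\O_t}[\g_t]=\nabla f(\x_t)$ with $\x_t$ already determined. The tower property then gives $\E[\epsilon_{s,i}\epsilon_{t,i}]=\E[\epsilon_{s,i}\E_t[\epsilon_{t,i}]]=0$.

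This is the only delicate point. The iterate $\x_t$ depends on past noise and on the internal randomness of the algorithm (the variables $s_t$ and the sign noise). I therefore need to define $\E_t$ as conditioning on everything except the oracle draw at round $t$, so that the martingale-difference property holds. With that convention, the diagonal terms are bounded by the assumed conditional variance, $\E[\epsilon_{t,i}^2]=\E[\E_t[\epsilon_{t,i}^2]]\le\sigma_i^2$. This gives $\E[S_{T,i}^2]\le T\sigma_i^2$.

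Combining the three steps yields $\E\abs{S_{T,i}}\le\sigma_i\sqrt{T}$, and summing over $i$ gives the claim. Finally, to match its use in \eqref{eq:bound-grad-inner-prod}, I would note that $\sigma_i$ can be taken as $L_i$ under \cref{ass:coord-lipschitz}. The conditional variance is at most the conditional second moment, and this produces the $\norm{\Lb}_1$ term there (possibly after an extra factor that is absorbed into constants, since a centered variable has second moment at most that of the uncentered one). The argument for that replacement is the same as above.
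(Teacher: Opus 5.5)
Your proposal is correct and follows the paper's proof essentially step for step: split the $\ell_1$-norm by coordinates, apply $\E\abs{X}\le\sqrt{\E[X^2]}$, expand the square, bound the diagonal terms by the conditional variance, and remove the cross terms with the tower property. Your explicit filtration (conditioning on everything except the round-$t$ oracle draw) makes the cross-term step more precise than the paper's, and replacing $\sigma_i$ by $L_i$ needs no extra constant, since $\E[\epsilon_{t,i}^2]\le\E[g_{t,i}^2]\le L_i^2$.
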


\begin{proof}
    Decompose the objective into the sum of all coordinates:
    \begin{align*}
         \E\sqbrac{\norm{\sum_{t=1}^T\eps_t}_1}=\sum_{i=1}^d\E\sqbrac{\abs{\sum_{t=1}^T\epsilon_{t,i}}}\le\sum_{i=1}^d\sqrt{\E\sqbrac{\sum_{t=1}^T\epsilon_{t,i}}^2}.
    \end{align*}
    Then we bound each coordinate using standard techniques:
    \begin{align*}
        \E\sqbrac{\sum_{t=1}^T\epsilon_{t,i}}^2=&\E\sqbrac{\sum_{t=1}^T\epsilon_{t,i}^2}+\E\sqbrac{\sum_{1\le s<t\le T}2\epsilon_{s,i}\epsilon_{t,i}} \\\le& \E\sqbrac{\sum_{t=1}^T\E_t\sqbrac{\epsilon_{t,i}^2}}+\E\sqbrac{\sum_{1\le s<t\le T}2\E_t\sqbrac{\epsilon_{s,i}\epsilon_{t,i}}}\le T\sigma_i^2.       
    \end{align*}
    Combining the above two relations finishes the proof.
\end{proof}

\section{StoSignSGD Also Tracks Goldstein Stationary Points\label{sec:goldstein}}

In \cref{sec:non-convex-theory}, we propose a non-smooth, non-convex version of StoSignSGD that finds $(\delta,\epsilon)$-$\ell_{p,q}$-stationary points in \cref{def:stationary-point} at state-of-the-art rates. In this section, we propose a variant of the stationary measure based on Goldstein stationarity~\citep{goldstein1977optimization} and present a variant of StoSignSGD that efficiently identifies it. The formal definition is given below.

\begin{defn}[$(\delta,\epsilon)$-$\ell_{p,q}$-Goldstein stationary point]\label{def:stationary-point-as}
Given $\epsilon,\delta> 0,1\le p,q\le\infty$ and an almost-everywhere differentiable function $f$, $\x$ is an $(\delta,\epsilon)$-$\ell_{p,q}$-Goldstein stationary point of $f$ if
    \begin{align*}
        \norm{\nabla f(\x)}_{p,q}^{[\delta]_{\textnormal{Gs}}}:=\inf_{S\subseteq B_q(\x,\delta),\frac{1}{\abs{S}}\sum_{\y\in S}\y=\x}\norm{\frac{1}{\abs{S}}\sum_{\y\in S}\nabla f(\y)}_p\le\epsilon.
    \end{align*}
\end{defn}

\citet[Definition~12]{cutkosky2023optimal} instantiates both $p=q=2$ and $p=1,q=\infty$ in \cref{def:stationary-point-as}. Analogously to \cref{def:stationary-point}, \cref{def:stationary-point-as} is more general and capable of fitting diverse problem geometries compared to the Euclidean notion in previous literature~\citep[Definition~12]{cutkosky2023optimal}. We also discuss the relationship between \cref{def:stationary-point,def:stationary-point-as} in \cref{sec:stationary-points}, following the similar recipe as~\citet{zhang2026random}.

Next, we introduce a variant of \cref{alg:nonconvex-stosignsgd} that replaces the exponential random scaling by uniform random scaling, and further queries stochastic gradients on the random line segment between $\x_t$ and $\x_{t-1}$. The complete procedure is outlined in \cref{alg:nonconvex-stosignsgd-uniform}. The above alterations stem from the original online-to-nonconvex conversion framework in~\citet[Algorithm~1]{cutkosky2023optimal}. \cref{alg:nonconvex-stosignsgd-uniform} can be viewed as replacing their online gradient descent~\citep{zinkevich2003oco} base algorithm with online StoSignSGD in \cref{alg:online-stosignsgd}. The theoretical guarantee of \cref{alg:nonconvex-stosignsgd-uniform} is given below, whose proof can be found in \cref{sec:proof-goldstein}.

\begin{thm}\label{thm:nonconvex-bound-unif}
    Under \cref{ass:coord-lipschitz,ass:non-convexity,ass:well-behavedness} and further assume that $f(\x_0)-f^*\le\Delta_f$. Set
    \begin{align*}
    K=\ceil{\frac{3\Delta_f}{\epsilon\delta}},\ N=\ceil{\frac{81\norm{\Lb}_1^2}{4\epsilon^{2}}},\  D_\infty=\frac{\delta}{N},\ \eta_t=\frac{2D_\infty}{\sqrt{N}}.
\end{align*}
Then, \cref{alg:nonconvex-stosignsgd-uniform} finds $(\delta,\epsilon)$-$\ell_{1,\infty}$-Goldstein stationary points within
\begin{align*}
    T=KN\le1323\Delta_f\norm{\Lb}_1^2\delta^{-1}\epsilon^{-3}
\end{align*}
stochastic gradient evaluations.
\end{thm}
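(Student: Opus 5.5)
This follows the original online-to-nonconvex conversion of \citet{cutkosky2023optimal}, with online StoSignSGD (\cref{alg:online-stosignsgd}) as the base learner. In \cref{alg:nonconvex-stosignsgd-uniform} we have $\x_t=\x_{t-1}+\bDelta_t$, and the gradient is queried at $\w_t=\x_{t-1}+s_t\bDelta_t$ with $s_t\sim\unif([0,1])$. By \cref{ass:well-behavedness}, $f(\x_t)-f(\x_{t-1})=\int_0^1\inner{\nabla f(\x_{t-1}+s\bDelta_t)}{\bDelta_t}\diff s=\E_{s_t}[\inner{\nabla f(\w_t)}{\bDelta_t}]$. Unbiasedness of $\g_t$ then gives $\E[f(\x_t)-f(\x_{t-1})]=\E\inner{\g_t}{\bDelta_t}$.

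Summing over $t$, I use the same decomposition as \eqref{eq:grad+regret}. The comparator is taken blockwise: $\u^k=-D_\infty\,\sgn(\uh^k)$ with $\uh^k=\sum_{n}\nabla f(\w_{(k-1)N+n})$.
\begin{enumerate}
\item The comparator term gives $-D_\infty\sum_k\E\norm{\uh^k}_1$ plus noise. Exactly as in \eqref{eq:bound-grad-inner-prod}, \cref{lem:coordinate-variance} bounds the noise by $KD_\infty\sqrt N\norm{\Lb}_1$.
\item The regret term is a $K$-shifting regret, with the learner restarted each block. Because $\u^k$ depends on the trajectory, I invoke the expected-regret bound \cref{thm:oco-bound-expected}, using its constant-step version with $\eta_t\equiv D/\sqrt N$ on the box $B_\infty(\0,D_\infty)$, whose $\ell_\infty$-diameter is $2D_\infty$. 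With this constant step each block contributes at most $3\cdot 2D_\infty\norm{\Lb}_1\sqrt N$, matching the prescribed $\eta_t=2D_\infty/\sqrt N$.
\end{enumerate}
Rearranging gives
\begin{align*}
\frac1K\sum_k\E\norm{\tfrac1N\uh^k}_1\le\frac{\Delta_f}{D_\infty KN}+\frac{7\norm{\Lb}_1}{\sqrt N}.
\end{align*}

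To connect with \cref{def:stationary-point-as}, I would take $S_k=\{\w_{(k-1)N+1},\dots,\w_{kN}\}$ and center it at its own mean $\xb^k:=\frac1N\sum_n\w_{(k-1)N+n}$. I assume this is the output defined by \cref{alg:nonconvex-stosignsgd-uniform}; if the algorithm instead averages the $\x_t$, I would adjust the centering argument accordingly. Every step satisfies $\norm{\x_t-\x_{t-1}}_\infty=\norm{\bDelta_t}_\infty\le D_\infty$, and each $\w_t$ lies on the segment $[\x_{t-1},\x_t]$. So any two points of $S_k$ are within $ND_\infty=\delta$ of each other in $\ell_\infty$, and hence every point of $S_k$ is within $\delta$ of the mean. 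Thus $S_k\subseteq B_\infty(\xb^k,\delta)$ deterministically. This is cleaner than the exponential case since no second-moment bound is needed. It follows that $\norm{\nabla f(\xb^k)}^{[\delta]_{\rm Gs}}_{1,\infty}\le\norm{\frac1N\uh^k}_1$. Averaging over the uniformly chosen $k$ and plugging in $D_\infty=\delta/N$ gives
\begin{align*}
\E\norm{\nabla f(\xb)}^{[\delta]_{\rm Gs}}_{1,\infty}\le\frac{\Delta_f}{\delta K}+\frac{7\norm{\Lb}_1}{\sqrt N}.
\end{align*}

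With $K=\lceil 3\Delta_f/(\epsilon\delta)\rangle$ the first term is at most $\epsilon/3$, and with $N=\lceil81\norm{\Lb}_1^2/(4\epsilon^2)\rceil$ the second is at most $14\epsilon/9$. These do not sum to $\epsilon$ as stated, so the constant in the regret step must be tightened. One option is to split as $\epsilon/3+2\epsilon/3$: this needs the combined constant in front of $\norm{\Lb}_1/\sqrt N$ to be $3$, which would require a sharper per-coordinate treatment of the box diameter. The other is to accept a different $N$.

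The main obstacle is the careful bookkeeping of the regret constant through \cref{thm:oco-bound-expected} on the centered box, and of the ceiling factors in bounding $T=KN$ by the stated $1323\Delta_f\norm{\Lb}_1^2\delta^{-1}\epsilon^{-3}$. The rates in $\Delta_f,\norm{\Lb}_1,\delta,\epsilon$ follow directly.
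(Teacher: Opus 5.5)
Your proposal is the paper's proof essentially step for step. It uses uniform random scaling to get $\E[f(\x_t)-f(\x_{t-1})]=\E\inner{\g_t}{\bDelta_t}$, the blockwise comparator $\u^k=-D_\infty\sgn(\uh^k)$ with noise handled by \cref{lem:coordinate-variance}, the constant-step expected-regret bound of \cref{thm:oco-bound-expected} on a box of $\ell_\infty$-diameter $2D_\infty$, and the deterministic containment $S_k\subseteq B_\infty(\wb^k,ND_\infty)$, and it arrives at the same bound $\frac{\Delta_f}{\delta K}+\frac{7\norm{\Lb}_1}{\sqrt N}$.

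Your diagnosis of the final constant is also correct, and the paper resolves it with your second option. Its proof actually closes with $N=\ceil{441\norm{\Lb}_1^2/(4\epsilon^2)}$, which gives $7\norm{\Lb}_1/\sqrt N\le 2\epsilon/3$. This is the value consistent with the stated complexity, since $KN\le 2\cdot\frac{3\Delta_f}{\epsilon\delta}\cdot 2\cdot\frac{441\norm{\Lb}_1^2}{4\epsilon^2}=1323\,\Delta_f\norm{\Lb}_1^2\delta^{-1}\epsilon^{-3}$, using $\ceil{a}\le 2a$ for $a\ge 1$. So the $81/4$ in the statement is a typo.

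Do not pursue your first option. A combined constant of $3$ is exactly what the fixed-comparator bound of \cref{thm:oco-bound-pseudo} gives with constant step ($2D_\infty\norm{\Lb}_1\sqrt N$ per block) plus the noise term $D_\infty\norm{\Lb}_1\sqrt N$. That bound does not apply here because $\u^k$ depends on the block's trajectory, which is precisely why you, and the paper, switch to the expected-regret bound.
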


The complexity bound matches~\citet[Theorem~13]{cutkosky2023optimal}. Following the discussions in~\citet[\S~4]{cutkosky2023optimal}, the result in \cref{thm:nonconvex-bound-unif} can be better than the naive $\ell_{2,2}$ bound based on SGD.

\begin{algorithm}[t]
   \caption{Non-smooth Non-convex StoSignSGD with Uniform Random Scaling}
   \label{alg:nonconvex-stosignsgd-uniform}
\begin{algorithmic}[1]
   \STATE {\bfseries Input:} Initialization $\x_0\in\R^d$, $K,N\in\N,D_\infty>0$, $\bDelta_0\in B_\infty(\0,D_\infty)\subseteq\R^d$, $\Gb_0=\0\in\R^d$.\\
   \STATE Set $T=KN$ and $\eta_0=0$.
   \FOR{$t=1$ {\bfseries to} $T$}
   \STATE Compute $\bDelta_t=\Pi_{B_\infty(\0,D_\infty)}^{\Gb_{t-1}}\sqbrac{\Delta_{t-1}-\eta_{t-1}\cdot\SS_{\Gb_{t-1}}(\g_{t-1})}$.
   \STATE Set $\x_t=\x_{t-1}+\bDelta_t$.
   \STATE Sample $s_t\sim\unif([0,1])$.   
   \STATE Set $\w_t=\x_{t-1}+s_t\bDelta_t$.
   \STATE Generate stochastic gradient $\g_t$ from oracle $\O$ such that $\E_{\O_t}[\g_t]=\nabla f(\w_t)$.
   \STATE Update $\Gb_t=\max\cbrac{\Gb_{t-1},\abs{\g_t}}$.
   \ENDFOR
   \STATE Set $\wb^k=\frac{1}{N}\sum_{n=1}^N\w_{(k-1)N+n}$ for all $k\in[K]$.
   \STATE {\bfseries Output:} Return $\wb\sim\unif\brac{\cbrac{\wb^1,\cdots,\wb^K}}$.
\end{algorithmic}
\end{algorithm}

\subsection{Proof of Theorem~\ref{thm:nonconvex-bound-unif}\label{sec:proof-goldstein}}

The analysis for \cref{alg:nonconvex-stosignsgd-uniform} follows from the online-to-nonconvex conversion framework introduced by~\citet{cutkosky2023optimal}. First, we construct a telescoping sum of function values analogously to~\citet[Theorem~7]{cutkosky2023optimal}. Under \cref{ass:well-behavedness}, we have
\begin{align*}
    \E\sqbrac{f(\x_t)-f(\x_{t-1})}=&\E\sqbrac{\int_0^1\inner{\nabla f(\x_{t-1}+s(\x_t-\x_{t-1}))}{\x_t-\x_{t-1}}\diff{s}}\\=&\E\sqbrac{\inner{\int_0^1\nabla f(\x_{t-1}+s\bDelta_t)\diff{s}}{\bDelta_t}}\\=&\E\sqbrac{\inner{\g_t}{\bDelta_t}}+\E\sqbrac{\inner{\int_0^1\nabla f(\x_{t-1}+s\bDelta_t)\diff{s}-\g_t}{\bDelta_t}}\\=&\E\sqbrac{\inner{\g_t}{\bDelta_t}}+\E\sqbrac{\inner{\E_{s_t,\O_t}[\g_t]-\int_0^1\nabla f(\x_{t-1}+s\bDelta_t)\diff{s}}{-\bDelta_t}}\\=&\E\sqbrac{\inner{\g_t}{\u_t}}+\E\sqbrac{\inner{\g_t}{\bDelta_t-\u_t}}\\&+\E\sqbrac{\inner{\E_{s_t}[\nabla f(\x_{t-1}+s_t\bDelta_t)]-\int_0^1\nabla f(\x_{t-1}+s\bDelta_t)\diff{s}}{-\bDelta_t}}\\=&\E\sqbrac{\inner{\g_t}{\u_t}}+\E\sqbrac{\inner{\g_t}{\bDelta_t-\u_t}},
\end{align*}
where the last step necessitates the randomized scaling trick~\citep{zhang2020complexity}, i.e., evaluating stochastic gradients at a random point on the line segment between adjacent iterates. Similar to~\eqref{eq:grad+regret}, we have
\begin{align*}
    -\Delta_f\le\E\sqbrac{f(\x_T)-f(\x_0)}=\E\sqbrac{\sum_{t=1}^T\inner{\g_t}{\u_t}}+\E\sqbrac{\sum_{t=1}^T\inner{\g_t}{\bDelta_t-\u_t}}.
\end{align*}
Different from~\eqref{eq:def-uk}, we define
\begin{align}
    &\forall k\in[K],\forall t:(k-1)N<t\le kN,\quad\uh^k:=\sum_{n=1}^N\nabla f(\w_{(k-1)N+n})\\&\u^k:=\u_{(k-1)N+1}=\cdots=\u_{kN}=-D_\infty\uh^k/\abs{\uh^k}.
\end{align}
Then we perform the nearly identical procedure as in~\eqref{eq:bound-grad-inner-prod}:
\begin{align*}
    &\E\sqbrac{\sum_{t=1}^T\inner{\g_t}{\u_t}}=\E\sqbrac{\sum_{k=1}^K\sum_{n=1}^N\inner{\g_{(k-1)N+n}}{\u^k}}\\=&\sum_{k=1}^K\E\sqbrac{\inner{\sum_{n=1}^N\nabla f(\w_{(k-1)N+n})}{\u^k}}+\sum_{k=1}^K\E\sqbrac{\sum_{n=1}^N\inner{\underbrace{\nabla f(\w_{(k-1)N+n})-\g_{(k-1)N+n}}_{:=\eps_n^k}}{\u^k}}\\\le&\sum_{k=1}^K\E\sqbrac{\inner{\uh^k}{\frac{-D_\infty\uh^k}{\abs{\uh^k}}}}+\sum_{k=1}^K\E\sqbrac{\norm{\sum_{n=1}^N\eps_n^k}_1\cdot\norm{\u^k}_\infty}\\\le&-\sum_{k=1}^KD_\infty\E\sqbrac{\norm{\uh^k}_1}+\sum_{k=1}^KD_\infty\sqrt{N}\sum_{i=1}^d\sigma_i\\\le&-\sum_{k=1}^KD_\infty\E\sqbrac{\norm{\sum_{n=1}^N\nabla f(\w_{(k-1)N+n})}_1}+KD_\infty\sqrt{N}\norm{\Lb}_1,
\end{align*}
The regret bound in~\eqref{eq:bound-regret-T} remains almost intact:
\begin{align*}
    \E\sqbrac{\sum_{t=1}^T\inner{\g_t}{\bDelta_t-\u_t}}\le\sum_{k=1}^K3\cdot(2D_\infty)\norm{\Lb}_1\sqrt{N}=6K\sqrt{N}D_\infty\norm{\Lb}_1.
\end{align*}
The only difference lies in the constant $3$, which improves upon the $2+\sqrt{2}$ used in \cref{alg:nonconvex-stosignsgd-uniform}. That's simply because we leverages a constant learning rate of $\eta_t=2D_\infty/\sqrt{N}$.
Analogously to~\eqref{eq:pre-bound}, we obtain
\begin{align}\label{eq:pre-bound-as}
    \E\sqbrac{\frac{1}{K}\sum_{k=1}^K\norm{\sum_{n=1}^N\frac{1}{N}\nabla f(\w_{(k-1)N+n})}_1}\le\frac{\Delta_f}{D_\infty KN}+\frac{7\norm{\Lb}_1}{\sqrt{N}}.
\end{align}
Similar to \cref{app:sec:exponential}, the remaining procedure is to find a realization of the stationary measure defined in \cref{def:stationary-point-as}. For any $k\in[K]$, we upper bound the quantity $\norm{\nabla f(\wb^k)}_{1,\infty}^{[\delta]_{\textrm{Gs}}}$ by realizing the uniform distribution supported on the set $S_k=\cbrac{\w_{(k-1)N+1},\cdots,\w_{kN}}$, i.e., $\inf_{S\subseteq B_\infty(\wb^k,\delta),\frac{1}{\abs{S}}\sum_{\y\in S}\y=\wb^k}$ is relaxed by the realization $S=S_k$. Thus, it's necessary to show that $S_k\subseteq B_\infty(\wb^k,\delta)$. For any $\w_{n_1},\w_{n_2}\in S_k, (k-1)N+1\le n_1<n_2\le kN$, we have
\begin{equation}\label{eq:Sk-diam-as}
    \begin{aligned}
        \norm{\w_{n_1}-\w_{n_2}}_\infty\le&\norm{\w_{n_1}-\x_{n_1}}_\infty+\norm{\x_{n_1}-\x_{n_2-1}}_\infty+\norm{\x_{n_2}-\w_{n_2-1}}_\infty\\
        \le&\norm{(s_{n_1}-1)\bDelta_{n_1}}_\infty+\sum_{n=n_1+1}^{n_2-1}\norm{\x_n-\x_{n-1}}_\infty+\norm{-s_{n_2}\bDelta_{n_2}}_\infty\\\le&\norm{\bDelta_{n_1}}_\infty+\sum_{n=n_1+1}^{n_2-1}\norm{\bDelta_n}_\infty+\norm{\bDelta_{n_2}}_\infty\\\le&D_\infty+(n_2-1-(n_1+1)+1)D_\infty+D_\infty\le D_\infty N,
    \end{aligned}
\end{equation}
where the third inequality is due to $s_n\sim\unif([0,1])$ and the last step follows from $\norm{\bDelta_n}_\infty\le D_\infty$. Then, in the same view of~\eqref{eq:xb^k-center}, we conclude that
\begin{align*}
    \forall\w_n\in S_k,\norm{\w_n-\wb^k}_\infty\le\frac{1}{N}\sum_{\w_{n^\prime}\in S_k}\norm{\w_n-\w_{n^\prime}}_\infty\overset{\eqref{eq:Sk-diam-as}}{\le}ND_\infty\Longrightarrow S_k\subseteq B_\infty(\wb^k,ND_\infty).
\end{align*}
By choosing $D_\infty=\delta/N$, we derive that
\begin{align*}
        \norm{\nabla f(\wb^k)}_{1,\infty}^{[\delta]_{\textrm{Gs}}}=\inf_{S\subseteq B_\infty(\wb^k,\delta),\frac{1}{\abs{S}}\sum_{\y\in S}\y=\wb^k}\norm{\frac{1}{\abs{S}}\sum_{\y\in S}\nabla f(\y)}_1\le\norm{\frac{1}{N}\sum_{n=1}^N\nabla f(\w_{(k-1)N+n})}_1.
\end{align*}
Since $\wb\sim\unif\brac{\cbrac{\wb^1,\cdots,\wb^k}}$, we have
\begin{align*}
    \E\sqbrac{\norm{\nabla f(\wb)}_{1,\infty}^{[\delta]_{\textrm{Gs}}}}=&\frac{1}{K}\sum_{k=1}^K\E\sqbrac{\norm{\nabla f(\wb^k)}_{1,\infty}^{[\delta]_{\textrm{Gs}}}}\\\le&\frac{1}{K}\sum_{k=1}^K\E\sqbrac{\norm{\sum_{n=1}^N\frac{1}{N}\nabla f(\x_{(k-1)N+n})}_1}\\\overset{\eqref{eq:pre-bound-as}}{\le}&\frac{\Delta_f}{\delta K}+\frac{7\norm{\Lb}_1}{\sqrt{N}}\\\le& \frac{1}{3}\epsilon+\frac{2}{3}\epsilon=\epsilon,
\end{align*}
where the last step follows from
\begin{align*}
    N=\ceil{\frac{441}{4}\norm{\Lb}_1^2\epsilon^{-2}},\ K=\ceil{\frac{3\Delta_f}{\epsilon\delta}}.
\end{align*}

\subsection{Conversion Between Stationary Points\label{sec:stationary-points}}

To better clarify the relationship between different stationary measures, we introduce the following slightly stronger version of \cref{def:stationary-point-as}.

\begin{defn}[$(\delta,\epsilon)$-$\ell_{p,q}$-Goldstein stationary point (Version 2)]\label{def:stationary-point-as-v2}
Given $\epsilon,\delta>0$, $1\le p,q\le\infty$, and an almost-everywhere differentiable function $f$, $\x$ is an $(\delta,\epsilon)$-$\ell_{p,q}$-Goldstein stationary point (Version 2) of $f$ if
\begin{align*}
    \norm{\nabla f(\x)}_{p,q}^{[\delta]_{\textnormal{Gs-v2}}}
    :=
    \inf_{S\subseteq B_q(\x,\delta),\,P\in\P(S),\,\E_{\y\sim P}\sqbrac{\y}=\x}
    \norm{\E_{\y\sim P}\sqbrac{\nabla f(\y)}}_p
    \le \epsilon.
\end{align*}
\end{defn}

Compared with \cref{def:stationary-point-as}, \cref{def:stationary-point-as-v2} relaxes the uniformly weighted finite set in \cref{def:stationary-point-as} to an arbitrary distribution $P\in\P(S)$. Therefore, \cref{def:stationary-point-as-v2} is a stronger notion in the sense that its defining infimum is taken over a larger class of feasible objects, and hence can only be smaller. 

The reason we use \cref{def:stationary-point-as}, rather than \cref{def:stationary-point-as-v2}, in \cref{alg:nonconvex-stosignsgd-uniform,thm:nonconvex-bound-unif} is that the analysis in \cref{sec:proof-goldstein} only constructs uniformly distributed random variables, namely distributions that are finite and uniformly weighted on their supports. Thus, the proof of \cref{thm:nonconvex-bound-unif} already suffices to establish the weaker stationarity notion in \cref{def:stationary-point-as}.

The purpose of introducing \cref{def:stationary-point-as-v2} is to bridge \cref{def:stationary-point} and \cref{def:stationary-point-as}. The next lemma shows that, under an additional Lipschitz assumption, \cref{def:stationary-point} reduces to a nearby instance of \cref{def:stationary-point-as-v2}.

\begin{lem}[Reduction of \cref{def:stationary-point} to \cref{def:stationary-point-as-v2}, adapted from Lemma~2.4 in~\citet{zhang2026random}]\label{lem:reduction}
Fix $1\le p,q,r\le\infty$, and let $r_*$ be the dual norm of $r$. Assume that $f$ is differentiable and $G$-Lipschitz with respect to $\norm{\cdot}_r$, i.e., $\abs{f(\x)-f(\y)}\le G\norm{\x-\y}_r$ for all $\x,\y\in\R^d$. If $\norm{\nabla f(\x)}_{p,q}^{[\lambda]}\le\epsilon$, then for every $\rho>0$, there exists a point $\xt\in\R^d$ such that
\begin{align*}
    \norm{\xt-\x}_q\le \rho
    \quad\text{and}\quad
    \norm{\nabla f(\xt)}_{p,q}^{[2\rho]_{\textnormal{Gs-v2}}}
    \le
    \epsilon\brac{1+\frac{2C_{p,r_*}G}{\lambda\rho^2}},
\end{align*}
where $C_{p,r_*}:=\sup_{\z\neq\0}\norm{\z}_p/\norm{\z}_{r_*}$.
\end{lem}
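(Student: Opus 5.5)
The plan is to follow the recipe of \citet[Lemma~2.4]{zhang2026random}: start from a near-optimal distribution in \cref{def:stationary-point}, localize it to a ball by a Markov-type truncation, and re-center it.

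\textbf{Step 1 (near-optimal distribution).} Since $\norm{\nabla f(\x)}_{p,q}^{[\lambda]}\le\epsilon$, for an arbitrarily small slack we can pick $P$ with $\E_{\y\sim P}[\y]=\x$ such that
\begin{align*}
\norm{\E_P[\nabla f(\y)]}_p\le\epsilon
\quad\text{and}\quad
\lambda\E_P\sqbrac{\norm{\y-\x}_q^2}\le\epsilon .
\end{align*}
The slack can be absorbed at the end, or handled by strict inequalities.

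\textbf{Step 2 (truncation and re-centering).} Let $A=\cbrac{\y:\norm{\y-\x}_q\le\rho}$. Markov's inequality gives
\begin{align*}
\Pr_P(A^c)\le\E\norm{\y-\x}_q^2/\rho^2\le\epsilon/(\lambda\rho^2)=:\alpha .
\end{align*}
Define $P_A$ as $P$ conditioned on $A$, and set $\xt:=\E_{P_A}[\y]$. Because $A$ is convex, $\xt\in A$, so $\norm{\xt-\x}_q\le\rho$.

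For any $\y\in A$, the triangle inequality gives $\norm{\y-\xt}_q\le2\rho$. Hence $P_A$ is supported in $B_q(\xt,2\rho)$ and has mean $\xt$. It is therefore feasible for $\norm{\nabla f(\xt)}_{p,q}^{[2\rho]_{\textnormal{Gs-v2}}}$, taking $S=\supp(P_A)$ or the ball itself. This is exactly why \cref{def:stationary-point-as-v2}, which allows arbitrary $P\in\P(S)$, is needed rather than the uniform-finite-set version.

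If $\alpha\ge1/2$, say, the claimed bound exceeds what a trivial estimate gives. Using $\norm{\nabla f}_{r_*}\le G$, we get $\norm{\nabla f(\xt)}_p\le C_{p,r_*}G\le 2\epsilon C_{p,r_*}G/(\lambda\rho^2)$ via the Dirac mass at $\xt$. So we may assume $\alpha<1/2$; in that case $P(A)>0$ and the conditioning is well defined.

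\textbf{Step 3 (gradient bound).} Write
\begin{align*}
\E_P[\nabla f]=P(A)\,\E_{P_A}[\nabla f]+P(A^c)\,\E_{P_{A^c}}[\nabla f].
\end{align*}
Rearranging gives
\begin{align*}
\norm{\E_{P_A}[\nabla f]}_p\le\frac{\epsilon+\alpha\,C_{p,r_*}G}{1-\alpha}.
\end{align*}
Here we use that $G$-Lipschitzness in $\norm{\cdot}_r$ for a differentiable $f$ gives $\norm{\nabla f(\y)}_{r_*}\le G$, hence $\norm{\nabla f(\y)}_p\le C_{p,r_*}G$.

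\textbf{Main obstacle: matching the constant.} The bound above yields $\epsilon(1+C_{p,r_*}G/(\lambda\rho^2))/(1-\alpha)$, and the factor $(1-\alpha)^{-1}$ must be absorbed into the stated $1+2C_{p,r_*}G/(\lambda\rho^2)$. I would split cases on $\alpha$ as above and use $(1-\alpha)^{-1}\le1+2\alpha$ for $\alpha\le1/2$. This gives roughly $\epsilon\brac{1+2\alpha+\alpha C_{p,r_*}G/\epsilon+\dots}$.

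If the cross terms do not fit cleanly, the alternative is to avoid the division. Instead, I would bound $\norm{\E_{P_A}\nabla f-\E_P\nabla f}_p$ directly by $2\alpha\,C_{p,r_*}G$, using that the total variation distance between $P_A$ and $P$ equals $P(A^c)$. This gives
\begin{align*}
\epsilon+2\alpha C_{p,r_*}G=\epsilon\brac{1+2C_{p,r_*}G/(\lambda\rho^2)},
\end{align*}
which is exactly the claimed bound; this direct route is likely the intended one.
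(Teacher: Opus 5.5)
Your proof is correct, but it localizes the near-optimal distribution with a different device than the paper. The paper never discards mass. It pushes $P$ forward under the radial clipping map $\y\mapsto\x+\min\{1,\rho/\norm{\y-\x}_q\}(\y-\x)$ and takes $\xt$ to be the mean of the clipped variable. Clipped and unclipped gradients differ only on $\{\norm{\y-\x}_q>\rho\}$, which costs at most $2C_{p,r_*}G\,\Pr(\norm{\y-\x}_q>\rho)$.

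You instead condition on the ball and use $\E_{P_A}[h]-\E_P[h]=P(A^c)\brac{\E_{P_A}[h]-\E_{P_{A^c}}[h]}$. This gives the identical penalty $2C_{p,r_*}G\,P(A^c)$ and, after the same Markov step, the same constant; the $2\rho$-radius triangle inequality is also shared.

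What each route buys:
\begin{itemize}
\item Clipping gives a case-free argument, since the new law always exists.
\item Conditioning needs $P(A)>0$, hence your degenerate Dirac-mass case. There you should take $\xt=\x$, because the conditional mean may be undefined. In exchange, its witness law is absolutely continuous with respect to $P$, so it only uses original points inside the ball.
\end{itemize}
You are right to drop the division-by-$(1-\alpha)$ route. With the split at $\alpha=1/2$ the cross terms do not fit; it would instead need a split on whether $\epsilon(1+2C_{p,r_*}G/(\lambda\rho^2))\le C_{p,r_*}G$.

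One point you share with the paper: $\xt$ depends on the slack, so you cannot literally let the slack vanish at the end. The repair is to keep the joint bound $a+b\le\epsilon+\varepsilon$, rather than bounding each term by $\epsilon$. Here $a=\norm{\E_P[\nabla f(\y)]}_p$ and $b=\lambda\E_P\norm{\y-\x}_q^2$. Set $\beta=2C_{p,r_*}G/(\lambda\rho^2)$. The final quantity is $a+\beta b\le\max\{1,\beta\}(\epsilon+\varepsilon)$, which is at most $(1+\beta)\epsilon$ once you fix $\varepsilon\le\epsilon\min\{1,\beta\}/\max\{1,\beta\}$. The case $G=0$ is trivial.
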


\begin{proof}
Fix $\varepsilon>0$. By $\norm{\nabla f(\x)}_{p,q}^{[\lambda]}\le\epsilon$, there exists $P\in\P(\R^d)$ with $\E_{\y\sim P}[\y]=\x$ such that
\begin{align}
    \norm{\E\sqbrac{\nabla f(\y)}}_p+\lambda\E\sqbrac{\norm{\y-\x}_q^2}\le \epsilon+\varepsilon. \label{eq:lipschitz-reduction-start}
\end{align}
Define
\begin{align*}
    \wt:=\x+\min\cbrac{1,\frac{\rho}{\norm{\y-\x}_q}}(\y-\x),
    \qquad
    \xt:=\E[\wt],
\end{align*}
and let $\widetilde P\in\P(\R^d)$ denote the distribution of $\wt$. Since $\norm{\wt-\x}_q\le \rho$ almost surely, Jensen's inequality gives $\norm{\xt-\x}_q\le \rho$. Also,
\begin{align*}
    \norm{\wt-\xt}_q\le \norm{\wt-\x}_q+\norm{\x-\xt}_q\le 2\rho,
\end{align*}
hence $\supp(\widetilde P)\subseteq B_q(\xt,2\rho)$ and $\E_{\y\sim\widetilde P}[\y]=\xt$.

Now let $A:=\cbrac{\norm{\y-\x}_q>\rho}$. On $A^c$ we have $\wt=\y$, so
\begin{align*}
    \norm{\E\sqbrac{\nabla f(\wt)}}_p
    &\le \norm{\E\sqbrac{\nabla f(\y)}}_p+\E\sqbrac{\norm{\nabla f(\wt)-\nabla f(\y)}_p\indicator{A}}.
\end{align*}
Since $f$ is $G$-Lipschitz with respect to $\norm{\cdot}_r$, we have $\norm{\nabla f(\z)}_{r_*}\le G$ almost everywhere, and therefore
\begin{align*}
    \norm{\nabla f(\wt)-\nabla f(\y)}_p
    \le \norm{\nabla f(\wt)}_p+\norm{\nabla f(\y)}_p
    \le 2C_{p,r_*}G
\end{align*}
almost everywhere. Combining this with \eqref{eq:lipschitz-reduction-start} and Markov's inequality,
\begin{align*}
    \norm{\E\sqbrac{\nabla f(\wt)}}_p
    &\le \epsilon+\varepsilon+2C_{p,r_*}G\Pr(A) \\
    &\le \epsilon+\varepsilon+2C_{p,r_*}G\frac{\E\sqbrac{\norm{\y-\x}_q^2}}{\rho^2} \\
    &\le (\epsilon+\varepsilon)\brac{1+\frac{2C_{p,r_*}G}{\lambda\rho^2}}.
\end{align*}
Since $\widetilde P$ is feasible in \cref{def:stationary-point-as-v2}, we obtain
\begin{align*}
    \norm{\nabla f(\xt)}_{p,q}^{[2\rho]_{\textnormal{Gs-v2}}}
    \le (\epsilon+\varepsilon)\brac{1+\frac{2C_{p,r_*}G}{\lambda\rho^2}}.
\end{align*}
Letting $\varepsilon\downarrow0$ completes the proof.
\end{proof}

As an immediate consequence, we obtain the following specialization in the geometry used throughout this paper, namely the $\ell_{1,\infty}$ setting. By choosing $\rho=\delta/2$ in \cref{lem:reduction}, the support radius $2\rho$ becomes exactly $\delta$, which matches the radius parameter in \cref{def:stationary-point-as-v2}.

\begin{cor}\label{cor:reduction-l1-linf-delta}
Assume that $f$ is differentiable and $G$-Lipschitz with respect to $\norm{\cdot}_\infty$. If $\norm{\nabla f(\x)}_{1,\infty}^{[\lambda]}\le\epsilon$, then for every $\delta>0$, there exists a point $\xt\in\R^d$ such that
\begin{align*}
    \norm{\xt-\x}_\infty\le \delta/2
    \quad\text{and}\quad
    \norm{\nabla f(\xt)}_{1,\infty}^{[\delta]_{\textnormal{Gs-v2}}}
    \le
    \epsilon\brac{1+\frac{8G}{\lambda\delta^2}}.
\end{align*}
\end{cor}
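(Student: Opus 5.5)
This corollary is a direct specialization of \cref{lem:reduction}, so I will instantiate the lemma rather than redo any probabilistic argument. I set $p=1$ and $q=\infty$. For the Lipschitz geometry I take $r=\infty$, which makes the dual exponent $r_*=1$. The hypothesis $\norm{\nabla f(\x)}_{1,\infty}^{[\lambda]}\le\epsilon$ is then exactly the premise of \cref{lem:reduction} with these exponents. The Lipschitz assumption with respect to $\norm{\cdot}_\infty$ is the lemma's assumption with $r=\infty$.

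Next I evaluate the constant $C_{p,r_*}=C_{1,1}=\sup_{\z\neq\0}\norm{\z}_1/\norm{\z}_1=1$. I then choose $\rho=\delta/2>0$, which is admissible because the lemma holds for every $\rho>0$. With this choice the lemma produces a point $\xt$ satisfying $\norm{\xt-\x}_\infty\le\rho=\delta/2$, together with the bound
\begin{align*}
\norm{\nabla f(\xt)}_{1,\infty}^{[2\rho]_{\textnormal{Gs-v2}}}\le\epsilon\brac{1+\frac{2G}{\lambda(\delta/2)^2}}=\epsilon\brac{1+\frac{8G}{\lambda\delta^2}}.
\end{align*}
Since $2\rho=\delta$, the left-hand side is exactly the Version 2 Goldstein measure at radius $\delta$ that the statement requires, and the right-hand side matches the claimed factor.

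The only step needing care is the dual-norm bookkeeping. I need that $G$-Lipschitzness in $\norm{\cdot}_\infty$ gives $\norm{\nabla f}_1\le G$, which the proof of \cref{lem:reduction} already uses in the form $\norm{\nabla f}_{r_*}\le G$. Beyond that, I must confirm that no hidden norm-equivalence factor enters when $p=r_*=1$; since $C_{1,1}=1$, none does. Everything else is arithmetic, so I expect no real obstacle.
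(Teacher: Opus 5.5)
Your proposal is correct and is the same argument as the paper's proof. You instantiate \cref{lem:reduction} with $p=1$, $q=r=\infty$ (so $r_*=1$ and $C_{1,1}=1$) and set $\rho=\delta/2$, which gives $2\rho=\delta$ and $2G/(\lambda\rho^2)=8G/(\lambda\delta^2)$.
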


\begin{proof}
Apply \cref{lem:reduction} with $p=1$, $q=\infty$, and $r=\infty$. Then $r_*=1$ and $C_{1,1}=1$. Setting $\rho=\delta/2$ completes the proof.
\end{proof}

\end{document}